\newtheorem{theorem}{Theorem}
 \newtheorem{definition}[theorem]{Definition}
 \newtheorem{lemma}[theorem]{Lemma}
 \newtheorem{claim}[theorem]{Claim}
 \numberwithin{theorem}{section}
\newcommand{\ones}[1]{|#1|_1}
\renewcommand{\Pr}{P}
\newcommand{\hv}{\ensuremath{\mathrm{HV}}\xspace} 
\newcommand{\OMM}{\textsc{OMM}\xspace}
\newcommand{\COCZ}{\textsc{COCZ}\xspace}
\newcommand{\LOTZ}{\textsc{LOTZ}\xspace}
\newcommand{\mLOTZ}{\ensuremath{m}\text{-}\textsc{LOTZ}\xspace}
\newcommand{\mCOCZ}{\ensuremath{m}\text{-}\textsc{COCZ}\xspace}
\newcommand*{\E}{\mathrm{E}}
\DeclareMathOperator{\Unif}{Unif}                         % uniform distribution
\newcommand{\LO}{\textsc{LO}\xspace}                      % # of leading zeroes
\newcommand{\TZ}{\textsc{TZ}\xspace}
\newcommand{\LOTZFULL}{\textsc{LeadingOnesTrailingZeroes}\xspace}
\newcommand{\mLOTZFULL}{\ensuremath{m}\text{-}\textsc{LeadingOnesTrailingZeroes}\xspace}
\newcommand{\OMMFULL}{\textsc{OneMinMax}\xspace}
\newcommand{\COCZFULL}{\textsc{CountingOnesCountingZeros}\xspace}
\newcommand{\OJZJk}{$\ensuremath{m}\text{-}\textsc{OJZJ}_k\xspace$}
\newcommand{\ojzjk}{$\textsc{OJZJ}_k\xspace$}
\newcommand{\mplEA}[2]{$(#1+#2)$-EA\xspace} % (mu + lambda) EA
\newcommand{\opoEA}[1]{\mplEA{1}{1}} % (1+1) EA
\newcommand{\PAES}{PAES-25\xspace}
\newenvironment{proofofclaim}{\textsc{Proof of Claim.}}{\hfill\scriptsize\scalebox{1}{$\blacksquare$}}
\newcommand{\hide}[1]{}
\definecolor{todocolor}{rgb}{0.9,0.1,0.1}
\definecolor{changedcolor}{rgb}{0.42,0.27,0.57}
\definecolor{addedcolor}{rgb}{0.867,0.176,0.361}
\author{
  Andre Opris\\
  University of Passau\\
  Passau, Germany
  }
\title{A First Runtime Analysis of the PAES-25: An Enhanced Variant of the Pareto Archived Evolution Strategy}
\begin{document}

\maketitle

\begin{abstract}
%Unlike many established EMO algorithms such as (G)SEMO, NSGAs, SMS-EMOA or SPEA-2, \PAES operates by mutating a single \emph{current solution} in each iteration and maintains an archive of non-dominated solutions seen so far, potentially bounded by a size limit $L$. When the archive is full, an underlying \emph{archiver} (e.g. the \emph{Adaptive Grid Archiver} (AGA), \emph{Hypervolume Archiver} (HVA) or \emph{Multi-Level Grid Archiver} (MGA)) decides to add the new created solution to $A$ to guarentee an edaquate spread of solutions in the search space.
This paper presents a first mathematical runtime analysis of \PAES, an enhanced version of the original Pareto Archived Evolution Strategy (PAES) coming from the study of telecommunication problems over two decades ago to understand the dynamics of local search of MOEAs on many-objective fitness landscapes. We derive tight expected runtime bounds of \PAES with one-bit mutation on \mLOTZFULL until the entire Pareto front is found: $\Theta(n^3)$ iterations if $m=2$, $\Theta(n^3 \log^2(n))$ iterations if $m=4$ and $\Theta(n \cdot (2n/m)^{m/2} \log(n/m))$ iterations if $m>4$ where $n$ is the problem size and $m$ the number of objectives. To the best of our knowledge, these are the first known tight runtime bounds for an MOEA outperforming the best known upper bound of $O(n^{m+1})$ for (G)SEMO on \mLOTZ when $m \geq 4$. We also show that \emph{archivers}, such as the \emph{Adaptive Grid Archiver} (AGA), \emph{Hypervolume Archiver} (HVA) or \emph{Multi-Level Grid Archiver} (MGA), help to distribute the set of solutions across the Pareto front of \mLOTZ efficiently. We also show that \PAES with standard bit mutation optimizes the bi-objective \LOTZFULL benchmark in expected $O(n^4)$ iterations, and we discuss its limitations on other benchmarks such as \OMMFULL or \COCZFULL.

\end{abstract}
%%
%% The code below is generated by the tool at http://dl.acm.org/ccs.cfm.
%% Please copy and paste the code instead of the example below.
%%

%%
%% Keywords. The author(s) should pick words that accurately describe
%% the work being presented. Separate the keywords with commas.

\maketitle

\section{Introduction}
\label{sec:introduction}

In many real-world scenarios, optimization involves handling multiple objectives that often conflict with each other. A common approach to tackle such multi-objective optimization problems is to generate a diverse set of Pareto-optimal solutions. These are solutions where no objective can be improved without degrading at least one other. A decision maker can then select the most suitable trade-off based on their preferences. Owing to their population-based nature, multi-objective evolutionary algorithms (MOEAs) have become essential tools and are widely applied across various domains in practice~\cite{kdeb01,coello2013evolutionary,9515233,LUUKKONEN2023102537}. 

The mathematical runtime analysis of MOEAs began roughly two decades ago~\cite{LaumannsTZWD02,Giel03,Thierens03}. In recent years, this field has experienced significant growth, with studies focusing on well-known algorithms such as (G)SEMO, NSGA-II, NSGA-III, SMS-EMOA, and SPEA2~\cite{Laumanns2004,DoerrNearTight,UpBian,RenBLQ24,Opris2025,OprisNSGAIII}. Additionally, research has explored how MOEAs can be applied to combinatorial optimization problems like minimum spanning trees~\cite{DBLP:conf/ijcai/CerfDHKW23}, subset selection problems~\cite{MOEASubset} or submodular optimization~\cite{QianYTYZ19}. A useful dichotometry of how to analyze such algorithms theoretically can be found in~\cite{Dang-Levels}.

Another notable MOEA is the \emph{Pareto Archived Evolution Strategy} (PAES) algorithm, originally proposed by~\citet{781913} coming from the study of telecommunication problems. PAES is a (1+1) evolutionary strategy that performs mutation on a single solution, referred to as the \emph{current solution}. To identify non-dominated solutions, the algorithm maintains an archive of previously found non-dominated solutions, enabling it to obtain many different trade-offs. Its size can be limited by a threshold $L$ smaller than the Pareto front. When the archive is full, so-called \emph{archivers} decide whether to accept mutated solutions as the new current solution and which solution to be removed in order to preserve diversity. As this algorithm maintains and mutates a single current solution instead of selecting a solution from a population, it is very simple and particularly well-suited for solving problems relying on multi-objective local search. For instance, in~\cite{PAESFIRST} it is proven that this algorithm using one-bit mutation beats SEMO on a problem called \textsc{Path} by a factor of $n$. A further key advantage of PAES is its potential to scale better to many-objective problems compared to traditional MOEAs. This is primarily because it does not rely on selective pressure, which tends to deteriorate as the number of objectives increases and the Pareto front grows exponentially~\cite{OprisNSGAIII,Zheng_Doerr_2024,Opris2025,DoerrNearTight}. Due to its strong empirical performance across a variety of benchmark problems~\cite{781913}, PAES has gained significant traction (\cite{781913} and~\cite{Knowles2000} together have around~5000 citations). 

However, PAES discards a mutated current solution if the archive already contains such a solution with the same fitness, which leads to some limitations. In~\cite{PAESFIRST} it is rigorously shown that it fails with overwhelming probability to find the entire Pareto front of the \LOTZ benchmark, an easy pseudo-Boolean function. Additionally, it is shown in~\cite{781913} that it performs empirically poorly on more complex pseudo-Boolean functions like LITZ, sLITZ and FRITZ, alternative multi-objective generalizations of the \LOTZ benchmark. To address these shortcomings, \citet{Knowles2025} proposed an enhanced version of PAES, called \PAES. This updated algorithm has slightly different acceptance criteria. It adds solutions to the archive and accepts mutated current solutions as the new current solution, even if one with the same fitness already exists in the archive which is then  replaced. This enables progress on fitness plateaus. Empirical results by~\citet{Knowles2025} suggest that \PAES not only progresses more quickly towards the Pareto front but also discovers a significantly larger fraction of the front on benchmarks such as \LOTZ, LITZ, sLITZ and FRITZ compared to the standard PAES.

Despite these practical results, a rigorous theoretical understanding of PAES remains limited. Apart from~\cite{PAESFIRST}, no formal runtime analysis existed for either the original PAES or its updated variant \PAES. Consequently, it is still unclear under which conditions \PAES performs well, and what structural features of problems contribute to its success. Addressing these open questions could offer valuable guidance for designing more efficient variants of \PAES and advancing its applicability in practice.

\textbf{Our contribution:} In this paper we make a significant step towards such an understanding and provide the first runtime analysis of \PAES on different benchmark problems. Our three main contributions are detailed as follows.

Firstly, we analyze \PAES with one bit mutation (i.e. a single bit chosen uniformly at random is flipped) on \mLOTZ. We show that if the archive size is sufficiently large, at least equal to the size of the Pareto front and possibly larger depending on the archiving strategy, then the algorithm efficiently discovers all Pareto optimal search points. Specifically, the number of iterations required is $\Theta(n^3)$ for $m = 2$, $ \Theta(n^3 \log^2 n) $ for $m = 4$, and $ \Theta(n \cdot (2n/m)^{m/2} \log(n/m))$ for $m > 4$. The key insight is that the current solution reaches the Pareto front easily and then performs a random walk on the Pareto front which can be described by a suitable random walk on a grid graph, the product graph of the single path graph. All Pareto-optimal solutions are found if all nodes in the grid graph are visited, and estimating the cover time of product graphs is a well-known task (see for example~\cite{JONASSON2000181}). To the best of our knowledge, these are the first known tight runtime bounds in the many-objective setting and surprisingly, if $ m \geq 4$, \PAES achieves a strictly better asymptotic runtime than the best known upper bound of $O(n^{m+1})$ established for SEMO on \mLOTZ. We also show that if the size of the archive is smaller than the size of the Pareto front, the underlying archiver spreads the solutions in the archive well across the Pareto front in many scenarios where many of them are restricted to \LOTZ, the bi-objective variant of \mLOTZ.

Secondly, we investigate \PAES with standard bit mutation and show that if the size of the archive is at least $n+1$, the archive contains every Pareto-optimal search point of \LOTZ after $O(n^4)$ iterations in expectation. This is another novel approach, since, as far as we know, the only rigorously proven runtime bounds of PAES or variants relies on one-bit mutation.

Thirdly, we want to demonstrate also some limitations of \PAES and rigorously prove that, even if the archive size is $\Omega(n)$, the algorithm can only find $o(n)$ different Pareto-optimal solutions of \OMM and \COCZ after $2^{\Omega(n)}$ iterations. To achieve this, we will use the negative drift theorem from~\cite{Lengler2020Drift}.

\textbf{Related work:} The mathematical runtime analysis of MOEAs began around 20 years ago. Initial work focused on simple algorithms like the simple evolutionary multi-objective optimizer (SEMO)~\cite{LaumannsTZWD02} or the global SEMO (GSEMO)~\cite{Giel03}. Since then, the field has steadily advanced. The foundational runtime results for (G)SEMO were presented in~\cite{LaumannsTZ04}, the journal version of~\cite{LaumannsTZWD02}, analyzing pseudo-Boolean functions like \mLOTZ and \mCOCZ. There it was shown that (G)SEMO can find the entire Pareto front of \mCOCZ (the $m$-objective analogues of the classic \COCZ benchmark) and \mLOTZ in an expected number of $O(n^{m+1})$ fitness evaluations, for an even number of objectives. The upper bound for \LOTZ naturally extends to more than two objectives while the bi-objective \COCZ guarantees a runtime of $O(n^2 \log(n))$. Subsequent work by~\citet{BianQT18ijcaigeneral} improved the results for \mCOCZ to $O(n^m)$ for $m > 4$ and to $O(n^3\log n)$ in the special case $m = 4$. These results were further refined by~\cite{DoerrNearTight}, who showed that for \mCOCZ and \mLOTZ the runtime depends only linearly in the size of the largest set of incomparable solutions, aside from small polynomial factors in the number $m$ of objectives and $n$. That work also analyzed (G)SEMO on the $m$-\OMM and \OJZJk~benchmarks (the $m$-objective variants of the bi-objective \OMM and \ojzjk). Most recently,~\citet{Doerr_Krejca_Rudolph_2025} presented the first runtime analysis of (G)SEMO on integer-valued search spaces and there are even lower runtime bounds of GSEMO on \COCZ,  \OMM and \ojzjk~\cite{doerr2025tightruntimeguaranteesunderstanding}. 

These insights motivated the rigorous analysis of more sophisticated MOEAs that are commonly used in practice. The most prominent among them is NSGA-II~\cite{DebPAM02} which has received over~50000 citations. The first rigorous runtime analysis of this algorithm was conducted by~\citet{ZhengLuiDoerrAAAI22} on classical benchmark problems. This was followed by a series of results investigating its performance on a multimodal problem~\cite{Qu2022PPSN}, approximation qualities in covering the Pareto front~\citep{DoerrApprox}, robustness to noisy environments~\cite{DaOp2023}, lower bounds~\cite{DoerrQu2023a}, the usefulness of crossover~\citep{Dang2024,DoerrQu2022}, trap functions~\cite{DangEfficient2024} and stochastic population update~\cite{UpBian}. NSGA-II has also been studied in the context of combinatorial optimization problems, including the minimum spanning tree~\cite{DBLP:conf/ijcai/CerfDHKW23} and the subset selection problem~\cite{MOEASubset}. Despite these successes, NSGA-II's performance is very limited to the bi-objective setting as shown in~\cite{Zheng2023Inefficiency}. This limitations has recently led to the runtime analysis of other prominent MOEAs designed for many-objective, such as NSGA-III~\cite{WiethegerD23,OprisNSGAIII,Opris2025,opris2025multimodal}, SMS-EMOA~\cite{Zheng_Doerr_2024}, SPEA-2~\cite{RenBLQ24} and a modified version of NSGA-II with a simple tie-breaking rule~\cite{Krejca2025b}. All of these algorithms are population based and, in each iteration, select parents for mutation uniformly at random in contrast to PAES and \PAES. An additional property of \PAES is that it uses an archive to store non-dominated solutions. The study of the effects of such archives in multiobjective optimization was initiated only recently by \citet{ArchiveMOEAs}. They introduced an unbounded archive to drastically reduce the required population size of MOEAs for efficiently optimizing classical benchmark problems, resulting in significant speedups. Their analysis leverages crossover at boundary points to effectively cover the entire Pareto front. 

However, for the archive-based algorithm PAES, to the best of our knowledge only one rigorous runtime analysis is known~\cite{PAESFIRST}. In that work, the pseudo-Boolean function \textsc{Path} is introduced, where PAES with one-bit mutation has an expected runtime bound of $\Theta(n^2)$ which outperforms SEMO by a factor of $n$. However, it is also shown that PAES fails to find the complete Pareto front of the \LOTZ benchmark with overwhelming probability. We are not aware of any other rigorous runtime results for the standard PAES or its variant \PAES. In this paper we show that \PAES is, in fact, able to find the Pareto front of \LOTZ very efficiently. 

\section{Preliminaries}

\emph{Notation:} For a finite set $A$ we denote by $\vert{A}\vert$ its cardinality. %For two random variables $X$ and $Y$ on $\mathbb{N}_0$ we say that $Y$ stochastically dominates $X$ if $P(Y \leq c) \leq P(X \leq c)$ for every $c \geq 0$.
Let $\log$ be the logarithm to base $e$ and $[k]:=\{1, \ldots , k\}$ for $k \in \mathbb{N}$. The number of ones in a bit string $x$ is denoted by $\ones{x}$. The number of leading ones in $x$, denoted by $\LO(x)$, is the length of the longest prefix of $x$ which contains only ones, and the number of trailing zeros in $x$, denoted by $\TZ(x)$, the length of the longest suffix of $x$ containing only zeros respectively. For example, if $x=1110101100000$, then $\LO(x)=3$ and $\TZ(x)=5$. %The \emph{Hamming distance} between two search points $x$ and $y$ is $H(x,y):=\sum_{i=1}^n |x_i-y_i|$ and the Hamming distance between $x$ and a set $A \subset \{0,1\}^n$ is $H(x,A):=\min_{y \in A} H(x,y)$. 
This paper is about many-objective optimisation, particularly the maximisation of a discrete $m$-objective function $f(x):=(f_1(x), \ldots , f_m(x))$ where $f_i:\{0,1\}^n \to \mathbb{N}_0$ for each $i \in [m]$. When $m=2$, the function is also called \emph{bi-objective}. Denote by $f_{\max}$ the maximum possible value of $f$ in one objective, i.e. $f_{\max}:=\max\{f_i(x) \mid x \in \{0,1\}^n, i \in [m]\}$. %For $N \subseteq \{0,1\}^n$ let $f(N):=\{f(x) \mid x \in N\}$.  
Given two search points $x, y \in \{0, 1\}^n$, $x$ \emph{weakly dominates} $y$, denoted by $x \succeq y$,
if $f_{i}(x) \geq f_{i}(y)$ for all $i \in [m]$, and $x$ \emph{(strictly) dominates} $y$, denoted by $x \succ y$, if one inequality is strict; if neither $x \succeq y$ nor $y \succeq x$ then $x$ and $y$ are \emph{incomparable}. A set $S \subseteq \{0,1\}^n$ is a \emph{set of mutually incomparable solutions} with respect to $f$ if all search points in $S$ are incomparable. Each solution not dominated by any other in $\{0, 1\}^n$ is called \emph{Pareto-optimal}. A mutually incomparable set of these solutions that covers all possible non-dominated fitness values is called a \emph{Pareto(-optimal) set} of $f$. %We also say that a vector $v \in \mathbb{N}_0^m$ \emph{weakly dominates} a vector $w \in \mathbb{N}_0$ if $w_i \leq v_i$ for each $i \in [m]$. If additionally $w_i < v_i$ for at least one $i \in [m]$, we say that $v$ \emph{(strictly) dominates} $w$. 

We also need the following Theorem from~\cite{theorystochasticdrift} to describe the hitting time of unbiased random walks on the line, with one barrier. We will use this result for describing the dynamics of \PAES with standard bit mutation on \LOTZ below.

\begin{theorem}[Theorem 4.2 in~\cite{theorystochasticdrift}]
\label{thm:unbiased-random-walk}
    Let $n \in \mathbb{N}$ and let $(X_t)_{t \in \mathbb{N}}$ be an integrable random process over $\{0, \ldots , n\}$, and let $T:=\inf\{t \in \mathbb{N} \mid X_t = n\}$. Suppose that there is a $\delta \in \mathbb{R}_+$ such that, for all $t<T$, we have the following conditions (variance, drift).
    \begin{itemize}
        \item[(1)]
        $\text{Var}(X_{t+1}-X_t \mid X_0, \ldots , X_t) \geq \delta$.
        \item[(2)]
        $\E(X_{t+1}-X_t \mid X_0, \ldots , X_t) \geq 0$.
    \end{itemize}
Then $E(T) \leq n^2/\delta$.
\end{theorem}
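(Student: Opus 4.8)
The plan is to run a drift argument based not on the ``distance-to-target'' potential $(n-X_t)^2$ --- which does not work here, since without an assumption of bounded increments a merely nonnegative drift cannot pull $X_t$ all the way up to $n$ --- but on the uncentred squared potential $X_t^2$ corrected by a linear term, exploiting instead that the whole state space $\{0,\dots,n\}$ is bounded.

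Concretely, writing $D_t := X_{t+1}-X_t$ and $\mathcal{F}_t := \sigma(X_0,\dots,X_t)$, I would first compute the one-step change of $X_t^2$. Since $X_t$ is $\mathcal{F}_t$-measurable,
\begin{equation*}
  \E[X_{t+1}^2 - X_t^2 \mid \mathcal{F}_t] = 2 X_t\, \E[D_t \mid \mathcal{F}_t] + \E[D_t^2 \mid \mathcal{F}_t].
\end{equation*}
For $t<T$ both summands are under control: the first is nonnegative because $X_t\ge 0$ and $\E[D_t\mid\mathcal{F}_t]\ge 0$ by hypothesis~(2), and the second satisfies $\E[D_t^2\mid\mathcal{F}_t]=\text{Var}(D_t\mid\mathcal{F}_t)+\E[D_t\mid\mathcal{F}_t]^2\ge\text{Var}(D_t\mid\mathcal{F}_t)\ge\delta$ by hypothesis~(1). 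Hence $\E[X_{t+1}^2-X_t^2\mid\mathcal{F}_t]\ge\delta$ on the event $\{t<T\}$, which lies in $\mathcal{F}_t$ because $T$ is a stopping time.

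Next I would turn this into a submartingale and use boundedness to finish. Set $Z_t := X_{t\wedge T}^2 - \delta\,(t\wedge T)$; it is integrable because $X_t$ is bounded, on $\{t\ge T\}$ it satisfies $Z_{t+1}=Z_t$, and on $\{t<T\}$ the previous estimate gives $\E[Z_{t+1}\mid\mathcal{F}_t]\ge Z_t$, so $(Z_t)_t$ is a submartingale. Therefore $\E[Z_t]\ge\E[Z_0]=\E[X_0^2]\ge 0$, that is, $\delta\,\E[t\wedge T]\le\E[X_{t\wedge T}^2]\le n^2$, the last step using $X_{t\wedge T}\in\{0,\dots,n\}$. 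So $\E[t\wedge T]\le n^2/\delta$ for every $t$; letting $t\to\infty$ and applying monotone convergence along $t\wedge T\uparrow T$ yields $\E[T]\le n^2/\delta$ (in particular $T<\infty$ almost surely), and in fact the sharper bound $\E[T]\le(n^2-\E[X_0^2])/\delta$.

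The individual computations are entirely routine; the one genuine difficulty is conceptual, namely realizing that the potential should be the uncentred square $X_t^2$ capped via $X_t\le n$, rather than a potential centred at the target $n$ (which would require an actual lower bound on the drift, not merely nonnegativity, once increments are unbounded), and that hypothesis~(2) is used twice --- once to discard the cross term $2X_t\E[D_t\mid\mathcal{F}_t]$ and once, through $\E[D_t^2\mid\mathcal{F}_t]\ge\text{Var}(D_t\mid\mathcal{F}_t)$, to make the variance lower bound directly usable. The remaining points --- that stopping preserves the submartingale property and that monotone convergence applies --- are standard and I would only mention them in passing.
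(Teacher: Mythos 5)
Your proof is correct. Note, however, that the paper does not prove this statement at all---it is imported verbatim as Theorem~4.2 of the cited reference---so there is no in-paper argument to compare against; your route via the submartingale $X_{t\wedge T}^2-\delta\,(t\wedge T)$, the bound $\E[X_{t\wedge T}^2]\le n^2$, and monotone convergence along $t\wedge T\uparrow T$ is the standard proof of this result and is sound. One cosmetic remark: the inequality $\E[D_t^2\mid\mathcal{F}_t]\ge\mathrm{Var}(D_t\mid\mathcal{F}_t)$ holds unconditionally, so hypothesis~(2) is in fact used only once, namely to discard the cross term $2X_t\,\E[D_t\mid\mathcal{F}_t]$.
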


\emph{Algorithm:} The \PAES~\cite{Knowles2025}, a modification of PAES~\cite{781913,Knowles2000} is displayed in Algorithm~\ref{alg:PAES25}. Initially, a search point $s$ from $\{0,1\}^n$ is sampled uniformly at random, and the archive $A_0$ is set to $\{s\}$. Then, as long as the termination criterion is not met, a copy $s'$ of $s$ is created and mutation is applied on $s'$ (i.e. each bit is flipped uniformly at random with probability $1/n$ in case of standard bit mutation and one bit is flipped chosen uniformly at random in case of one-bit mutation) to generate a new solution $c$. The solution $s$ is referred to as the \emph{current solution}, and $c$ as the \emph{candidate solution}. Then, if the candidate solution $c$ weakly dominates a point in the archive $A_t$, all points weakly dominated by $c$ are removed from $A_t$ and $c$ is added to $A_t$. This stands in strict contrast to the standard PAES which adds $c$ only to the archive and accepts it as a new current solution if it is neither weakly dominated by $s$ nor by any point in the archive. The current solution for the next iteration then becomes $c$. Conversely, if $c$ is dominated by a member of the archive, it is discarded. If $c$ is not weakly dominated by any search point $x$ in the archive $A_t$ (that is, it is incomparable to all search points in $A_t$), it is added to $A_t$ if the archive size is smaller than $L$. Otherwise, if the archive is full, an \emph{archiver} is responsible for deciding whether $c$ will be accepted into $A_t$ and becomes the new current solution. The archiver's primary role is to maintain \emph{diversity} in the archive~\cite{Knowles2025}. If the archiver rejects $c$, the archive $A_t$ remains unchanged, and $c$ is discarded. Otherwise, $c$ is added to the archive, becomes the new current solution $s$ for the next iteration, and another point $y \in A_t$, selected by the archiver, is removed from the archive. Note that the archive always contains pairwise incomparable search points and if $A_t=P$ for a Pareto-optimal set in one iteration, then $A_t=P$ for all future iterations. In this paper we examine three archivers which are also discussed in~\citet{Knowles2025}, but with minor modifications to enable progress on fitness plateaus even when the archive is full. Other archivers, such as \emph{crowding distance} archiver (NSGA-II archiver), the SPEA2 archiver or the $\varepsilon$-dominance archiver, exhibit behavior that closely resembles that of one of the three considered in this paper (see for example~\cite{Archiver2011}).

\begin{algorithm2e}[t]
\caption{\PAES with archive size $L$ and archiver $X$ for maximizing $f:\{0,1\}^n \to \mathbb{N}_0^m$}\label{alg:PAES25}
Initialize $s \sim \Unif(\{0,1\}^n)$\;
Set $t=0$ and create set $A_0=\{s\}$ as an \emph{archive}\;
\While{termination criterion not met}{
    Create $s'$ as a copy of $s$\;
    Create $c$ by mutation on $s'$\;
    \uIf{$c$ weakly dominates any element in $A_t$}
    {
        Remove all $x \in A_{t+1}$ weakly dominated by $c$ \;
        $A_{t+1}=A_t \cup \{c\}$, $s=c$}
    \uElseIf{there is $x \in A_t$ dominating $c$}{
        $A_{t+1}=A_t$\;
    }
    \Else{
        \If{$|A_t| < L$}{
        $A_{t+1}=A_t \cup \{c\}$, $s=c$\;
        }
        \Else{
        \If{$c$ is accepted by the archiver $X$~\label{line:algo-15}}
        {
        Choose $y \in A_t$ according to $X$~\label{line:algo-16}\;
        $A_{t+1}=(A_t \setminus \{y\}) \cup \{c\}$, $s=c$
        }
    }
}
$t=t+1$ 
}
\DontPrintSemicolon
\end{algorithm2e}     

\textbf{Adaptive Grid Archiver~(AGA)}~\cite{Knowles2000}: It covers the search space with a \emph{grid} and associates solutions from the archive to \emph{cells} as follows. Let the values $a \geq f_{\max}$ and $\ell \in \mathbb{N}$ be given. By bisecting the interval $[0,a]$ of each objective axis $\ell$ times into disjoint intervals $[0,a/2^{\ell}), \ldots , [(2^{\ell}-2)a/2^{\ell},(2^{\ell}-1)a/2^{\ell}), [(2^{\ell}-1)a/2^{\ell},a]$, the hypercube $[0,a]^m$ is divided into $2^{\ell \cdot m}$ cells. The location of an objective vector $v=(v_1, \ldots , v_m) \in [0,a]^m$ in the objective space is then described by a binary string of length $2^{\ell \cdot m}$, where the decimal value $0 \leq k<2^\ell-1$ of the $j$-th block of $2^\ell$ elements from the left indicates the interval $I=[ka/2^\ell,(k+1)a/2^\ell) \subset [0,a]$ such that $v_j \in I$. If $k=2^\ell-1$ then $I=[ka/2^\ell,(k+1)a/2^\ell]$. Two solutions $x,y$ belong to the same cell if their corresponding objective values $f(x)$ and $f(y)$ are represented by the same binary string of length $2^{\ell \cdot m}$. The entire grid is formed by these cells. %A map of the grid is maintained to track the number of solutions in each cell and to identify which solutions are assigned to each cell. 
This archiver always accepts the candidate solution $c$ in Line~\ref{line:algo-15} in Algorithm~\ref{alg:PAES25} (i.e. if $c$ is incomparable to all points $x \in A_t$ and $A_t$ is full). Then, a solution distinct from $c$ from a grid cell containing the most search points is selected uniformly at random and removed from $A_t$ (in Line~\ref{line:algo-15} in Algorithm~\ref{alg:PAES25}). Ties with respect to such cells are broken uniformly at random.

\textbf{Hypervolume Archiver~(HVA)}~\cite{10.1109/TEVC.2003.810755, KnowlesPhD}: For a set $P \subset \{0,1\}^n$, the \emph{hypervolume} with respect to a fitness function $f\colon \{0,1\}^n \rightarrow \mathbb{R}^m$ and a reference point $h \in \mathbb{R}^m$ (preset by the user at the beginning) is defined as
\begin{align*}
\hv(P) \coloneqq \mathcal{L}\left(\bigcup_{x\in P} \left\{v\in\mathbb{R}^m\mid \text{for all } i \in [m]: h_i \leq v_i \wedge v_i \leq f_i(x)\right\}\right)
\end{align*}
where $\mathcal{L}$ denotes the Lebesgue measure. The \emph{hypervolume contribution} of a point $x \in A_t \cup \{c\}$ is the value $\hv(A_t \cup \{c\}) - \hv((A_t \cup \{c\}) \setminus \{x\})$ (see for example~\cite{AUGER201275,ZitzlerPhD} for theoretical investigations of the hypervolume or~\cite{HYPERALGO} for an efficient algorithm to compute minimal hypervolume contributions for subsets of $\{0,1\}^n$). If $c$ contributes smallest to $\hv(A_t \cup \{c\})$ as the only point from $A_t \cup \{c\}$, it is not accepted. Otherwise, it is added to $A_t$, and a different point with the smallest hypervolume contribution to $\hv(A_t \cup \{c\})$ is removed from $A_t$ where ties are broken randomly.

\textbf{Multi-Level Grid Archiver~(MGA)}~\cite{LAUMANNS2011414, LAUMANNS2011415}: 
The multi-level grid archiver (MGA) can be considered as a hybrid approach, combining elements from both the adaptive grid archiver (AGA) and the $\varepsilon$-Pareto archiver (see, for example,~\cite{Archiver2011} for the latter). %Experimental results in that work demonstrate that adaptively selecting an appropriate level of grid coarseness for comparisons helps prevent the use of excessively large $\varepsilon$ values when evaluating different solutions within the archiver. 
The MGA uses a hierarchical family of \emph{boxes} with respect to a \emph{coarseness level $b \geq 0$} over the objective space. For an objective vector $v:=(v_1,\ldots , v_m)$ we define the \emph{box index vector} at coarseness level $b$ as $v^{\text{box}}:=(v_1^{\text{box}}, \ldots , v_n^{\text{box}})$ where $v_i^{\text{box}}:=\lfloor{v_i \cdot 2^{-b}}\rfloor$. Then, when a candidate solution $c$ is created, the smallest level $b$ is determined such that two box index vectors of solutions from $A_t \cup \{c\}$ are comparable. The candidate solution $c$ is rejected if its corresponding box index vector is the only one that is weakly dominated by another element from $A_t$ at this level $b$. Otherwise, $c$ is added to $A_t$, and an arbitrary solution $x$ from $A_t$ distinct from $c$ that is weakly dominated by another element from $(A_t \setminus \{x\}) \cup \{c\}$ at level $b$ is removed.

In all three schemes, in case of a tie between the candidate solution $c$ and another solution $z \in A_t$ in the archiver (e.g., when they have the same smallest hypervolume contribution in case of HVA), the candidate solution $c$ is always accepted becoming the new current solution. This approach reflects a common strategy and is useful for exploring plateaus. It is a minor difference to some versions displayed in~\cite{10.1109/TEVC.2003.810755}, where a solution $x$ is not necessarily added to $A_t$ if there is already a point in $A_t$ with the same fitness as $x$. In such a case the candidate solution is rejected.
%\andre{Should we relax this strictness and sometimes allow to remove also the current solution?}

A major difference to the classical MOEAs studied before where a parent is chosen according to some probability distribution from a \emph{population} is that only the current solution is selected for mutation and this solution is either the current solution of the previous iteration if the candidate solution was rejected, or the solution which is added to the archive becoming the new current solution if the candidate solution was accepted in the previous iteration. 

\section{Runtime Analysis of \PAES with One-Bit Mutation on \mLOTZ}

We recall the $\mLOTZFULL$ benchmark ($\mLOTZ$ for short) for many objectives. In $\mLOTZ(x)$, we divide $x$ in $m/2$ many blocks $x^j$ of equal length such that $x=(x^1, \ldots , x^{m/2})$ and in each block we count the $\LO$-value and the $\TZ$-value. This can be formalized as follows.
\begin{definition}[\citet{Laumanns2004}]
\label{def:mLOTZ}
Let $m$ be divisible by $2$ and let the problem size be a multiple of $m/2$. Then the $m$-objective function \mLOTZ is defined by
$\mLOTZ: \{0,1\}^n \to \mathbb{N}_0^m$ as 
\[
\mLOTZ(x) = (f_1(x), f_2(x), \ldots ,f_m(x))
\]
with 
\[
f_k(x)=
\begin{cases}
    \sum_{i=1}^{2n/m} \prod_{j=1}^i x_{j+n(k-1)/m}, & \text{ if $k$ is odd,} \\
    \sum_{i=1}^{2n/m} \prod_{j=i}^{2n/m} (1-x_{j+n(k-2)/m}), & \text{ else,}
\end{cases}
\]
for all $x=(x_1, \ldots ,x_n) \in \{0,1\}^n$.
\end{definition}

A Pareto-optimal set of \mLOTZ is
\[
\{1^{i_1}0^{2n/m-i_1} \ldots 1^{i_{m/2}}0^{2n/m-i_{m/2}} \mid i_1, \ldots , i_{m/2} \in \{0, \ldots , 2n/m\}\}
\]
which we denote by $P$ which coincides with its set of Pareto-optimal search points. The cardinality of this set is $(2n/m+1)^{m/2}$. In~\cite{OprisNSGAIII} it is shown that the size of a set of mutually incomparable solutions of \mLOTZ is at most $(2n/m+1)^{m-1}$. In the next lemma it is shown that the size of this set can be indeed asymptotically larger than the size of the Pareto front for a wide range of values $m$. 

\begin{lemma}
\label{lem:fitnessvectors-non-dom-LOTZ}
    Let $S$ be a maximum cardinality set of mutually incomparable solutions for $f:=\mLOTZ$. Then $|S| = n+1$ if $m=2$ and 
    $$\frac{(2n/m + 1)^{m-1}}{4(m-2)^{m/2-1}} \leq \lvert{S}\rvert \leq (2n/m+1)^{m-1}$$
    if $m \geq 4$.
\end{lemma}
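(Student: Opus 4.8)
The plan is to prove the two bounds separately, with the upper bound being essentially cited and the lower bound requiring an explicit construction. The upper bound $|S| \le (2n/m+1)^{m-1}$ is already established in~\cite{OprisNSGAIII}, so I would simply recall it. The case $m=2$ is immediate: for \LOTZ, two search points $x,y$ are incomparable iff $\LO(x)+\TZ(x) = \LO(y)+\TZ(y) < n$ is impossible to have strict domination, and in fact the classical fact is that a maximum antichain of \LOTZ fitness vectors has size $n+1$ (the fitness vectors $(i, n-i)$ for $i \in \{0,\dots,n\}$ on the front, or one level set $\{(\ell, j) : \ell + j = c\}$ off the front, each of size at most $n+1$). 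So the bulk of the work is the lower bound $|S| \ge (2n/m+1)^{m-1}/(4(m-2)^{m/2-1})$ for $m \ge 4$.

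For the lower bound I would construct a large explicit set of mutually incomparable search points. Write $N := 2n/m$ and recall each search point decomposes into $m/2$ blocks, block $j$ contributing a pair $(\LO(x^j), \TZ(x^j)) = (a_j, b_j)$ with $a_j + b_j \le N$; the full fitness vector is $(a_1, b_1, a_2, b_2, \dots, a_{m/2}, b_{m/2})$. Two search points are incomparable precisely when neither fitness vector dominates the other coordinatewise. The idea is to fix, in each block $j$, the value $a_j$ to range freely over $\{0, \dots, N\}$ and likewise $b_j$ over $\{0,\dots,N\}$ subject to $a_j + b_j \le N$ (realizable by $1^{a_j} 0^{N - a_j - b_j}$... wait, we need a string with exactly $a_j$ leading ones and $b_j$ trailing zeros, e.g. $1^{a_j} 0^{\cdots} 1 0^{b_j}$ when there is room, i.e. $a_j + b_j < N$, and $1^{a_j}0^{b_j}$ when $a_j+b_j = N$). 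To force incomparability across the whole set, I would impose a global linear constraint: for instance, require $\sum_{j} a_j \equiv$ some fixed function of the other coordinates, or more robustly, use the standard trick of letting the first $m/2 - 1$ blocks be completely free (each block offering $\Theta(N^2)$ incomparable choices via the pair $(a_j,b_j)$ on a level set $a_j + b_j = N$, giving $N+1$ options, or off it) and using the last block as a "balancing" coordinate that is determined so that every resulting vector lies on a common antichain.

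Concretely, I expect the cleanest route is: restrict to search points where in every block $a_j + b_j = N$ except possibly managing one coordinate, so each block contributes exactly the one-parameter family $(a_j, N - a_j)$, $a_j \in \{0,\dots,N\}$; then two such points $x, y$ are comparable iff $a_j(x) \le a_j(y)$ for all $j$ or $a_j(x) \ge a_j(y)$ for all $j$. Now I want a large antichain in the poset $\{0,\dots,N\}^{m/2}$ under the product order — but a maximum antichain there has size only $\Theta(N^{m/2 - 1} \cdot \text{const})$ by a Dilworth/de Bruijn–Tengbergen–Kruyswijk argument, which is far short of the claimed $\Theta(N^{m-1})$. So this restriction is too lossy; I must instead let blocks range off the diagonal. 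The right construction: let blocks $1, \dots, m/2 - 1$ range freely, with block $j$ taking any pair $(a_j, b_j)$ with $a_j + b_j \le N$ — that is $\binom{N+2}{2} = \Theta(N^2)$ choices per block, $\Theta(N^{m-2})$ total — and then choose block $m/2$ from a fixed antichain of size $\Theta(N)$ designed so that, combined with a parity/sum condition tying it to the earlier blocks, the global vectors form an antichain. The counting then gives $\Theta(N^{m-2}) \cdot \Theta(N) = \Theta(N^{m-1})$, and chasing the constants through the sum-condition bookkeeping should yield the stated denominator $4(m-2)^{m/2-1}$.

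The main obstacle is exactly this last step: designing the coupling between the free blocks and the balancing block so that incomparability is guaranteed globally while losing only a constant (in $n$) factor, and then verifying the precise constant $1/(4(m-2)^{m/2-1})$. The cross-block domination analysis is delicate because a vector can dominate another by "winning" in some blocks and the balancing block must be arranged to always create a losing coordinate; I would handle this by an explicit sum invariant (e.g. fix $\sum_j a_j - \sum_j b_j$ to a constant, which on its own forces incomparability whenever two distinct points agree on that invariant, since a dominating point would strictly increase the invariant) and then count the number of $(a_j, b_j)_{j=1}^{m/2}$ tuples with $a_j + b_j \le N$ for all $j$ and a fixed value of $\sum_j (a_j - b_j)$. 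Estimating that count from below — it is a coefficient extraction / lattice-point count in a polytope — is where the factor $(m-2)^{m/2-1}$ and the $4$ come from, via a concentration argument showing the most popular value of the invariant is taken by a $\Omega(1/\text{poly}(m))$ fraction of all $\Theta(N^{m-2})$ tuples... wait, that would give only $\Theta(N^{m-2}/\mathrm{poly}(m))$, not $\Theta(N^{m-1})$. So in fact the invariant must be applied per-pair-of-blocks or I keep $m/2 - 1$ blocks genuinely free (using their full two-parameter families as an antichain within each block is impossible, but across blocks with one tie-breaking coordinate it works): I would keep blocks $1,\dots,m/2-2$ fully free ($\Theta(N^2)$ each), and devote the last two blocks together to carrying both a size-$\Theta(N)$ antichain and the balancing role, giving $\Theta(N^{2(m/2-2)}) \cdot \Theta(N \cdot N) = \Theta(N^{m-2})$ — still short. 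Given this tension, the honest expectation is that the construction lets each of $m/2 - 1$ blocks range over a cleverly chosen $\Theta(N^2)$-size set that is itself "antichain-compatible" and the factor $(m-2)^{m/2-1}$ in the denominator is the price of compatibility across the $m/2-1$ free blocks (one factor of roughly $m-2$ per block from restricting each block's $\binom{N+2}{2}$ pairs to a subfamily closed under the global incomparability condition), with the $4$ absorbing rounding; pinning down this subfamily and its size is the crux of the argument.
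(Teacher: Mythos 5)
Your handling of the upper bound (citing~\cite{OprisNSGAIII}) and of the case $m=2$ is consistent with the paper, which also treats these in one line each. The lower bound, however, has a genuine gap, and it sits exactly where you yourself locate ``the crux'': no complete construction is given, and the one concrete device you commit to is incorrect. You propose to force incomparability by fixing the invariant $\sum_j a_j-\sum_j b_j$, asserting that ``a dominating point would strictly increase the invariant.'' It would not: a strictly dominating fitness vector weakly increases \emph{every} coordinate, hence weakly increases both $\sum_j a_j$ and $\sum_j b_j$, and their difference can remain unchanged (increase $a_1$ and $b_1$ each by one, which is realizable whenever block~$1$ is not full). The invariant that does the job is the \emph{total} sum $\sum_j(a_j+b_j)=\sum_i v_i$, which strictly increases under strict domination; once you have that, even a crude pigeonhole over its $O(n)$ level sets applied to all $\Theta((2n/m)^{m})$ realizable fitness vectors (with distinct vectors on one level set being pairwise incomparable) already produces an antichain of the right order $\Theta((2n/m)^{m-1})$ up to $m$-dependent factors. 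Your later counting confusion (``$\Theta(N^{m-2}/\mathrm{poly}(m))$, not $\Theta(N^{m-1})$'') stems from the same misidentification of which quantity is being sliced into levels.

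For comparison, the paper's construction is the following. For $w=(w_1,\dots,w_{m/2})$ let $M_w$ be the set of fitness vectors with prescribed block sums $v_{2i-1}+v_{2i}=w_i$; within $M_w$ any two distinct vectors are incomparable (raising one coordinate of a pair forces the paired one down), and $\lvert M_w\rvert=\prod_i(w_i+1)$. To make vectors from \emph{different} $M_w$ incomparable, the paper fixes $\sum_i w_i$: with $k=2n/m+1$ and $r=\lceil (m-3)k/(m-2)\rceil$, the block sums $w_1,\dots,w_{m/2-1}$ range freely over $[r,k]$ and $w_{m/2}=k-\sum_{i<m/2}(w_i-r)$ is solved for, so the total coordinate sum is constant and no strict domination can occur across classes. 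The factor $(m-2)^{m/2-1}$ is therefore the price of confining each of the $m/2-1$ free block sums to an interval of length $k-r\le k/(m-2)$ (so that every $w_i$, including the balancing one, stays $\Omega(k)$ and each $\lvert M_w\rvert$ stays $\Omega(k^{m/2})$) --- not, as you conjecture, the cost of thinning each block's two-parameter family to an ``antichain-compatible'' subfamily. In short: right general shape (per-block one-parameter antichains plus a global sum constraint), but the wrong invariant and an unfinished count.
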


\begin{proof}
Let $V:=f(S)$. We only show $\lvert{V}\rvert \geq k^{m-1}/(4(m-2)^{m/2-1})$ where $k:=2n/m+1$ as the upper bound is Lemma~4.2 in~\cite{OprisNSGAIII}. For $m=2$ a set $S$ with $f(S)=\{(n,0),(1,n-1), \ldots , (0,n)\}$ and cardinality $n+1$ is a set of mutually incomparable solutions. Suppose that $m \geq 4$. We construct a set $V' \subset \mathbb{N}_0^m$ with $\lvert{V'}\rvert \geq k^{m-1}/(4(m-2)^{m/2-1})$ such that there is a set $S'$ of mutually incomparable solutions with $f(S') = V'$ (which implies $\lvert{S'}\rvert = \lvert{V'}\rvert$ since $f(x) \neq f(y)$ for two distinct $x,y \in S'$, and $\lvert{S}\rvert \geq \lvert{S'}\rvert$ as $S$ has maximum possible cardinality). At first define for $w:=(w_1, \ldots , w_{m/2}) \in \{0, \ldots , k\}^{m/2}$
\[
M_w:=\{v \in \{0, \ldots, k\}^m \mid v_{2i-1}+v_{2i} = w_i \text{ for }i \in \{1, \ldots ,m/2\}\}.
\]
Then two search points $x,y$ with $f(x)=u$, $f(y)=v$ for $u,v \in M_w$ with $u \neq v$ are incomparable: Fix $i \in \{1, \ldots, m\}$ with $u_i \neq v_i$. If $u_i < v_i$, then $u_{i-1} > v_{i-1}$ if $i$ is even and $u_{i+1} > v_{i+1}$ if $i$ is odd. If $u_i > v_i$, then $u_{i-1} < v_{i-1}$ if $i$ is even and $u_{i+1} < v_{i+1}$ if $i$ is odd. 

For $w \in \{0, \ldots, k\}^{m/2}$ we have that $\lvert{M_w}\rvert = \prod_{i=1}^{m/2} (w_i+1)$ (since $v_{2i-1}+v_{2i}=w_i$ is possible for $i \in \{1, \ldots , m\}$ if and only if $(v_{2i-1},v_{2i}) \in \{(w_i,0), (w_i-1,1), \ldots, (1,w_i-1),(0,w_i)\}$). 
Further $M_w \cap M_{w'} = \emptyset$ for $w \neq w'$ as the sum $v_{2i-1}+v_{2i}$ is uniquely determined for a vector $v \in \mathbb{R}^m$. Now consider for $r:=\lceil{(m-3)k/(m-2)}\rceil$
\begin{align*}
W:=&\Big\{w \in \{0, \ldots , k\}^{m/2} \mid w_i \in [r,k] \text{ for } i \in \{1,\ldots ,m/2-1\} \text{ and } w_{m/2}=k - \sum_{i=1}^{m/2-1} (w_i-r) \Big\}.
\end{align*}
Then we have that
\begin{align*}
w_1 + \ldots + w_{m/2} &= k+\sum_{i=1}^{m/2-1} r = k+(m/2-1)r
\end{align*}
implying $v_1+ \ldots + v_m = k+(m/2-1)r$ for every $v \in M_w$ where $w \in W$. 
Consequently, two search points $x,y$ with $f(x)=u$ and $f(y)=v$ for $u \in M_{w_1}$ and $v \in M_{w_2}$ with distinct $w_1,w_2 \in W$ are incomparable: 
If there is a dominance relation between $x$ and $y$ we have that $u_1+ \ldots + u_m > v_1 + \ldots + v_m$ or $u_1+ \ldots + u_m < v_1+ \ldots + v_m$, but these both sums are $k+(m/2-1)r$. There is also no weak dominace relation between $x$ and $y$ since $u \neq v$ (because $w_1$ and $w_2$ are distinct). Thus for
\[
V':=\{v \in \{0, \ldots, k\}^m \mid v \in M_w \text{ for a } w \in W\}
\]
there is a set $S'$ of mutually incomparable solutions with $f(S')=V'$. We show $\vert{V'}\vert\geq (2n/m+1)^{m-1}/(4(m-2)^{m/2-1})$ and obtain the result. Since $w_i$ is bounded from below by $r$ for $i \in \{1, \ldots , m/2-1\}$ if $w \in W$, we obtain
\begin{align*}
\lvert{V'}\rvert = \sum_{w \in W} \lvert{M_w}\rvert &= \sum_{w \in W}\prod_{i=1}^{m/2} (w_i+1)\\
&\geq \sum_{w \in W}(r+1)^{m/2-1}(w_{m/2}+1) \\
    &= (r+1)^{m/2-1}\sum_{w \in W}(w_{m/2}+1).
     \end{align*}
     Since we have $w_{m/2}=k-\sum_{i=1}^{m/2-1}(w_i-r)$ and $w_i$ has range in $r, \ldots ,k$ if $w \in W$, we obtain
    \begin{align*}
     \sum_{w \in W}(w_{m/2}+1) &= \sum_{w_1=r}^{k} \ldots \sum_{w_{m/2-1}=r}^{k} \left(k+1-\sum_{i=1}^{m/2-1} (w_i-r) \right)\\
    &= \sum_{w_1=0}^{k-r} \ldots \sum_{w_{m/2-1}=0}^{k-r} \left(k+1-\sum_{i=1}^{m/2-1} w_i \right)
     \end{align*}
     and for every $j \in \{1,\ldots , m/2-1\}$
     \begin{align*}
     \sum_{w_1=0}^{k-r} \ldots \sum_{w_{m/2-1}=0}^{k-r} w_j &= (k-r+1)^{m/2-2} \cdot \sum_{w_j=0}^{k-r}w_j\\
     &= \frac{k-r}{2}(k-r+1)^{m/2-1} =:q
     \end{align*}
     where the latter equality is due to the Gaussian sum. Since $r=\lceil{(m-3)k/(m-2)}\rceil$, we see $k - r \leq k-(m-3)k/(m-2) = k/(m-2)$
     and we obtain
     \begin{align*}
     \sum_{w \in W}(&w_{m/2}+1) = (k+1) \cdot (k-r+1)^{m/2-1} - (m/2-1)q \\ &= (k-r+1)^{m/2-1}\left(k+1 - \frac{(m/2-1)(k-r)}{2}\right) \\
     &\geq (k-r+1)^{m/2-1} \left(k+1 - \frac{(m/2-1) k}{4(m/2-1)}\right) \\
     &\geq (k-r+1)^{m/2-1}(k+1 - k/4) \\
     &\geq k/2 \cdot (k-r+1)^{m/2-1}
     \end{align*}
     and consequently due to $\lceil{x}\rceil \geq x$ and $-x \leq -\lceil{x}\rceil+1$
     \begin{align*}
     \lvert{V'}\rvert & \geq \frac{k}{2} \cdot (r+1)^{m/2-1} \cdot (k-r+1)^{m/2-1}\\
     &\geq \frac{k}{2} \cdot \left(\frac{m-3}{m-2} \cdot k + 1\right)^{m/2-1} \cdot \left(k-\frac{m-3}{m-2} \cdot k\right)^{m/2-1}\\
     &\geq \frac{k}{2} \cdot \left(\frac{m-3}{m-2} \cdot k\right)^{m/2-1} \cdot \left(\frac{k}{m-2}\right)^{m/2-1}\\
     &= \frac{k^{m-1}}{2} \left(1-\frac{1}{m-2}\right)^{(m-2)/2}\left(\frac{1}{m-2}\right)^{m/2-1}\\
     &\geq \frac{k^{m-1}}{4(m-2)^{m/2-1}} = \frac{(2n/m+1)^{m-1}}{4(m-2)^{m/2-1}}
     \end{align*}
    where the last inequality holds, as $(1-1/\ell)^{\ell/2} \geq 1/2$ for every $\ell \geq 2$ and $m \geq 4$.
\end{proof}

%For $m=2$ it is easy to see that $S=\{1^j0^{n-j} \mid j \in \{0, \ldots , n\}\}$ is a set of mutually incomparable solutions with maximum cardinality \todo{could cite a result}. 
%Let
%\[
%N:=\{v \in \{0, \ldots , 2n/m\}^m \mid v_{2i-1} + v_{2i} \leq 2n/m, \text{ for } i \in \{1,\ldots,m/2\}\}.
%\]
%\dirk{I replaced $k$ with $2n/m$, please check.}
%Then $f(\{0,1\}^n) = N$\dirk{This notation should be defined.} since $0 \leq f_{2i-1}+f_{2i} \leq 2n/m$ for each $i \in \{1,\ldots,m/2\}$. Define $V:=\{f(x) \mid x \in S\} \subset N$. Then %$V$ is a set of non-dominated fitness vectors for \mLOTZ \todo{define this} with
%$\lvert{V}\rvert = \lvert{S}\rvert$ (since for $x,y \in S$ we have that $f(x) \neq f(y)$ due to incomparability). %and with maximum cardinality.

In this section we analyze \PAES with one-bit mutation on \mLOTZ and establish tight bounds on the expected runtime to fill the archive with the full Pareto set $P$. A key part of the analysis involves proving that the potential function $W_t:=\sum_{i=1}^{m/2} (\LO(s^i) + \TZ(s^i))$, where $s$ denotes the current solution in iteration $t$, is non-decreasing. Observe that $0 \leq W_t \leq n$, and $s$ is Pareto-optimal if and only if $W_t=n$. Consequently, once a Pareto-optimal solution is found, it remains Pareto-optimal in all subsequent iterations. This monotonicity of the potential function $W_t$ also enables the derivation of an expected runtime bound for reaching the Pareto front.

\begin{lemma}
    \label{lem:approach-Pareto-current-solution-plus-pareto}
    Consider \PAES with one-bit mutation and any archiver optimizing \mLOTZ for any $m \leq n$ divisible by $2$ and archive size at least one. Let $W_t:=\sum_{i=1}^{m/2} (\LO(s^i) + \TZ(s^i))$. Then $W_{t+1} \geq W_t$. Furthermore, the expected number of iterations until the current solution is Pareto-optimal is $O(n^2)$. 
\end{lemma}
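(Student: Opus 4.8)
The plan is to analyze, iteration by iteration, how the current solution $s$ (hence $W_t$) can change, and then to run a fitness-level argument on $W_t$. First I would record the invariant that the current solution is always a member of the archive: this holds for $A_0=\{s\}$, and going through the cases of Algorithm~\ref{alg:PAES25} one sees that whenever $s$ is replaced by $c$ the point $c$ is simultaneously added to the archive, and otherwise $s$ is neither removed from nor replaced in the archive. As a consequence, if the candidate $c$ is strictly dominated by $s$ then $c$ cannot weakly dominate any element of $A_t$ (that would, via transitivity, force a strict dominance inside the pairwise incomparable set $A_t$ which contains $s$), so the algorithm enters the branch that discards $c$, and the current solution does not change.

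The heart of the argument is the claim: if $c$ is obtained from $s$ by flipping one bit, then either $W(c)\ge W(s)$, or $s\succ c$. Since $W(x)=\sum_{k=1}^m f_k(x)$ with $f_{2i-1}(x)=\LO(x^i)$ and $f_{2i}(x)=\TZ(x^i)$, and a one-bit flip alters exactly one block, this reduces to understanding the pair $(\LO,\TZ)$ of a single block of length $\ell=2n/m$ under one flip. The point is that $\LO$ can strictly decrease only if a leading one is flipped to zero, and then either the trailing zeros are unchanged — in which case $s$ weakly dominates $c$ with a strict gain in the $\LO$-coordinate, i.e. $s\succ c$ — or the block has the form $1^a0^{\ell-a}$ and the flipped bit is the last leading one, in which case $\TZ$ goes up by exactly one and $W$ is preserved; the case where $\TZ$ strictly decreases is symmetric. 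If neither coordinate strictly decreases, then $(\LO(c^i),\TZ(c^i))\ge(\LO(s^i),\TZ(s^i))$ componentwise and so $W(c)\ge W(s)$. Combining this with the previous paragraph, whenever the current solution actually changes we have $W(c)\ge W(s)$, and when it does not change $W$ is trivially unchanged; hence $W_{t+1}\ge W_t$. In particular, once $W_t=n$, equivalently once every block of $s$ has the form $1^a0^{\ell-a}$, equivalently once $s$ is Pareto-optimal, this stays true forever.

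For the $O(n^2)$ time bound I would apply a fitness-level argument to $W_t$. If $W_t<n$, some block $s^i$ satisfies $\LO(s^i)+\TZ(s^i)<\ell$; then position $\LO(s^i)+1$ of that block holds a zero that lies strictly before the trailing-zero region, and flipping precisely this bit — which happens with probability $1/n$ under one-bit mutation — increases $\LO(s^i)$ by at least one while leaving $\TZ(s^i)$ unchanged, producing a candidate $c$ with $c\succ s$ and $W(c)>W(s)$. Because $c\succ s$ and $s\in A_t$, the candidate weakly dominates an archive member, so it is accepted irrespective of the archiver and of whether the archive is full; this is exactly why the statement holds for any archive size at least one. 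Thus, in every iteration with $W_t<n$, the value $W_t$ strictly increases with probability at least $1/n$. Since $W_t$ is integer valued, non-decreasing by the previous step, starts at $W_0\ge 0$, and is bounded by $n$, it can increase at most $n$ times; the waiting time between consecutive increases is dominated by a geometric variable with success probability $1/n$, so the expected number of iterations until $W_t=n$ is at most $n\cdot n=O(n^2)$.

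The main obstacle is the structural claim in the second paragraph: one has to check all boundary configurations — all-ones and all-zeros blocks, an empty interior part, and flips of the last leading one or the first trailing zero — to confirm that the only way a single flip can trade $\LO$ against $\TZ$ is the $W$-neutral swap on a block of the form $1^a0^{\ell-a}$, and that every $W$-decreasing flip produces a point that $s$ dominates. Once this is settled, the monotonicity of $W_t$ and the $O(n^2)$ bound follow routinely.
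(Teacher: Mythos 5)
Your proposal is correct and follows essentially the same route as the paper: a case analysis of how a single bit flip affects $(\LO,\TZ)$ of the affected block to show that any $W$-decreasing flip yields a candidate strictly dominated by $s$ (hence discarded), giving $W_{t+1}\geq W_t$, followed by a waiting-time argument over at most $n$ levels with a specific improving one-bit flip of probability $1/n$. You are in fact slightly more careful than the paper in making explicit the invariant that the current solution always belongs to the archive and that a strictly dominating candidate is therefore accepted irrespective of the archiver, both of which the paper's proof uses only implicitly.
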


\begin{proof}
    The current solution $s$ either remains unchanged if the candidate solution $c$ is rejected or it is updated if the candidate solution is accepted. Suppose we flip bit $i \in [2n/m]$ in block $\ell$. If $i \in \{1, \ldots , \LO(s^\ell)\}$, then $\LO(c^\ell) < \LO(s^\ell)$ and all $\LO$- and $\TZ$-values of all other blocks remain unchanged. Similarly, if $i \in \{2n/m-\TZ(s^\ell)+1, \ldots , 2n/m\}$, then $\TZ(c^\ell) < \TZ(s^\ell)$, and again, all $\LO$- and $\TZ$-values of all other blocks remain unchanged. In both cases, $c$ is dominated by $s$ and $c$ is therefore discarded which implies that $s$ remains unchanged yielding $W_{t+1}=W_t$. Further, suppose that $i \in \{\LO(s^\ell)+2, \ldots , 2n/m-\TZ(s^\ell)-1\}$. Then all $\LO$- and $\TZ$-values remain unchanged and hence $W_{t+1}=W_t$ no matter if $s$ is updated. If $i = \LO(s^\ell)+1$ a zero is flipped to one and if $i=2n/m-\TZ(s^\ell)$ a one is flipped to zero. Hence, in the first case, $\LO(c^{\ell}) > \LO(s^\ell)$ while $\TZ(s^\ell)$ decreases only by at most one. In the second case $\TZ(s^\ell)>\TZ(c^\ell)$ while $\LO(s^\ell)$ decreases only by at most one. Hence, the current solution is updated yielding $W_{t+1} \geq W_t$, particularly $W_{t+1} > W_t$ if $s$ is not Pareto-optimal. All these cases together show $W_{t+1} \geq W_t$. Flipping one specific bit requires $O(n)$ iteration in expectation. Repeating this at most $n$ times (as long as $W_t < n$), derives the desired runtime bound concluding the proof.
\end{proof}

%Now we are able to determine the expected time until the current solution reaches the Pareto front.

%\begin{lemma}
%    \label{lem:time-to-Pareto-front}
%    Consider \PAES with one-bit mutation optimizing \mLOTZ for any $m \leq n$ divisible by $2$. Then the expected number of iterations until the current solution is Pareto-optimal is $O(n^2)$. \andre{Untere Schranke möglich? For $m=const.$ yes, but this is not necessary!}
%\end{lemma}

%\begin{proof}
%    Since in iteration $t$, we always have $V_{t+1} \geq V_t$, and the current solution is Pareto-optimal when $V_{t+1}=n$, it suffices to increase $V_t$ at most $n$ times. If $V_t<n$ there is a block $j$ with $\LO(x^j) + \TZ(x^j)<2n/m$ (implying $\LO(x^j) + \TZ(x^j) \leq 2n/m-2$) where $x:=c_t$. Hence, the $(\LO(x^j)+1)$-th bit in block $j$ is a zero not contributing to the $\TZ$-value. To increase $V_t$, it suffices to flip this bit in block $j$ which happens with probability $1/n$. 
%\end{proof}

We now analyze how the current solution explores the Pareto front. When the current solution generates a new Pareto-optimal candidate $c$, it is accepted as the new current solution and added to the archive $A_t$ if $L$ is sufficiently high. If there already exists a solution $y \in A$ with $f(y)=f(c)$, then $y$ is replaced by $c$ in the archive. In both scenarios, the candidate $c$ becomes the current solution for the next iteration. This mechanism enables the current solution to perform a random walk on the Pareto front $P$. We will demonstrate that this walk can be modeled as a specific type of random walk on an \emph{$m$-dimensional grid graph}. % whose time to visit all nodes is asymptotically the same as the \emph{simple random walk} in this grid graph. 

To this end, we need some more notation. For an undirected connected graph $G$, a \emph{simple random walk} on $G=(V,E)$ is a random process in which, at each step, an edge is chosen uniformly at random from the set of edges incident to the current node, and the walk transitions to the next node along that edge. We denote by $T_{G,v}$ the time it takes for the simple random walk on $G$ to visit all nodes, starting at $v$. The \emph{cover time $\E(T_{G,v})$ with respect to $v \in V$} is the expected time it takes for the simple random walk on $G$ to visit all nodes, starting at $v$. %The \emph{cover time} $T$ of $G$ is $T=\max_{v \in V} T_{G,v}$, the largest possible cover time $T_{G,v}$ with respect to a node $v \in V$.

For two undirected graphs $G:=(V_1,E_1)$ and $H:=(V_2,E_2)$, the \emph{product graph} $G \times H$ is the graph with node set $V_1 \times V_2$ and edge set $E$ where $\{(u_1,u_2),(v_1,v_2)\} \in E$ if and only if $u_1=v_1$ and $(u_2,v_2) \in E_2$ or $u_2 = v_2$ and $(u_1,v_1) \in E_1$. We write $G^2:=G \times G$ and inductively $G^k:=G \times G^{k-1}$ for $k \geq 3$. For example, if $G$ is a \emph{path graph}, that is its vertices can be labeled as $v_1, \ldots , v_n$ with edges $\{v_i,v_{i+1}\}$ for $i \in [n-1]$, then the product graph $G^k$ is the \emph{grid graph} $(V^k,E^k)$ of dimension $k$ and length $n$. Hereby, two $u,w \in V^k$ are adjacent if and only if there are $j,\ell \in [k]$ such that $w_j=v_\ell$ and $u_j \in \{v_{\ell-1},v_{\ell+1}\}$, and $u_i=w_i$ for all $i \in \{1,\ldots , n\} \setminus \{j\}$. We need the following result from~\citet{JONASSON2000181} about the cover time of the product graph $G^k$. Surprisingly, this depends only asymptotically on the \emph{average degree} $d_G:=2|E|/n$ of the underlying graph $G$ and the number of nodes in $G^k$, and not even on the choice of the starting node $v \in G^k$.

\begin{theorem}[\cite{JONASSON2000181}, Theorem~1.2]
Let $G=(V,E)$ be a connected graph with $n$ vertices and average degree $d_G$. Then the cover time $\E(T_{G,v})$ for a simple random walk on $G^k$ is $\Theta(d_G n^2 \log^2(n))$ if $k=2$ and $\Theta(d_G n^k \log(n^k))$ otherwise for every $v \in G^k$.  
\end{theorem}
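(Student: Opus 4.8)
The plan is to establish the upper and the lower bound separately. Throughout, write $N:=n^k$ for the number of vertices of $G^k$, and recall that $G^k$ has $|E(G^k)| = kn^{k-1}|E(G)| = \tfrac{k}{2}d_G n^k$ edges, that $\deg_{G^k}(u) = \sum_{j=1}^{k}\deg_G(u_j)$, and that the walk has stationary measure $\pi(u) = \deg_{G^k}(u)/(2|E(G^k)|)$. The structural fact driving everything is that a step of the simple random walk on $G^k$ first picks a coordinate $j$ with probability $\deg_G(u_j)/\deg_{G^k}(u)$ and then performs a single step of the walk on $G$ in that coordinate; hence return probabilities and hitting times in $G^k$ reduce to the corresponding quantities in $G$, run for a random (of order $t/k$) number of steps per coordinate, plus multinomial bookkeeping over how the running time is split among the $k$ coordinates.

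For the upper bound I would invoke Matthews' cover-time inequality, $\E(T_{G^k,v}) = O(h_{\max}\log N)$ with $\log N = \Theta(k\log n) = \Theta(\log(n^k))$, where $h_{\max}:=\max_{u,w}\E_u[\tau_w]$ is the maximal expected hitting time in $G^k$; this reduces the task to showing $h_{\max} = O(d_G n^k)$ when $k\ge 3$ and $h_{\max}=O(d_G n^2\log n)$ when $k=2$. I would bound $h_{\max}$ by $\E_\pi[\tau_w] + t_{\mathrm{mix}}(G^k)$ (up to constants) together with the identity $\E_\pi[\tau_w] = Z_{ww}/\pi(w)$, where $Z_{ww}=\sum_{t\ge 0}(P^t_{G^k}(w,w)-\pi(w))$. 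Since $1/\pi(w)$ is of order $d_G n^k$ and $t_{\mathrm{mix}}(G^k)$ is of lower order for the bounded-degree graphs relevant here, the whole point becomes the estimation of $\sum_t P^t_{G^k}(w,w)$: by the product structure $P^t_{G^k}(w,w)$ behaves, below the per-coordinate mixing time, like $\prod_j P^{t/k}_G(w_j,w_j)$, and the dichotomy in the statement is exactly the dichotomy between this sum over $t$ converging ($k\ge 3$: $Z_{ww}=\Theta(1)$, hence $h_{\max}=\Theta(d_G n^k)$ and cover time $\Theta(d_G n^k\log(n^k))$) and being of order $\log n$ ($k=2$: $Z_{ww}=\Theta(\log n)$, hence $h_{\max}=\Theta(d_G n^2\log n)$ and the extra logarithm). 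An equivalent route controls $h_{\max}$ through the commute-time identity $\E_u[\tau_w]+\E_w[\tau_u] = 2|E(G^k)|\,R(u,w) = kd_G n^k\,R(u,w)$ together with the effective-resistance bounds $R(u,w)=O(1/k)$ for $k\ge 3$ (a finite-energy unit flow spread over the $k!$ coordinate-ordered routes, the argument underlying transience of $\mathbb{Z}^k$) and $R(u,w)=O(\log n)$ for $k=2$ (the classical central-square flow on a planar grid).

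For the lower bound, fix $\varepsilon>0$, let $t_\star$ be $(1-\varepsilon)$ times the claimed cover time, and estimate, for a typical vertex $u$, the probability $p_u$ that $u$ is still unvisited at time $t_\star$ by the walk started at $v$. Up to the mixing time the hitting time of a single vertex of $G^k$ is approximately exponential with mean $\E_\pi[\tau_w] = Z_{ww}/\pi(w)$, i.e. of order $d_G n^k$ for $k\ge 3$ and $d_G n^2\log n$ for $k=2$; therefore $p_u \ge N^{-(1-\varepsilon)+o(1)}$ for a constant fraction of the vertices $u$, and the expected number of vertices unvisited at time $t_\star$ is at least $N\cdot N^{-(1-\varepsilon)+o(1)} = N^{\varepsilon-o(1)}\to\infty$. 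A second-moment argument — bounding $\Pr(\tau_u>t_\star,\ \tau_{u'}>t_\star)$ by $(1+o(1))p_up_{u'}$ for most pairs, using that hitting times of far-apart vertices in $G^k$ decorrelate — then upgrades this to $\Pr(T_{G^k,v}>t_\star)\to 1$, so that $\E(T_{G^k,v})\ge(1-o(1))t_\star$, matching the upper bound.

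I expect the genuine obstacle to be the lower bound in the case $k=2$. In two dimensions the vertices that are hardest to reach are strongly clustered — the hitting-time field is logarithmically correlated — so the crude second-moment computation above is not tight, and the extra $\log n$ (the gap between $\Theta(d_G n^2\log^2 n)$ and $\Theta(d_G n^2\log n)$) has to be extracted by a multiscale "thin points" analysis in the spirit of the cover-time analysis of the two-dimensional torus: one organises the vertices into a tree of nested boxes and runs the first/second-moment method scale by scale, tracking how the set of not-yet-hit vertices thins out. For $k\ge 3$ the analysis is comparatively robust, since effective resistances (equivalently, normalised mean hitting times) are bounded and correlations decay polynomially, so the matching lower bound follows from the basic estimate sketched above.
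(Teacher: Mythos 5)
First, a point of reference: the paper does not prove this statement at all --- it is imported verbatim as Theorem~1.2 of \citet{JONASSON2000181} and used as a black box, so there is no in-paper proof to compare your attempt against; it has to be measured against the literature. At the level of strategy your outline is the standard (and essentially the correct) roadmap: Matthews' bound $\E(T_{G^k,v}) = O(h_{\max}\log N)$ with $N=n^k$ for the upper bound, $h_{\max}$ controlled through the commute-time identity $\E_u[\tau_w]+\E_w[\tau_u]=2|E(G^k)|R(u,w)$ and effective-resistance estimates in the product graph, and a second-moment/Matthews-type argument for the lower bound, with the $k=2$ versus $k\ge 3$ dichotomy traced back to recurrence versus transience of the local return-probability sum.

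As a proof, however, the proposal has genuine gaps. The most serious one you name yourself: the lower bound $\Omega(d_G n^2\log^2 n)$ for $k=2$ \emph{is} the whole difficulty of the two-dimensional case, and deferring it to ``a multiscale thin-points analysis in the spirit of the two-dimensional torus'' is a pointer to a famously delicate argument, not a proof; moreover, transferring that argument from the torus to $G\times G$ for an \emph{arbitrary} connected base graph $G$ (a star, an expander, a graph with very heterogeneous degrees) is not routine, since the logarithmically correlated hitting-time field you invoke is a feature of the specific local geometry of planar grids. Second, the quantitative inputs to the upper bound are asserted rather than established: the claim $R(u,w)=O(1/k)$ uniformly over pairs in $G^k$ for $k\ge 3$ needs an explicit finite-energy flow that works for every connected $G$, and the heuristic $P^t_{G^k}(w,w)\approx\prod_j P^{t/k}_G(w_j,w_j)$ ignores both the non-uniform coordinate selection (a coordinate $j$ is updated with probability $\deg_G(u_j)/\deg_{G^k}(u)$, not $1/k$) and periodicity when $G$ is bipartite, in which case $P^t_{G^k}(w,w)=0$ for odd $t$. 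Finally, the ``otherwise'' in the statement must be read as $k\ge 3$: for $k=1$ the claimed bound $\Theta(d_G n\log n)$ is false (the path has cover time $\Theta(n^2)$, which is exactly why the surrounding paper handles $m=2$ separately via the path cover time), and your argument does not flag or exclude this case.
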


For the remainder of this section, we mention the cover time without referring to the starting node, by convention. If $G$ is the path graph as mentioned above, then $d_G=2(n-1)/n \in \Theta(1)$ and hence, the cover time on $G^k$ is $\Theta(n^2 \log^2(n))$ if $k=2$ and $\Theta(n^k \log(n^k))$ for $k>2$. Now we are ready to prove the desired runtime bound of \PAES with 1-bit mutation on \mLOTZ.

\begin{theorem}
\label{thm:spreading-Pareto-front-Local}
Consider \PAES with one-bit mutation, an arbitrary archiver and archive size $L \geq (2n/m+1)^{m-1}$, or with the AGA and archive size $L \geq (2n/m+1)^{m/2}$, optimizing \mLOTZ for any $m \leq n$ where $m$ is divisible by $2$. Then the expected number of iterations until $A_t = P$ is $\Theta(n^3)$ if $m=2$, $\Theta(n^3\log^2(n))$ if $m = 4$ and $\Theta(n \cdot (2n/m)^{m/2}(\log(n/m)))$ if $m>4$.
\end{theorem}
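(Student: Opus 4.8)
The plan is to decompose the runtime into two phases and then analyze each phase separately. By Lemma~\ref{lem:approach-Pareto-current-solution-plus-pareto}, the current solution $s$ reaches the Pareto front $P$ in expected $O(n^2)$ iterations, and since $W_t$ is non-decreasing it stays on $P$ forever after. So the dominant phase is the second one: once $s \in P$, I need to show that the current solution performs a random walk on $P$ that visits every point of $P$, and that the archive $A_t$ equals $P$ exactly when all points have been visited (for the large-archive case) or when enough have been visited (for the AGA case with the smaller bound). The key structural observation is that $P$ is in bijection with the vertex set of the grid graph $G^{m/2}$ where $G$ is the path graph on $2n/m + 1$ vertices: the Pareto-optimal point $1^{i_1}0^{2n/m - i_1}\cdots 1^{i_{m/2}}0^{2n/m - i_{m/2}}$ corresponds to the tuple $(i_1, \ldots, i_{m/2})$. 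I then want to argue that the current-solution dynamics, restricted to $P$, are (a time-rescaled version of) a simple random walk on $G^{m/2}$.

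The core of the argument is the coupling between \PAES's one-bit-mutation dynamics on $P$ and a simple random walk on the grid. From a Pareto-optimal $s$ with block profile $(i_1,\ldots,i_{m/2})$, a one-bit flip falls into one of three categories: flipping a bit strictly inside the all-ones prefix or the all-zeros suffix of some block $s^\ell$ produces a dominated candidate (discarded, $s$ unchanged); flipping the boundary bit $i_\ell$-th position (a one to a zero, decreasing $\LO(s^\ell)$ by one) or the $(i_\ell+1)$-th position (a zero to a one, increasing $\LO(s^\ell)$ by one, which for $i_\ell = 2n/m$ means hitting the trailing-zero region — need to handle block edges carefully) yields a new Pareto-optimal candidate that is accepted and changes coordinate $\ell$ by $\pm 1$; and since $s$ is Pareto-optimal, every block is of the form $1^{i_\ell}0^{2n/m - i_\ell}$, so the only "moving" bits are precisely the $\le 2(m/2) = m$ boundary positions, exactly one per possible grid-step. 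This means in each iteration, with probability $\Theta(m/n)$ the walk takes a uniformly random grid step (from among the $\le 2\cdot(m/2)$ available incident edges, appropriately at the boundary), and otherwise stays put. Hence the expected number of iterations to realize one grid step is $\Theta(n/m)$, and the number of iterations to cover all of $P$ is $\Theta(n/m)$ times the cover time $\E(T_{G^{m/2}})$. Plugging in $d_G = \Theta(1)$, $|V(G)| = 2n/m + 1 = \Theta(n/m)$, and Theorem~1.2 of~\cite{JONASSON2000181}: for $m/2 = 1$ ($m=2$) the cover time of $G$ itself (a path) is $\Theta((n/m)^2)$, giving $\Theta(n/m)\cdot\Theta((n/m)^2) = \Theta(n^3)$ since $m=2$; for $m/2 = 2$ ($m=4$), the cover time is $\Theta((n/m)^2\log^2(n/m)) = \Theta(n^2 \log^2 n)$, giving $\Theta(n^3 \log^2 n)$; for $m/2 \ge 3$ ($m > 4$), the cover time is $\Theta((n/m)^{m/2}\log((n/m)^{m/2})) = \Theta((n/m)^{m/2}\frac{m}{2}\log(n/m))$, and multiplying by $\Theta(n/m)$ and absorbing $m/2$ factors gives $\Theta(n\cdot(2n/m)^{m/2}\log(n/m))$ as claimed. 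For the matching lower bounds, I use that the cover-time bounds in~\cite{JONASSON2000181} are tight and that the $\Theta(n/m)$-slowdown is two-sided (each iteration moves with probability $\Theta(m/n)$), plus the fact that the algorithm genuinely must visit every one of the $(2n/m+1)^{m/2}$ fitness vectors since each corresponds to a distinct Pareto-optimal search point that must enter the archive.

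For the archive bookkeeping I need two separate arguments. In the large-archive case $L \ge (2n/m+1)^{m-1} \ge |S|$ (the max incomparable set size from Lemma~\ref{lem:fitnessvectors-non-dom-LOTZ}): every Pareto-optimal candidate $c$ ever generated is incomparable to everything currently in $A_t$ unless it weakly dominates some archive member with equal fitness (in which case it replaces it) — so the archive never rejects and never needs the archiver; thus $A_t$ contains exactly the set of distinct Pareto-optimal fitness vectors the walk has visited so far, and $A_t = P$ precisely when the walk has covered $G^{m/2}$. For the AGA case with $L \ge (2n/m+1)^{m/2} = |P|$: I must check that whenever the walk reaches a new Pareto point whose fitness is not yet in $A_t$, the AGA does not evict some other Pareto point needed later — but since $L \ge |P|$ and, once the current solution is Pareto-optimal, every candidate entering the archive is itself Pareto-optimal, the archive stays a subset of $P$; a counting/monotonicity argument shows $|A_t|$ is non-decreasing and caps at $|P|$ only when $A_t = P$. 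The main obstacle I anticipate is making the coupling between the \PAES dynamics and the simple random walk on $G^{m/2}$ fully rigorous — in particular verifying that at every Pareto-optimal state the set of "accepting" bit-flips is in exact bijection with the grid edges incident to the current vertex (with the correct handling of boundary vertices where a coordinate is $0$ or $2n/m$, where a block looks like $0^{2n/m}$ or $1^{2n/m}$ and the boundary bit behaves asymmetrically), and confirming that the per-step acceptance probabilities are uniform over incident edges so that the rescaled process is genuinely the simple random walk and not merely a lazy biased walk. A secondary technical point is ensuring the archiver-dependence disappears in the large-$L$ regime and controlling the AGA eviction behavior in the smaller-$L$ regime.
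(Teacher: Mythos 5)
Your overall strategy coincides with the paper's: reach the Pareto front via Lemma~\ref{lem:approach-Pareto-current-solution-plus-pareto}, identify the Pareto-optimal fitness vectors with the vertices of the grid graph $\tilde{G}^{m/2}$ (with $\tilde{G}$ the path on $2n/m+1$ nodes), observe that the accepted one-bit flips realize a lazy simple random walk with per-edge transition probability $1/n$, and multiply the cover time from the theorem of Jonasson by the $\Theta(n/m)$ waiting time per move. The large-archive case is also handled the same way: since $L \geq (2n/m+1)^{m-1} \geq |S|$ by Lemma~\ref{lem:fitnessvectors-non-dom-LOTZ}, a full archive together with a candidate incomparable to all its members would form an incomparable set of size $|S|+1$, so the archiver branch is never reached and every Pareto-optimal candidate enters the archive.

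The genuine gap is in your AGA bookkeeping. You claim that once the current solution is Pareto-optimal ``the archive stays a subset of $P$,'' that $|A_t|$ is non-decreasing, and that it ``caps at $|P|$ only when $A_t = P$.'' None of these hold. The archive may still contain mutually incomparable non-Pareto-optimal points accumulated before $s$ reached $P$, and by Lemma~\ref{lem:fitnessvectors-non-dom-LOTZ} there can be far more than $|P|$ incomparable non-optimal fitness vectors when $m \geq 4$, so $|A_t| = L = |P|$ does not force $A_t = P$; moreover $|A_t|$ can strictly decrease when a candidate strictly dominates several archive members at once. While these stale points persist, the archive can be full, AGA is invoked, and the solution it evicts from the most crowded cell may be a Pareto-optimal point that is needed, so ``$A_t = P$ once the walk has covered the grid'' is not justified by your sketch. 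The paper closes this with a two-pass argument: during a first traversal of all Pareto-optimal fitness vectors, every non-Pareto-optimal archive member is strictly dominated by some visited candidate and hence removed, so afterwards $A_t \subseteq P$; from then on $|A_t| \leq |P| \leq L$ means the archive is not full whenever it misses a point of $P$, no further evictions occur, and a second traversal fills it. This costs only a constant factor, so the stated $\Theta(\cdot)$ bounds survive, but the AGA case requires this extra argument that your proposal does not supply.
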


\begin{proof}
By Lemma~\ref{lem:approach-Pareto-current-solution-plus-pareto} the first Pareto-optimal search point is created in expected $O(n^2)$ iterations. From this time on, the current solution always maintains Pareto-optimality by Lemma~\ref{lem:approach-Pareto-current-solution-plus-pareto}. At first we assume that the archiver is arbitrary and hence, $L \geq (2n/m+1)^{m-1}$. The candidate solution $c$ is always added to $A_t$ becoming the new current solution $s$ while possibly removing another point from $A_t$: If $A_t$ is not full or there is already an $x \in A_t$ with the same fitness as $c$, then it is obvious. Otherwise, $c$ dominates another $x \in A$ (since $L \geq |S|$ by Lemma~\ref{lem:fitnessvectors-non-dom-LOTZ} where $S$ is a maximum set of incomparable solutions) and $x$ is removed from $A_t$. Hence, we have to compute the expected time until $s$ attained all Pareto-optimal search points. To this end, we define the following graph $G=(V,E)$ and model the trajectory of $s$ by a random walk on $G$. Let $V:=\{(v_1, \ldots , v_{m/2}) \mid v_i \in \{0, \ldots , 2n/m\} \text{ for every } i \in [m/2]\}$ and $E:=\{\{v,w\} \mid v,w \in V, |v-w|=1\}$ where $|.|$ denotes the Euclidean distance in the objective space. Then $G = \tilde{G}^{m/2}$ where $\tilde{G}$ is the path graph with $2n/m+1$ nodes. Then the current solution $s=(s^1, \ldots , s^{m/2})$ visits $v:=(v_1, \ldots ,v_{m/2})$ if it is Pareto-optimal and for each $i \in [m/2]$ we have that $\LO(s^i) = v_i$. The transition probability between two adjacent nodes in the graph $\tilde{G}^{m/2}$ is $1/n$, as for a specific block $i$ the $\LO(s^i)$ increases or decreases by one with probability $1/n$, while the $\LO(s^j)$-values for $j \neq i$ remain unchanged. The probability that the node $v$ is not left is $1-|\text{adj}(v)|/n$ where $\text{adj}(v)$ is the number of adjacent nodes to $v$. %Then the Pareto-optimal fitness vector $w:=(w_1, \ldots , w_m)$ corresponds to the node $v:=(v_1, \ldots , v_{m/2})$ with $v_i = w_{2i-1}$ for every $i \in [m/2]$ and a current solution $c$ with $f(c)=w$ in iteration $t \geq T_0$ corresponds to a visit of $v$ in step $t-T_0$ of the random walk. 
Let $t_0$ be the first iteration when $s$ is Pareto-optimal. In iteration $t_0$, the random walk begins at $v \in V$, where $\LO(s^i)=v_i$ for each $i \in [m/2]$. A mutation of $s$ into $c$ corresponds to a transition from the node $v=(\LO(s^1), \ldots , \LO(s^{m/2}))$ to 
$w=(\LO(c^1), \ldots , \LO(c^{m/2}))$ if $c$ is also Pareto-optimal. 
%increases with $f(s)=(w_1, \ldots ,w_{2\ell-1},w_{2\ell}, \ldots, w_m)$ to $y$ with $f(y)=(w_1, \ldots ,w_{2\ell-1}+1,w_{2\ell}-1, \ldots , w_m)$ for an $\ell \in [m/2]$ with $w_{2\ell+1} \in [2n/m-1]$ corresponds to the transition of the walk from a node $u=(u_1, \ldots , u_{m/2})$ to $v=(v_1, \ldots , v_{m/2})$ with $u_\ell-v_\ell=1$ and $u_i=v_i$ for $i \in [m/2] \setminus \{\ell\}$. 
%This mutation flips the first zero from the right to increase the $\LO$-value by one in block $i$ while the $\TZ$-value is decreased. 
%Similarly, a mutation of $x$ with $f(x)=(w_1, \ldots ,w_{2\ell-1},w_{2\ell}, \ldots, w_m)$ to $y$ with $f(y)=(w_1, \ldots ,w_{2\ell-1}-1,w_{2\ell}+1, \ldots , w_m)$ (which increases the $\TZ$-value in block $\ell$ while decreasing the $\LO$-value) corresponds to the transition of the walk from a node $u=(u_1, \ldots , u_{m/2})$ to $v=(v_1, \ldots , v_{m/2})$ with $u_\ell-v_\ell=-1$ and $u_i=v_i$ for $i \in [m/2] \setminus \{\ell\}$.
The time (from iteration $t_0$) to obtain $A=P$ corresponds to the time $T$ of the random walk on $G=\tilde{G}^{m/2}$ described above to visit all nodes, starting at $v$. Note that the random walk is not simple, as for every node $v$ there is a certain probability to stay at $v$. We asymptotically determine $\E(T)$. Let $M \leq T$ be the number of moves the walk makes on $\tilde{G}^{m/2}$ to visit all nodes (i.e. idle steps where nodes are not left are ignored in $M$). Then $M$ is also the number of steps a \emph{simple random walk} has to make on the grid graph $\tilde{G}^{m/2}$ to visit all nodes. Denote by $X$ the waiting time to leave a node with the most neighbors and by $Y$ the waiting time to leave a node with the least ones. Then $\E(X) = n/(2m)$ and $\E(Y) = n/m$ as $\text{adj}(v) \in \{m, \ldots , 2m\}$. Note that $M$ can be regarded as independent to $X$ and $Y$ and therefore $\E(X) \cdot \E(M) \leq E(T) \leq \E(Y) \cdot \E(M)$ which implies (since the number of nodes in $\tilde{G}^{m/2}$ is $(2n/m+1)^{m/2}$) 
\begin{align*}
\E(T) &= \Theta(n/m \cdot (2n/m+1)^{m/2}\log((2n/m+1)^{m/2})) \\
&= \Theta(n \cdot (2n/m)^{m/2}(\log(n/m)))
\end{align*}
if $m \geq 6$ and
\begin{align*}
\E(T) &= \Theta(n/m \cdot (2n/m+1)^{m/2}\log^2(2n/m+1)) \\
&= \Theta(n^3 \log^2(n))
\end{align*}
if $m = 4$. If $m=2$ we have that $\tilde{G}^{m/2}$ is a path graph and therefore $\E(X)=\E(Y)= \Theta(1/n)$ and $\E(T) = \Theta(n^3)$  since $\E(M)=\Theta(n^2)$ (by Theorem~1 in \cite{IKEDA200994}). This concludes the proof for arbitrary archivers with size $L$ of at least $(2n/m+1)^{m-1}$. In the case of AGA, any solution that is incomparable to all solutions in $A_t$ is always accepted as the new candidate solution. Consequently, once the Pareto front has been reached, the same random walk performed by the current solution as above eliminates any non Pareto-optimal search points from the archive by visiting all Pareto-optimal solutions. In a second phase of visiting all nodes again, if necessary, it fills the archive with Pareto-optimal solutions. This shows that the stated expected tight runtime bound also holds in this case since every Pareto-optimal search point needs to be visited by the current solution at most twice, but at least once.
\end{proof}

For other MOEAS like (G)SEMO only an upper bound of $O(n^{m+1})$ is shown on \mLOTZ (see~\cite{Laumanns2004,DoerrNearTight}). Theorem~\ref{thm:spreading-Pareto-front-Local} shows that we beat this upper runtime bound by a factor of $n^2/\log^2(n)$ if $m=4$ and $(nm/2)^{m/2}/(\log(n/m))$ if $m>4$. To the best of our knowledge, these are the first tight runtime bounds of an MOEA in the many-objective setting.

For the remainder of this section we show that \PAES with one-bit mutation can efficiently create a well-distributed archive on the Pareto front of \mLOTZ when using AGA, HVA and MGA, assuming that the archive size $L$ is smaller than the size of the Pareto front. In the case of HVA and MGA, we restrict our analysis to the bi-objective setting.

\textbf{AGA:} \PAES with AGA efficiently distributes all solutions in $A_t$ very evenly
across the cells that contain Pareto-optimal search points, stated as follows.

\begin{lemma}
\label{lem:spreading-Pareto-front-1-Bit-Mutation-AGA}
Consider \PAES with one-bit mutation, AGA and archive size $L < (2n/m+1)^{m/2}$ optimizing \mLOTZ for any $m \leq n$ where $m$ is divisible by $2$. Then after expected $O(n^3)$ iterations if $m=2$, $O(n^3\log^2(n))$ iterations if $m = 4$ and $O(n \cdot (2n/m)^{m/2}(\log(n/m)))$ iterations if $m>4$ we obtain the following distribution of solutions from $A_t$ across all cells that contain Pareto-optimal search points. For any cell containing $k$ Pareto-optimal solutions from $A_t$, the following holds: 
\begin{itemize}
    \item[(1)] Every other cell containing at least $k$ distinct Pareto-optimal solutions from $\{0,1\}^n$ contains at least $k-1$ Pareto-optimal solutions from $A_t$.
    \item[(2)] Every cell containing $\ell \leq k-1$ Pareto-optimal solutions from $\{0,1\}^n$ contains $\ell$ Pareto-optimal solutions from $A_t$.
\end{itemize}
\end{lemma}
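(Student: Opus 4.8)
The plan is to build on the random-walk picture used in the proof of Theorem~\ref{thm:spreading-Pareto-front-Local} and to add a structural analysis of the AGA eviction rule. By Lemma~\ref{lem:approach-Pareto-current-solution-plus-pareto} the current solution becomes Pareto-optimal within expected $O(n^2)$ iterations and stays so; moreover $W_t=n$ is equivalent to $s$ having the canonical form $1^{i_1}0^{2n/m-i_1} \ldots 1^{i_{m/2}}0^{2n/m-i_{m/2}}$, so from the first Pareto-optimal iteration on $s$ is always canonical, and each one-bit flip then either produces a dominated candidate (which is discarded, leaving $A_t$ unchanged) or a canonical Pareto-optimal point whose objective vector is a grid-neighbour of, or equal to, that of $s$. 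Hence $s$ performs exactly the holding random walk on $\tilde{G}^{m/2}$ from the proof of Theorem~\ref{thm:spreading-Pareto-front-Local}, whose expected cover time is $B:=O(n^3)$ for $m=2$, $O(n^3\log^2 n)$ for $m=4$ and $O(n\cdot(2n/m)^{m/2}\log(n/m))$ for $m>4$. Let $T_1$ be the first iteration at which $|A_{T_1}|=L$, the current solution is Pareto-optimal, and $A_{T_1}$ contains only Pareto-optimal points. Since $L<(2n/m+1)^{m/2}=|P|$ and, by the AGA argument in the proof of Theorem~\ref{thm:spreading-Pareto-front-Local}, every non-Pareto point has left the archive once the walk has visited each Pareto-front vector at most twice, we obtain $\E(T_1)=O(B)$, i.e.\ the bound claimed in the lemma.

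For $t\ge T_1$ the archive stays full and contains only canonical Pareto-optimal points, and it changes only by (i) replacing the point of some fitness $v$ by another point of the same fitness (no cell loads change; this is the case of re-visiting a vector already represented in $A_t$), or by an \emph{effective step}: the walk moves to a vector $v$ not represented in $A_t$, the candidate of fitness $v$ is incomparable to all of $A_t$, the archive is full, so the AGA inserts this candidate into its cell and deletes a point from a cell of currently largest load. Write $a_C(t)$ for the number of points of $A_t$ in grid cell $C$ and $n_C$ for the number of Pareto-front objective vectors in $C$, so $a_C(t)\le n_C$, $\sum_C a_C(t)=L$, and in the lemma's terminology a cell contains $k=a_C(t)$ Pareto-optimal solutions of $A_t$ and $n_C$ distinct Pareto-optimal search points. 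Two observations: $(\alpha)$ $\max_C a_C(t)$ is non-increasing in $t\ge T_1$, since an effective step raises one load by one and then lowers a currently maximal load by one, which cannot create a load exceeding the previous maximum, while the other step types change no load; $(\beta)$ whenever a point is deleted from a cell $C$ at an effective step, $C$ is among the cells of largest load immediately after the insertion, hence immediately after that deletion $a_C$ equals that largest load minus one.

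The core of the argument is the claim: \emph{if at some iteration $T\ge T_1$ the current solution's objective vector has equalled every Pareto-front objective vector at some iteration in $[T_1,T]$, then $(a_C(T))_C$ satisfies conditions (1) and (2) of the lemma.} A short case distinction shows that a configuration with $a_C\le n_C$ for all $C$ violates (1) or (2) for some $k$ if and only if there is a cell $C$ with $a_C<n_C$ together with a cell of load at least $a_C+2$. Assume for contradiction that such a $C$ exists at iteration $T$, and set $M:=\max_D a_D(T)\ge a_C(T)+2$. First, no point is ever deleted from $C$ during $(T_1,T]$: if $t^\ast$ were the last such deletion, then by $(\beta)$ and $(\alpha)$ the load of $C$ right after $t^\ast$ equals $(\text{largest load after the insertion at }t^\ast)-1\ge \max_D a_D(t^\ast)-1\ge M-1\ge a_C(T)+1$; but after $t^\ast$ the load of $C$ cannot decrease (deletions from $C$ are over, and the other step types never lower $a_C$), so the load of $C$ at $T$ is at least $a_C(T)+1>a_C(T)$, which is absurd. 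Consequently every Pareto-front vector of $C$ that is represented in $A_t$ for some $t\in[T_1,T]$ stays represented through iteration $T$. Of the $n_C$ vectors of $C$, exactly $a_C(T_1)$ are represented in $A_{T_1}$ and hence still at $T$; each of the remaining $n_C-a_C(T_1)$ vectors $v$ is not represented at $T_1$, hence is not the current solution's vector at $T_1$, so by hypothesis the current solution equals $v$ at some iteration in $(T_1,T]$, and at the first such iteration $v$ is still unrepresented, so the effective step leading to it inserts a point of fitness $v$ into $C$, which then survives through $T$. Thus all $n_C$ vectors of $C$ are represented in $A_T$, i.e.\ $a_C(T)=n_C$, contradicting $a_C(T)<n_C$; this proves the claim.

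Taking $T$ to be the first iteration after $T_1$ at which the current solution's objective vector has equalled all Pareto-front vectors yields $\E(T)=O(B)$, the bound in the lemma, and the configuration at $T$ satisfies (1) and (2) by the claim; since the hypothesis of the claim remains true at every $T'\ge T$, conditions (1) and (2) continue to hold at all later iterations, so the stated distribution is reached within the claimed expected time and then preserved. The main difficulty is the claim, and within it the step that the cell $C$ witnessing imbalance can never have been deleted from after $T_1$: this is what forces $a_C$ to be eventually non-decreasing and lets the covering property of the walk fill $C$ completely, while the remaining ingredients — the monotonicity of the maximal cell load, the cover-time estimate, and the fact that the walk coincides with the one analysed for Theorem~\ref{thm:spreading-Pareto-front-Local} — are already available.
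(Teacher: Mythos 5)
Your proof is correct and follows essentially the same route as the paper's: reach the Pareto front via Lemma~\ref{lem:approach-Pareto-current-solution-plus-pareto}, use the cover time of the random walk from Theorem~\ref{thm:spreading-Pareto-front-Local} to first purge non-Pareto points and then fill the archive, and derive properties (1) and (2) from the two facts that AGA evicts only from maximum-load cells and that the maximum cell load is non-increasing. Your closing contradiction is organized around the under-filled cell never suffering a deletion (and hence being completely filled by the covering walk) rather than around the eviction event itself as in the paper, but this is the same argument in contrapositive form, worked out somewhat more carefully.
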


\begin{proof}
With Lemma~\ref{lem:approach-Pareto-current-solution-plus-pareto} we can assume that $A_t \subset P$. The current solution performs the same random walk as in the proof of Theorem~\ref{thm:spreading-Pareto-front-Local}. Hence after $\Theta(n^3)$ iterations if $m=2$, $\Theta(n^3\log^2(n))$ iterations if $m = 4$ and $\Theta(n \cdot (2n/m)^{m/2}(\log(n/m)))$ iterations if $m>4$ in expectation, the current solution has visited all Pareto-optimal fitness vectors. From this time on, $A_t$ consists only of Pareto-optimal solutions since dominated solutions are removed from $A_t$ (see Algorithm~\ref{alg:PAES25}). Finally, to fill and spread $A_t$ on the Pareto front, we repeat the process a second time to visit all Pareto-optimal fitness vectors again. Suppose that after the final iteration in this second phase there exists a cell $C$ containing $k$ Pareto-optimal solutions from $A_t$, and another cell $C'$ that contains at most $\ell \leq k-2$ Pareto-optimal solutions from $A_t$, but at least $\ell+1$ Pareto-optimal solutions from $P$. Then, during an earlier iteration, at least one solution from $P \cap C'$ must have been removed from $A_t$ when cell $C'$ had exactly $\ell + 1$ Pareto-optimal solutions from $A_t$ (since all Pareto-optimal solutions have been visited after the second phase). At this time $t$, $A_t$ was full and every other cell (including $C$) contained at most $\ell +1<k$ solutions from $A_t$. But AGA always removes Pareto-optimal solutions from $A_t$ contained in cells with the largest number of Pareto-optimal solutions from $A_t$. Hence, for any iteration $t' \geq t$, once a cell is filled with an $(\ell+2)$-th Pareto optimal search point from $A_t$, one of these $(\ell+2)$ points is removed. Consequently, each cell contains at most $\ell + 1 < k$ solutions from $A_{t'}$ after iteration $t'$ which is a contradiction proving properties (1) and (2).      
\end{proof}
\textbf{HVA:} We show that \PAES with HVA and reference point $h=(-1,-1)$ efficiently distributes the archive $A_t$ across the Pareto front, resulting in a high hypervolume. We restrict our analysis to the case $m=2$, as for $m > 2$, there may be non-dominated points in $A_t$ which are not Pareto-optimal even if a Pareto-optimal point is already found. This complicates the analysis since non Pareto-optimal points in $A_t$ may have a greater hypervolume contribution than newly generated Pareto-optimal points from the random walk, potentially causing the algorithm to reject new Pareto-optimal solutions. At first we establish some basic properties and compute the hypervolume of $A_t$ under the assumption that all points in $A_t$ are Pareto-optimal and there are no \emph{holes}, meaning that for each $x,y \in A_t$ and $z \in P$ with $\LO(x) < \LO(z) < \LO(y)$, we also have $z \in A_t$. We also derive a formula for the case $A_t=P$, which is the maximum possible hypervolume for an archive $A_t$. 

\begin{lemma}
\label{lem:hypervolume-basic-properties}
  Let $A_t \subset P$ be without a hole. Let $a:=\min\{\LO(x) \mid x \in A_t\}$ and $b:=\max\{\LO(x) \mid x \in A_t\}$. Then we have 
  \begin{align*}
    \hv(A_t) = (n+1)(b+1) - \frac{a(a+1)}{2}  - \frac{b(b+1)}{2}
  \end{align*}
  with respect to the reference point $h=(-1,-1)$. If $d:=b-a$ is fixed then the minimum hypervolume is attained when $a=0$ or $b=n$ which is $\hv(A_t) = (d+1)(n+1-d/2)$.
\end{lemma}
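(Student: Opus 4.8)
The plan is to make the set $A_t$ explicit and then compute the Lebesgue measure of the relevant union of boxes by slicing it into vertical strips. For $m=2$ the Pareto set of \LOTZ is $P=\{1^i0^{n-i}\mid 0\le i\le n\}$, where $1^i0^{n-i}$ has objective vector $(\LO,\TZ)=(i,n-i)$. Since $A_t\subseteq P$ has no hole, it is precisely the set of search points whose $\LO$-value lies in the contiguous range $\{a,a+1,\dots,b\}$; in objective space these correspond to the points $(i,n-i)$ for $a\le i\le b$, which form a strictly descending staircase.

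Next I would evaluate $\hv(A_t)$ as the area (above the reference line $v_2=-1$) of $\bigcup_{i=a}^{b}\bigl([-1,i]\times[-1,n-i]\bigr)$. The key observation is that the upper envelope of this union is a non-increasing step function $g$: on $v_1\in[-1,a]$ one has $g(v_1)=n-a$, and on $v_1\in(i-1,i]$ with $a<i\le b$ one has $g(v_1)=n-i$, because the box that binds at a given $v_1$ is always the one with the smallest index $j$ satisfying $j\ge v_1$. Integrating $g(v_1)+1$ over $[-1,b]$ therefore gives
\[
\hv(A_t)=(a+1)(n-a+1)+\sum_{i=a+1}^{b}(n-i+1),
\]
and resolving the arithmetic sum by the Gaussian formula and simplifying produces the claimed closed form $(n+1)(b+1)-a(a+1)/2-b(b+1)/2$. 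This is the bulk of the work, but it is entirely elementary — the only care needed is in getting the half-open strips and their lengths right.

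For the minimization I would put $b=a+d$ and view $\hv$ as a function of the integer $a$ over $\{0,\dots,n-d\}$. It is a quadratic in $a$ with leading coefficient $-1$, hence strictly concave, so its minimum over this integer interval is attained at an endpoint, i.e.\ at $a=0$ or at $a=n-d$ (equivalently $b=n$). Substituting $a=0$ into the closed form yields $(d+1)(n+1)-d(d+1)/2=(d+1)(n+1-d/2)$; the other endpoint gives the same value, most easily seen from the reflection $(a,b)\mapsto(n-b,n-a)$, which maps the configuration $A_t$ to its mirror image across the diagonal and leaves both $d$ and $\hv$ invariant.

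The main obstacle is purely bookkeeping: pinning down the step function $g$ and its domain so that the integral decomposes cleanly into strips of known length and height; once that is in place, the rest is a Gaussian sum and a one-variable concavity argument.
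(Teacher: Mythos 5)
Your proposal is correct and follows essentially the same route as the paper: the hypervolume is computed as $(a+1)(n-a+1)+\sum_{i=a+1}^{b}(n-i+1)$ via the same decomposition into the rectangle dominated by the leftmost archive point plus unit-width strips, and then simplified to the closed form. Your minimization step (concavity of the quadratic in $a$ plus the reflection symmetry) is a slightly slicker packaging of the paper's discrete-increment case analysis, but the content is the same.
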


\begin{proof}
Note that $f(x)=(a,n-a)$ for a Pareto-optimum $x$ with $\LO(x)=a$ and hence, $x$ dominates all $y$ with $-1 \leq f_1(y) \leq a$ and $-1 \leq f_2(y) \leq n-a$ which contribute $(a+1)(n-a+1)$ to $\hv(A_t)$. For $a < i \leq b$ all search points $z$ with $i-1 < f_1(z) \leq i$ and $-1 \leq f_2(z) \leq n-i$ are dominated by a search point $x \in A_t$ with $\LO(x)=i>a$ (since $\TZ(x) = n-i$), but not by a search point $y \in A_t$ with $\LO(y)<i$. This contributes $n-i+1$ to $\hv(A_t)$. Therefore, we obtain
\begin{align*}
& \hv(A_t) = (a+1)(n-a+1) + \sum_{i=a+1}^b (n-i + 1)\\
& = (a+1)(n-a+1) + \sum_{i=0}^{b} (n-i + 1)  - \sum_{i=0}^a (n-i + 1)\\
& = (a+1)(n-a+1) + (n+1)(b-a) + \sum_{i=1}^{a} i  - \sum_{i=1}^{b} i\\
& = (n+1)b-a(a+1)+n + 1 + \frac{a(a+1)}{2}  - \frac{b(b+1)}{2}
\end{align*}
proving the first formula. Now suppose that $d=b-a$ is fixed. If $a$ increases by one then $b$ also increases by one. Therefore, $\hv(A_t)$ increases by
\begin{align*}
&(n+1)(b+2) - \frac{(a+1)(a+2)}{2}  - \frac{(b+1)(b+2)}{2} \\
&- (n+1)(b+1) + \frac{a(a+1)}{2}  + \frac{b(b+1)}{2} \\ 
&= n+1-(a+1)-(b+1) = n-a-b-1.
\end{align*}
Hence, if $b \geq n-a$ the hypervolume is smallest if $b=n$ and $a=n-d$ which is 
\begin{align*}
& (n+1)^2 - \frac{(n-d)(n-d+1)}{2} - \frac{n(n+1)}{2} \\ 
&= (n+1)^2 - \frac{(n-d)(n+1)}{2} + \frac{d(n+1-d-1)}{2} - \frac{n(n+1)}{2} \\ 
&= (n+1)\left(n+1-\frac{n-d}{2}+\frac{d}{2}-\frac{n}{2}\right) - \frac{d(d+1)}{2} \\ 
&= (n+1)(d+1)-\frac{d(d+1)}{2} = (d+1)\left(n+1-\frac{d}{2}\right).
\end{align*}
Similarly, if $a$ decreases by one then $b$ decreases also by one and the hypervolume increases by $a+b-n-1$. If $b < n-a$ then the hypervolume is smallest if $a=0$ and $b=d$ which is
\begin{align*}
(n+1)(d+1)-\frac{d(d+1)}{2} = (d+1)\left(n+1-\frac{d}{2}\right)
\end{align*}
proving the result.
\end{proof}

We now show that \PAES with HVA and one-bit mutation on \LOTZ effectively distributes the archive across the Pareto front. To this end, we set $a_t:=\min\{\LO(x) \mid x \in A_t\}$, $b_t:=\max\{\LO(x) \mid x \in A_t\}$ and $d_t:=b_t-a_t$. Call a Pareto-optimal $z$ \emph{hole} if $a_t<\LO(z)<b_t$ and $z \notin A_t$.

\begin{lemma}
\label{lem:spreading-Pareto-front-1-Bit-Mutation-HVA}
Consider \PAES with one-bit mutation, HVA with $h=(-1,-1)$, and archive size $L < n+1$ optimizing \LOTZ. Then after expected $O(n^3)$ iterations the following holds.
\begin{itemize}
    \item[(1)] For any given archive size $L$ with $L+\lceil{L/2}\rceil \leq n+2$, all solutions in $A_t$ are distributed such that $d_t = L+\lceil{L/2}\rceil-2$. Furthermore, there are $\lceil{L/2}\rceil-1$ holes and two of them are not neighbored.
    \item[(2)] For any given archive size $L$ with $L+\lceil{L/2}\rceil \leq n+2$, a hypervolume of at least
    $$(L+\lceil{L/2}\rceil-1) \cdot \left(n+1-\frac{L+\lceil{L/2}\rceil-2}{2}\right) -\lceil{L/2}\rceil+1$$
    is attained. If $L=cn$ for some $0<c\leq 1$ this corresponds to at least a fraction of $3c(1-3c/4)(1 - o(1))$ of the total hypervolume $\hv(P)$. If instead $L+\lceil{L/2}\rceil > n+2$, then a hypervolume of $(n+2)(n+1)/2 - \max\{n+1-L,0\}$ is achieved which is also the maximum possible in this case. 
\end{itemize}
\end{lemma}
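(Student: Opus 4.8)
The plan is to analyze the dynamics of HVA on \LOTZ after the current solution has reached the Pareto front and has started performing the random walk described in Theorem~\ref{thm:spreading-Pareto-front-Local}. By Lemma~\ref{lem:approach-Pareto-current-solution-plus-pareto} and the argument in that proof, after $O(n^3)$ expected iterations the current solution is Pareto-optimal and, in fact, has visited every Pareto-optimal fitness value, so from that point on the archive $A_t$ consists only of Pareto-optimal points and remains full (since $L < n+1 = |P|$). The key structural claim to establish first is a monotonicity/absorption property: once $A_t$ reaches the configuration described in part~(1) — namely $d_t = L + \lceil L/2\rceil - 2$ with exactly $\lceil L/2\rceil - 1$ holes, at least two of which are non-adjacent — this configuration is stable under any further step of HVA. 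I would prove this by a case analysis on the candidate $c$ generated from the current solution $s$: since one-bit mutation on a Pareto-optimal $s$ either produces a dominated point (discarded), a point already in $A_t$ (replaced, no structural change), or a point $c$ with $\LO(c) = \LO(s) \pm 1$ that is incomparable to all of $A_t$, the only interesting case is the third. There HVA accepts $c$ and removes the point of minimal hypervolume contribution; using Lemma~\ref{lem:hypervolume-basic-properties} (and direct computation of contributions of interior versus boundary points of $A_t$) one checks that the resulting configuration has the same $d$, the same number of holes, and still has two non-adjacent holes — so the claimed configuration is an absorbing set of states.

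\textbf{Next I would} argue that this absorbing configuration is reached in the stated time. The idea is that HVA drives the archive toward spreading out: whenever two archive points are adjacent (distance-$1$ neighbours on the front) the one among them with smaller hypervolume contribution gets evicted as soon as the random walk delivers a Pareto-optimal candidate outside the current span $[a_t, b_t]$, which strictly increases $d_t$; and $d_t$ can never decrease once all points are Pareto-optimal, because evicting a point can only remove a boundary point or create/enlarge a hole, never shrink the span (a quick check via the contribution formula of Lemma~\ref{lem:hypervolume-basic-properties} shows a strictly interior point of a gapless stretch has smaller contribution than a boundary point only when... — more precisely one shows the evicted point is never one of the two current extremes unless that keeps $d_t$ fixed). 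Since $d_t$ is a non-decreasing integer potential bounded by $n$ and the random walk keeps reaching the extremes of the front in $O(n^3)$ additional expected iterations (by the cover-time bound of Theorem~\ref{thm:spreading-Pareto-front-Local} with $m=2$, i.e. $\Theta(n^3)$), the archive stabilizes at $d_t = L + \lceil L/2\rceil - 2$ with $L - 1$ gaps to fill inside a span of $L + \lceil L/2\rceil - 1$ fitness values, hence $\lceil L/2\rceil - 1$ holes; the two-non-adjacent-holes property follows because HVA, when forced to evict from a maximal run, repeatedly punches holes into the longest gapless stretches, which cannot leave all holes consecutive once there are at least two of them. This gives part~(1).

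\textbf{For part~(2)}, the hypervolume bound is then a direct consequence of part~(1) together with Lemma~\ref{lem:hypervolume-basic-properties}: the span-$d_t$ archive without holes would have hypervolume at least $(d_t+1)(n+1 - d_t/2)$ by the minimum-over-placements formula, and each of the $\lceil L/2\rceil - 1$ holes costs at most $1$ in hypervolume (a hole at $\LO$-value $i$ removes the single unit strip contributed by that point, cf. the summand $n - i + 1$ in the proof of Lemma~\ref{lem:hypervolume-basic-properties} being replaced by the neighbour's larger value), so
\begin{align*}
\hv(A_t) \ge (L + \lceil L/2\rceil - 1)\left(n + 1 - \frac{L + \lceil L/2\rceil - 2}{2}\right) - \lceil L/2\rceil + 1.
\end{align*}
Substituting $L = cn$ and using $\lceil L/2\rceil = (c/2)n(1 + o(1))$ gives a leading term $\tfrac{3c}{2}n \cdot (n - \tfrac{3c}{4}n)(1 - o(1)) = \tfrac{3c}{2}(1 - \tfrac{3c}{4})n^2(1 - o(1))$, while $\hv(P) = (n+2)(n+1)/2 = \tfrac12 n^2 (1 - o(1))$ by the $d = n$ case of Lemma~\ref{lem:hypervolume-basic-properties}, so the ratio is $3c(1 - 3c/4)(1 - o(1))$. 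The remaining subcase $L + \lceil L/2\rceil > n + 2$ is the "front too small for the spread to fit" regime: here the absorbing configuration has the archive span the whole front, $a_t = 0$, $b_t = n$, with $n + 1 - L$ holes, so by Lemma~\ref{lem:hypervolume-basic-properties} (with $d = n$, subtracting one per hole) the hypervolume is $(n+2)(n+1)/2 - \max\{n + 1 - L, 0\}$, and since the only way to lose less is to have fewer holes — impossible as $|A_t| = L < n+1$ forces exactly $n+1-L$ holes when the span is full, and a non-full span is strictly worse — this value is also the maximum attainable.

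\textbf{The main obstacle} I anticipate is the precise combinatorial bookkeeping in the convergence argument for part~(1): showing rigorously that HVA's eviction rule (minimum hypervolume contribution, ties broken toward keeping $c$) simultaneously (a) never decreases $d_t$, (b) eventually pushes $d_t$ all the way up to exactly $L + \lceil L/2\rceil - 2$ and not further, and (c) arranges the $\lceil L/2\rceil - 1$ holes so that two are non-adjacent — all while the current solution is doing a random walk that only occasionally visits the relevant extreme or interior fitness values. The cleanest route is probably to compute hypervolume contributions explicitly: for a Pareto-optimal archive with points at $\LO$-values $v_1 < v_2 < \cdots < v_L$ spanning $[a,b]$, the contribution of $v_j$ ($1 < j < L$) is $v_j - v_{j-1}$ (a vertical strip), of $v_1$ is $(v_1+1)(\text{something}) $ plus the gap to $v_2$, and of $v_L$ similarly; from these formulas one reads off that the minimum-contribution point is always an interior point adjacent to its left neighbour (contribution $1$) when such exists, and is a boundary point only when the interior is already maximally spread — which is exactly the $d_t = L + \lceil L/2\rceil - 2$ threshold. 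Pinning down the tie-breaking and the "two non-adjacent holes" claim will require care but is elementary once the contribution formulas are in hand.
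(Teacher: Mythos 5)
Your overall architecture (reach the front via Lemma~\ref{lem:approach-Pareto-current-solution-plus-pareto}, let the current solution perform the $\Theta(n^3)$ random walk of Theorem~\ref{thm:spreading-Pareto-front-Local}, and read off the hypervolume from Lemma~\ref{lem:hypervolume-basic-properties}) matches the paper, and your part~(2) is essentially the paper's computation. But the heart of part~(1) has a genuine gap, and it begins with an incorrect contribution formula. With reference point $(-1,-1)$ and Pareto-optimal archive points at $\LO$-values $v_1<\dots<v_L$, the exclusive hypervolume contribution of an interior point $v_j$ is the \emph{product} of its two gaps, $(v_j-v_{j-1})(v_{j+1}-v_j)$, not the single gap $v_j-v_{j-1}$: the region dominated only by $v_j$ is $(v_{j-1},v_j]\times(n-v_{j+1},n-v_j]$. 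Hence the minimal contribution equals $1$ exactly when some point has \emph{both} gaps equal to $1$, i.e.\ when three consecutive $\LO$-values are occupied --- not merely when a point is adjacent to its left neighbour. This is the paper's key observation: by pigeonhole, three consecutive occupied positions exist as long as the number of holes is below $\lfloor(d_t+1)/3\rfloor$, so the candidate (whose own contribution is at least $1$) is never the unique minimizer and, by the tie-breaking rule, is always accepted; the walk can only stall once the number of holes reaches $\lfloor(d_t+1)/3\rfloor$, which via $L=d_t+1-(\text{number of holes})$ forces $d_t\geq L+\lceil L/2\rceil-2$. Your criterion ("adjacent to its left neighbour suffices") would yield one hole per \emph{two} positions rather than per three, so the specific constants $L+\lceil L/2\rceil-2$ and $\lceil L/2\rceil-1$ would not come out.

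The second gap is that your convergence scheme (show the target configuration is absorbing, then drive $d_t$ upward as a non-decreasing potential) is not carried out and is not obviously repairable as stated: an extreme point can also have contribution $1$ (e.g.\ $v_1=0$, $v_2=1$ gives $v_1$ contribution $(v_1+1)(v_2-v_1)=1$), so HVA may evict a boundary point and $d_t$ is not a priori monotone --- you flag this yourself but leave it open. The paper needs neither monotonicity of $d_t$ nor an absorbing-state argument: it argues directly that the walk proceeds unobstructed until the stuck condition holds, and that every evicted point has contribution exactly $1$, hence (in the interior case) is flanked by archive points on both sides, which immediately yields that no two holes are ever adjacent. Your justification of the non-adjacent-holes property ("punches holes into the longest gapless stretches") is too vague to verify and would in any case rest on the erroneous contribution formula.
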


\begin{proof}
With Lemma~\ref{lem:approach-Pareto-current-solution-plus-pareto} we can assume that $A_t \subset P$. For $L=1$ the lemma holds since $\hv(A_t) \geq n+1$. Suppose that $L \geq 2$. As long as solutions $x,y,z \in A_t$ with $\LO(x)=\ell$, $\LO(y)=\ell+1$, and $\LO(z)=\ell+2$ exist, the solution $y$ contributes a hypervolume of one which is smallest possible. Therefore, if the candidate solution $c$ is incomparable to all solutions in $A_t$, its contribution cannot be smaller than that of $y$, and it is added to $A_t$. As long as there are fewer than $\lfloor{(d_t+1)/3}\rfloor$ holes, such $x,y,z$ can be found, and the current solution conducts the same random walk as in the proof of Theorem~\ref{thm:spreading-Pareto-front-Local} and a new created Pareto-optimal solution is always added to $A_t$ (possibly replacing another solution with the same fitness). Hence, after $O(n^3)$ iterations in expectation, the current solution will have visited all Pareto-optimal points if it does not get stuck. The latter can only happen if the archive is full and there are at least $\lfloor{(d_t+1)/3}\rfloor$ holes. The latter implies $L \leq (d_t+1) - \lfloor{(d_t+1)/3}\rfloor$ yielding
\begin{align*}
L +\lceil{L/2}\rceil-2 &\leq \frac{3L}{2}-\frac{3}{2} \leq \frac{3(d_t+1)}{2} - \frac{3\lfloor{(d_t+1)/3}\rfloor}{2}-\frac{3}{2} \\
& \leq \frac{3(d_t+1)}{2} - \frac{3((d_t+1)/3-2/3)}{2}- \frac{3}{2} \\
&= \frac{3(d_t+1)-(d_t+1)+2}{2} - \frac{3}{2} = d_t+\frac{1}{2}
\end{align*}
and therefore $L +\lceil{L/2}\rceil-2 \leq d_t$. 

Suppose that $L+\lceil{L/2}\rceil \leq n+2$. Then in $O(n^3)$ iterations in expectation an iteration $t^*$ is reached when $A_{t^*}$ is full and $L +\lceil{L/2}\rceil-2 = d_{t^*} = b_{t^*}-a_{t^*}$ since $L+\lceil{L/2}\rceil-2 > d_t$ for all iterations $t$ before $t^*$. The number of search points $x$ in $P$ with $a_t \leq \LO(x) \leq b_t$ is $d_t+1$ and hence, there are $d_t+1-L=\lceil{L/2}\rceil-1$ holes. Note that in all iterations before $t^*$ only solutions from $A_{t^*}$ are removed which have a hypervolume contribution of at most one to $\hv(A_{t^*})$ (since there is $\ell \in \{0, \ldots , n-2\}$ and $x,y,z$ with $\LO(x)=\ell$, $\LO(y)=\ell+1$ and $\LO(z)=\ell+2$) and hence, two holes in $A_{t^*}$ are never neighbored. This proves property~(1). Further, by Lemma~\ref{lem:hypervolume-basic-properties} we see that 
\[
\hv(A_t) \geq (L +\lceil{L/2}\rceil-1)\left(n+1-\frac{L +\lceil{L/2}\rceil-2}{2}\right) - \lceil{L/2}\rceil+1
\]
at iteration $t:=t^*$ since plugging a hole with an additional search point would increase $\hv(A_t)$ by one. If $L=cn$ for some $0<c<1$ this is at least a fraction of
\begin{align*}
&\frac{(cn+\lceil{cn/2}\rceil-1) \cdot \left(n+1-\frac{cn+\lceil{cn/2}\rceil-2}{2}\right) -\lceil{L/2}\rceil+1}{{(n+2)(n+1)/2}}\\
\geq & \frac{(3cn/2-1) \cdot (n+3/2-3cn/4) -\lceil{L/2}\rceil+1}{{(n+2)(n+1)/2}}\\
\geq &\frac{3c/2 \cdot (1-3c/4)n^2(1-o(1))}{(n+2)(n+1)/2} = 3c(1-3c/4)(1 - o(1)).
\end{align*}
Suppose $L+\lceil{L/2}\rceil > n+2$. This implies $L + L/2 + 0.5 \geq n+3$ and hence $L \geq 2(n+2.5)/3=2n/3+5/3$. The number of holes is at most $d+1-L \leq d+1-(2n/3+5/3) \leq d+1-(2d/3+5/3) = d/3-2/3 < \lfloor{(d+1)/3}\rfloor$. Hence, $c_t$ performs the same random walk as in the proof of Theorem~\ref{thm:spreading-Pareto-front-Local} where only search points with hypervolume contribution of one to $\hv(A_t)$ are removed from $A_t$. Hence, after $O(n^3)$ iterations in expectation we have $a=0$, $b=n$ and there are $k:=\max\{n+1-L,0\}$ holes. Plugging such a hole with an additional search point increases $\hv(A_t)$ by one, and hence, we see with Lemma~\ref{lem:hypervolume-basic-properties}
\begin{align*}
\hv(A_t) &= (n+1)(n+1) - \frac{n(n+1)}{2} - k\\
&= (n+1)\left(n+1-\frac{n}{2}\right) - k = \frac{(n+1)(n+2)}{2}-k
\end{align*}
proving the result.  
\end{proof}

For instance, Lemma~\ref{lem:spreading-Pareto-front-1-Bit-Mutation-HVA} shows that when $h=(-1,-1)$, a fraction of approximately $15/16$ of the total hypervolume of $P$ can be attained by $A_t$, if its size is $n/2$, since $3/2 \cdot (1-3/8)(1-o(1)) \approx 15/16$.

\textbf{MGA:} We also show that \PAES with MGA efficiently creates an archive $A_t$, in which any two points $x,y \in A_t$ have different box index vectors with respect to a certain level $b>0$, where the level $b$ depends on the archive size. The smaller the archive size, the larger the corresponding level $b$. A higher level $b$ leads to a finer distribution of solutions across the Pareto front. As for HVA, we restrict our analysis to the bi-objective case, since in higher dimensions, non Pareto-optimal vectors may dominate Pareto-optimal ones at level $b>0$. This can result in a newly discovered Pareto-optimal solution not being added to $A_t$ if the archive is full, thereby complicating the analysis. We start with some elementary properties.

\begin{lemma}
\label{lem:different-coarseness-properties}
Let $n=2^k \cdot \ell-1$ where $k \in \mathbb{N}_0$ and $\ell$ is odd. Then the following holds:
\begin{itemize}
\item[(1)] Two distinct solutions $x,y \in P$ either have identical or mutually incomparable box index vectors at coarseness level $j \in \{0, \ldots , k\}$.
\item[(2)] There are exactly $(\ell+1)/2$ incomparable box index vectors at coarseness level $k+1$ and exactly $2^{k-j}\ell$ incomparable box index vectors at coarseness level $j \in \{0, \ldots , k\}$.
\end{itemize}
\end{lemma}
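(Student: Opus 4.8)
The plan is to argue entirely in objective space on the Pareto front of \LOTZ: a point $x=1^i0^{n-i}\in P$ has $f(x)=(i,n-i)$ with $i\in\{0,\ldots,n\}$, so its box index vector at coarseness level $j$ is $\bigl(\lfloor i/2^j\rfloor,\lfloor (n-i)/2^j\rfloor\bigr)$. The whole lemma will follow from one identity. Since $n+1=2^k\ell$ is a multiple of $2^j$ for every $j\le k$, I would first show
\[
\left\lfloor\frac{n-i}{2^j}\right\rfloor=\left\lfloor 2^{k-j}\ell-\frac{i+1}{2^j}\right\rfloor=2^{k-j}\ell-\left\lceil\frac{i+1}{2^j}\right\rceil=2^{k-j}\ell-1-\left\lfloor\frac{i}{2^j}\right\rfloor ,
\]
so that at every level $j\le k$ the box index vector of $x$ is a function of the single value $q:=\lfloor i/2^j\rfloor$, namely $(q,\,2^{k-j}\ell-1-q)$.

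From this, part~(1) is immediate: for distinct $x,y\in P$ with $q_x:=\lfloor \LO(x)/2^j\rfloor$ and $q_y:=\lfloor \LO(y)/2^j\rfloor$, either $q_x=q_y$, in which case the box index vectors are identical, or, say, $q_x<q_y$, in which case the first coordinate is strictly smaller and the second strictly larger for $x$, so the vectors are mutually incomparable. The level-$j\le k$ part of~(2) is then a counting step: the map $i\mapsto\lfloor i/2^j\rfloor$ sends $\{0,\ldots,n\}=\{0,\ldots,2^k\ell-1\}$ onto $\{0,\ldots,2^{k-j}\ell-1\}$ (the $2^j$-residue blocks partition $\{0,\ldots,n\}$ into $2^{k-j}\ell$ classes), so there are exactly $2^{k-j}\ell$ distinct box index vectors, all of them pairwise incomparable by~(1).

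The remaining case, level $k+1$, is where the actual work lies, since $\ell$ is odd and hence $2^{k+1}\nmid n+1$. Writing $\ell=2m+1$ (so $m=(\ell-1)/2$ and $n=2^{k+1}m+2^k-1$) and $i=2^{k+1}p+r$ with $0\le r\le 2^{k+1}-1$, a division-with-remainder computation distinguishing $r\le 2^k-1$ from $r\ge 2^k$ gives $\lfloor (n-i)/2^{k+1}\rfloor=m-p$ in the first case and $m-p-1$ in the second; together with $\lfloor i/2^{k+1}\rfloor=p$ and $0\le i\le n$ this shows the distinct box index vectors at level $k+1$ are exactly $A_p:=(p,m-p)$ for $p\in\{0,\ldots,m\}$ and $B_p:=(p,m-p-1)$ for $p\in\{0,\ldots,m-1\}$. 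The $m+1$ vectors $A_0,\ldots,A_m$ all lie on the diagonal $x_1+x_2=m$ and hence form a set of mutually incomparable box index vectors of size $m+1=(\ell+1)/2$; for the matching upper bound I would cover all $2m+1$ vectors by the $m+1$ chains $\{A_p,B_p\}$ for $p\in\{0,\ldots,m-1\}$ (note $A_p\succeq B_p$) together with the singleton $\{A_m\}$, so any mutually incomparable set meets each chain at most once and thus has at most $m+1=(\ell+1)/2$ elements. The only genuinely non-routine step is this two-case remainder analysis at level $k+1$ and the precise determination of the index ranges of the $A_p$ and $B_p$; parts~(1) and the level-$j\le k$ count are straightforward consequences of the displayed identity.
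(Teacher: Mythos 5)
Your proof is correct and follows essentially the same route as the paper: compute the box index vector of $1^i0^{n-i}$ explicitly, observe that at level $j\le k$ it lies on the antidiagonal $(q,2^{k-j}\ell-1-q)$ with $q=\lfloor i/2^j\rfloor$ (which gives (1) and the level-$j$ count), and exhibit the $(\ell+1)/2$ antidiagonal vectors at level $k+1$. Your treatment of level $k+1$ is somewhat more complete than the paper's, which only states the component range $\{0,\dots,(\ell-1)/2\}$ and leaves the matching upper bound on the antichain size implicit, whereas you determine all $\ell$ distinct box index vectors $A_p,B_p$ and give an explicit chain cover.
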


\begin{proof}
(1): Let $x \in P$ be a Pareto-optimal solution. We find $a \in \{0, \ldots , 2^{k-j} \cdot \ell-1\}$ and $b \in \{0, \ldots , 2^j-1\}$ with 
  \begin{align*}
  \LOTZ(x)&=(2^j \cdot a + b,n-2^j \cdot a-b) \\
  &= (2^j \cdot a + b,2^k \ell-2^j \cdot a-(b+1)).
  \end{align*}
The corresponding box index vector at level $j$ is $(a,2^{k-j}\ell-a-1)$. These are identical for the same $a$, but incomparable for different $a$.

(2): The components of a box index vector at coarseness level $k+1$ have a range of $\{0, \ldots , (\ell-1)/2\}$  and the box index vectors of search points $x$ with $(\LO(x),\TZ(x)) \in \{(0,n), (2^{k+1},n-2^{k+1}), \ldots, (\ell-1)/2 \cdot 2^{k+1},n-(\ell-1)/2 \cdot 2^{k+1})$ at coarseness level $k+1$ are $\{(0,(\ell-1)/2),(1,(\ell-1)/2-1), \ldots , ((\ell-1)/2,0)\}$ and therefore, pairwise incomparable. The components of a box index vector at coarseness level $j \in \{0, \ldots , k\}$ have a range of $\{0, \ldots , 2^{k-j}\ell-1\}$ and the box index vectors of search points $x$ with $(\LO(x),\TZ(x)) \in \{(0,n), (2^j,n-2^j), \ldots, (2^k\ell - 2^j, n - (2^k\ell - 2^j))\}$ at coarseness level $j$ are $\{(0,2^{k-j} \cdot \ell-1),(1,2^{k-j} \cdot \ell-2), \ldots , (2^{k-j} \cdot \ell-1,0)\}$ and therefore, pairwise incomparable.
\end{proof}

\begin{lemma}
\label{lem:spreading-Pareto-front-1-Bit-Mutation-MGA}
Consider \PAES with one-bit mutation and archive size $L \leq n$ optimizing \LOTZ. Let $n=2^k \cdot \ell-1$ where $k \in \mathbb{N}_0$ and $\ell$ is odd. Then after expected $O(n^3)$ iterations the following holds. If $L \leq (\ell+1)/2$, all solutions in $A_t$ have mutually incomparable box index vectors at coarseness level $k+1$. If $(\ell+1)/2 < L \leq \ell$, all solutions in $A_t$ have mutually incomparable box index vectors at coarseness level $k$ and if $2^{j-1}\ell < L \leq 2^j \ell$ for $j \in [k]$ all solutions in $A_t$ have mutually incomparable box index vectors at coarseness level $k-j$.
\end{lemma}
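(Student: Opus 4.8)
The plan is to follow the three-step template of the AGA and HVA analyses above. First I would invoke Lemma~\ref{lem:approach-Pareto-current-solution-plus-pareto}: within $O(n^2)$ iterations the current solution $s$ is Pareto-optimal and stays so, and since $s$ is always contained in the archive and every one-bit mutant of a Pareto optimum is either dominated by $s$ (hence discarded) or a Pareto optimum of $\LO$-value $\LO(s)\pm1$, every non-Pareto point of the archive is deleted the first time the current solution visits a dominating fitness vector. Using the path-graph cover time from the proof of Theorem~\ref{thm:spreading-Pareto-front-Local} (for $m=2$ the current solution performs the simple random walk on the path on $n+1$ vertices, with $\Theta(1/n)$ holding probabilities, whose cover time is $\Theta(n^3)$) we obtain $A_t\subset P$ within $O(n^3)$ expected iterations, and $A_t\subset P$ forever after. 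From then on I identify each archived point with its $\LO$-value and write $A_t=\{p_1<\dots<p_{|A_t|}\}\subseteq\{0,\dots,n\}$.

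Second I would record the combinatorial content of MGA. By Lemma~\ref{lem:different-coarseness-properties}(1), at a level $j\le k$ two archived Pareto optima have comparable box index vectors iff these vectors are equal iff $\lfloor p/2^j\rfloor=\lfloor q/2^j\rfloor$, i.e.\ $p$ and $q$ lie in the same dyadic block of length $2^j$; write $\beta(A_t)$ for the smallest level with such a collision in $A_t$. By Lemma~\ref{lem:different-coarseness-properties}(2) the maximum size of a pairwise incomparable set of box index vectors of Pareto optima is $(\ell+1)/2$ at level $k+1$, $\ell$ at level $k$, and $2^j\ell$ at level $k-j$ for $j\in[k]$, which is non-increasing in the coarseness; hence the coarsest level $b^\ast$ admitting $L$ pairwise incomparable such vectors is exactly the level named in the statement. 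Since distinct dyadic blocks at a coarse level force distinct dyadic blocks at every finer level, for $b^\ast\le k$ the goal ``all $p_i$ lie in pairwise distinct blocks of length $2^{b^\ast}$'' is equivalent to $\beta(A_t)>b^\ast$, and for $b^\ast=k+1$ the goal is ``the box index vectors of $A_t$ are pairwise incomparable at level $k+1$'', which the explicit spread-out family of Lemma~\ref{lem:different-coarseness-properties}(2) realizes with $L$ points exactly when $L\le(\ell+1)/2$. So it remains to show the archive reaches such a configuration within $O(n^3)$ expected iterations and never leaves it.

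Third comes the dynamics. Once $A_t\subset P$ is full, a candidate $c$ with a fresh fitness has $\LO(c)=\LO(s)\pm1$ (as $s\in A_t$), and one reads off the MGA description that, for $\beta(A_t)\le k$, $c$ is accepted iff $\LO(c)$ lies in a length-$2^{\beta(A_t)}$ block empty in $A_t$ — such a block has all finer sub-blocks empty, so $c$ is then the only element of $A_t\cup\{c\}$ whose box vector at the minimal level is \emph{not} weakly dominated, whereas the genuinely colliding archived points are, so MGA keeps $c$ and deletes one of those colliding points; if instead $\LO(c)$ lies in an occupied minimal-level block, $c$'s box vector becomes the unique one weakly dominated by an element of $A_t$ and $c$ is rejected (the analogue at level $k+1$ uses incomparability in place of block-distinctness). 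A candidate whose fitness already occurs in $A_t$ only replaces the equal-fitness point. Hence $\beta(A_t)$ is non-decreasing, and while $\beta(A_t)=b_0\le b^\ast$ the number of occupied length-$2^{b_0}$ blocks is non-decreasing and strictly increases at every acceptance; since $b_0\le b^\ast$ guarantees at least $L$ such blocks and hence an empty one, each traversal of $P$ (expected $O(n^3)$ iterations) forces an acceptance as long as that count is below $L$, so after finitely many traversals the count reaches $L$, $\beta(A_t)$ climbs past $b_0$, and, iterating over the $O(\log n)$ relevant levels, the target configuration is reached. That configuration is absorbing, because any later fresh candidate then falls into an occupied minimal-level block, is the unique weakly-dominated element there, and is rejected, while equal-fitness candidates only swap strings.

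The main work will be in the third step. First, to keep the total bound at $O(n^3)$ rather than a larger polylogarithmic multiple, one must argue — as in the AGA and HVA proofs — that a single traversal already clears all crowding at the current minimal level, since the walk enters every block that is empty at that level. Second, one must rule out that an acceptance re-creates a collision at level $b_0$ among the survivors; here the key facts are that $\LO(c)=\LO(s)\pm1$ and that, since $A_t$ has no collision below $b_0$, no length-$2^{b_0}$ sub-block contains two archived points, which also pins down which points MGA's ``arbitrary'' deletion is allowed to remove. The remaining obstacle is the level $k+1$ case, where box index vectors of Pareto optima can be strictly comparable rather than merely equal-or-incomparable: there I would rerun the argument with the explicit family of Lemma~\ref{lem:different-coarseness-properties}(2), checking that no strictly dominated box vector can survive among at most $(\ell+1)/2$ points, and verify the bookkeeping that the archive fills to size $L$ before it could settle at a level coarser than $b^\ast$ — immediate, since every coarser level has fewer than $L$ pairwise incomparable box index vectors.
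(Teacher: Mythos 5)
Your overall architecture matches the paper's: reduce to $A_t\subset P$ via Lemma~\ref{lem:approach-Pareto-current-solution-plus-pareto}, use Lemma~\ref{lem:different-coarseness-properties} to control the combinatorics of box index vectors, and let the cover time of the random walk from Theorem~\ref{thm:spreading-Pareto-front-Local} drive the spreading. However, your third step contains a genuine error that breaks the argument: you claim that a fresh-fitness candidate $c$ whose $\LO$-value falls into an \emph{occupied} minimal-level block ``becomes the unique one weakly dominated by an element of $A_t$ and is rejected.'' This inverts the actual MGA behaviour. At a decision level $b\le k$, comparability of box index vectors of Pareto optima means equality (Lemma~\ref{lem:different-coarseness-properties}~(1)), hence mutual weak domination. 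If the colliding pair lies inside $A_t$, then $c$ is not the \emph{only} weakly dominated vector and is accepted; if the colliding pair is $\{c,x\}$ with identical box vectors, this is exactly the tie situation for which the paper stipulates that $c$ is always accepted (with $x$ evicted). The paper's proof rests precisely on the observation that the candidate is \emph{always} accepted whenever the decision is made at a level $\le k$, which is what keeps the current solution's trajectory identical to the unobstructed walk of Theorem~\ref{thm:spreading-Pareto-front-Local} and makes the $O(n^3)$ cover time applicable.

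Under your acceptance rule the walk is censored: the current solution could not step to any Pareto optimum whose block is already represented --- in particular it could typically not even move to the neighbour inside its own block --- so your repeated ``traversals of $P$'' never take place and the claim that ``each traversal forces an acceptance'' has no foundation. A secondary inaccuracy is that you identify the MGA's decision level with $\beta(A_t)$, the minimal collision level of the archive alone, whereas the algorithm uses the minimal collision level of $A_t\cup\{c\}$, which can be strictly smaller (e.g.\ when $c$ and $s$ share a length-$2$ block); with the correct acceptance rule this only changes which point is evicted, but within your framework it further undermines the monotonicity claims for $\beta(A_t)$ and the occupied-block count. To repair the proof, replace your acceptance dichotomy by ``$c$ is always accepted at decision levels $\le k$ (and at level $k+1$ whenever some archived box vector is weakly dominated),'' after which the rest of your bookkeeping, and the paper's short argument, go through.
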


\begin{proof}
With Lemma~\ref{lem:approach-Pareto-current-solution-plus-pareto} we can assume that $A_t \subset P$. Then, Lemma~\ref{lem:different-coarseness-properties}~(1) shows that the current solution $s$ is always added to $A_t$ if Algorithm~\ref{alg:PAES25} makes a decision at coarseness level $j$. It is also added if Algorithm~\ref{alg:PAES25} makes a decision at coarseness level $k+1$ and there is a solution $x \in A_t$ whose box index vector is weakly dominated by the box index vector of a solution from $A_t \cup \{c\}$ at coarseness level $k+1$. Here $c$ is the candidate solution created from $s$ by mutation. As long as one of these two happens, the current solution $s$ performs the same random walk as in the proof of Theorem~\ref{thm:spreading-Pareto-front-standard}. Suppose that $L \leq (\ell+1)/2$. Note that there are at least $L$ many search points with pairwise incomparable box index vectors at coarseness level $k+1$ by Lemma~\ref{lem:spreading-Pareto-front-1-Bit-Mutation-MGA}. These are found by the random walk after expected $O(n^3)$ iterations since it is sufficient that the current solution visits all Pareto-optimal points. The same happens if $(\ell+1)/2 < L \leq \ell$ with coarseness level $k$ instead of $k+1$ and if $2^{j-1} \ell < L \leq 2^j \ell$ with coarseness level $k-j$ instead of $k+1$ for $j \in [k-1]$. This concludes the proof.
\end{proof}

\section{Runtime Analysis of \PAES with Standard Bit Mutation on \LOTZ}

In this section, we investigate the performance of \PAES on the bi-objective \LOTZ benchmark using standard bitwise mutation. The experimental results reported in~\cite{Knowles2025} indicate that \PAES with standard bit mutation requires more time to discover a constant fraction of Pareto-front than \PAES with one-bit mutation. This slower progress arises because the current solution may not remain Pareto-optimal even after a point on the Pareto front has been found. For instance, a two-bit flip can transform a Pareto-optimal solution into a new, non-dominated one (with respect to the archive $A_t$) by simultaneously increasing the \LO-value and decreasing the \TZ-value by more than one. As a result, the search dynamics under standard bit mutation are more complex than under one-bit mutation, and the resulting random walk may involve non-Pareto-optimal solutions. Rather than analyzing this random walk in detail, we instead focus on the hypervolume $\hv(A_t)$ of the archive $A_t$. We show that $\hv(A_t)$ cannot decrease in any iteration of \PAES with respect to any reference point $h$ satisfying $h_i \leq 0$ for each $i \in [m]$. To maintain generality, we present our result for an arbitrary fitness function $f:\{0,1\}^n \to \mathbb{N}_0^m$. 

\begin{lemma}
\label{lem:hypervolume-not-decrease}
Let $f:\{0,1\}^n \to \mathbb{N}_0^m$ be a fitness function and $S$ be a maximum set of mutually incomparable solutions with respect to $f$. Let $t$ be any iteration of \PAES with current archive $A_t$ with size of at least $|S|$ on $f$. Then $\hv(A_{t+1}) \geq \hv(A_t)$ with respect to any reference point $h$ with $h_i \leq 0$ for all $i \in [m]$. Particularly, if the candidate solution $c$ dominates an $x \in A_t$, $\hv(A_{t+1}) > \hv(A_t)$. 
\end{lemma}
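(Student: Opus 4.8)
The plan is to analyze the three possible cases for what happens in an iteration of \PAES, according to the branches of Algorithm~\ref{alg:PAES25}, and to verify in each that the hypervolume of the archive cannot drop. The crucial structural fact to exploit is that the archive always consists of pairwise incomparable solutions, and that the archive size is at least $|S|$, the size of a maximum mutually incomparable set; this guarantees that whenever the candidate $c$ is incomparable to all of $A_t$, the archive cannot already be ``saturated'' in the sense relevant to the dominating branch. Throughout I would use the monotonicity of the Lebesgue measure: if $R \subseteq R'$ are the dominated regions in $\mathbb{R}^m$ corresponding to two archives, then $\mathcal{L}(R) \leq \mathcal{L}(R')$.

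First, consider the branch where $c$ weakly dominates some element of $A_t$. Then $A_{t+1} = (A_t \setminus D) \cup \{c\}$ where $D$ is the set of points of $A_t$ weakly dominated by $c$. The region dominated by $A_{t+1}$ contains the region dominated by $c$, which in turn (since $h_i \leq 0 \leq f_i(x)$ for every $x$ and every coordinate, so the box of each $x \in D$ is nonempty and entirely contained in the box of $c$) contains the union of the boxes of all points in $D$. Hence the region dominated by $A_{t+1}$ contains the region dominated by $A_t$, giving $\hv(A_{t+1}) \geq \hv(A_t)$. If moreover $c$ strictly dominates some $x \in A_t$, then the box of $c$ strictly exceeds that of $x$ in at least one coordinate while $h_i \leq 0$ ensures positive ``thickness'' in the others, so the dominated region of $A_{t+1}$ strictly contains that of $A_t$ and $\hv(A_{t+1}) > \hv(A_t)$; this is the ``particularly'' clause. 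Second, if $c$ is dominated by a member of the archive, then $A_{t+1} = A_t$ and there is nothing to prove. Third, suppose $c$ is incomparable to every point of $A_t$. If $|A_t| < L$ then $A_{t+1} = A_t \cup \{c\} \supseteq A_t$ and the dominated region only grows, so $\hv$ does not decrease. If $|A_t| = L$, then either the archiver rejects $c$, in which case $A_{t+1} = A_t$ and we are done, or it accepts $c$ and removes some $y \in A_t$, so $A_{t+1} = (A_t \setminus \{y\}) \cup \{c\}$.

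The main obstacle is precisely this last sub-case, the archiver-swap $A_{t+1} = (A_t \setminus \{y\}) \cup \{c\}$, since here a point is removed that $c$ need not dominate. The key observation is that for \emph{each} of the three archivers considered (AGA, HVA, MGA), the candidate $c$ is only accepted-with-removal when the removal does not decrease hypervolume — but proving this uniformly is delicate, so instead I would argue via the hypothesis $|A_t| \geq |S|$ combined with incomparability. Actually, I expect the cleaner route is: since $A_t$ is a set of $L \geq |S|$ pairwise incomparable solutions and $S$ is a \emph{maximum} such set, we must have $L = |S|$ here (a set of incomparable solutions cannot exceed $|S|$), and $c$ is incomparable to all of $A_t$; but then $A_t \cup \{c\}$ would be an incomparable set of size $|S|+1$, a contradiction — unless $c$ has the same fitness as some point of $A_t$, which is excluded in the incomparable-branch by the weak-domination check, or unless $f(c) = f(y)$ for the removed $y$. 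So the only way the archiver reaches Line~\ref{line:algo-16} is when there is no room, i.e.\ this branch is effectively never taken under the stated size assumption, OR $c$ and some archive point share a fitness value handled by the tie-breaking convention (in which case removing that point and inserting $c$ leaves the dominated region, hence $\hv$, unchanged). I would therefore split the final sub-case into: (a) $c$ incomparable to all of $A_t$ and $f(c) \notin f(A_t)$, which is impossible when $|A_t| = L \geq |S|$; and (b) $f(c) = f(y)$ for the swapped-out $y$, where the dominated regions of $A_t$ and $A_{t+1}$ coincide. Combining all cases yields $\hv(A_{t+1}) \geq \hv(A_t)$, with strict inequality when $c$ dominates some archive member, as claimed.
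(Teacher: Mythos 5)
Your proposal is correct in structure and takes essentially the same route as the paper: the load\-/bearing observation in both is that, because $A_t$ is always a set of pairwise incomparable solutions and $|A_t|\geq |S|$ with $S$ maximum, a candidate $c$ that is incomparable to all of $A_t$ cannot occur once the archive is full, so the archiver\-/swap branch (the only branch that could remove a point not weakly dominated by $c$) is never executed, and every removed point has its dominated box contained in that of $c$. Your detour through the sub\-/case ``$f(c)=f(y)$ for the swapped\-/out $y$'' is vacuous --- equal fitness places you in the weak\-/domination branch, not the incomparable branch --- but harmless. The one loose step is the strict inequality: you compare the box of $c$ only against the box of the dominated $x$ and conclude strict containment of the dominated regions, hence $\hv(A_{t+1})>\hv(A_t)$; but other archive members could in principle cover the difference of those two boxes, and strict set containment alone does not yield strictly larger Lebesgue measure. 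The paper closes this by noting that no $y\in A_t$ weakly dominates $c$ (otherwise $y$ would weakly dominate the strictly dominated $x$, contradicting the pairwise incomparability of $A_t$; the case $f(c)\in f(A_t)$ is handled separately with $\hv$ unchanged), so a small full\-/dimensional box with upper corner $f(c)$ is covered by $c$ and by no member of $A_t$. You should add that one line; with it, your argument matches the paper's.
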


\begin{proof}
Note that a search point will only be removed from $A_t$ if a candidate solution $c$ is generated that is added to $A_t$. If $A_t$ is full and the candidate solution $c$ is added to $A_t$, $c$ weakly dominates an $x \in A_t$ (since $|A_t| \geq |S|$). If there is $x \in A_t$ with $f(x)=f(c)$ then $\hv(A_{t+1})=\hv(A_t)$. Suppose not and let $B \subset A_t$ be the set of search points dominated by $c$. Then $\hv(A_{t+1}) = \hv((A_t \setminus B) \cup \{c\}) = \hv(A_t \cup \{c\})$. Let $v:=f(c)$. Since any search point $z$ with $v_i-1/2 \leq f_i(z) \leq v_i$ for every $i \in [m]$ is weakly dominated by $c$, but not weakly dominated by any $y \in A_t$, we have that $\hv(A_t \cup \{c\}) > \hv(A_t)$. 
\end{proof}

Using the maximum possible hypervolume of an archive $A_t$ with respect to the reference point $h=(-1,-1)$ which occurs when $A_t=P$, we can estimate the expected number of iterations required for the current solution to reach the Pareto front when \PAES optimizes \LOTZ.   

\begin{lemma}
\label{lem:Reach-Pareto-First-Time}
Consider \PAES with standard bit mutation optimizing $m$-\LOTZ for $m = 2$. Then for every possible $A_t$ the current solution $s$ is Pareto-optimal in expected $O(n^3)$ iterations. 
\end{lemma}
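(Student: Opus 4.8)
The plan is to use the archive's hypervolume as a monotone, bounded progress measure rather than to analyse the potential $\LO(s)+\TZ(s)$ of the current solution directly. That potential is \emph{not} monotone under standard bit mutation: one two-bit flip can turn a current solution with $\LO(s)=i$, $\TZ(s)=j$ into an incomparable one with $\LO=i+1$, $\TZ=0$, dropping $\LO(s)+\TZ(s)$ by $j-1=\Theta(n)$, and handling this non-monotonicity is the main obstacle. The device that circumvents it is the invariant that the current solution $s_t$ always lies in $A_t$: this holds for $A_0=\{s_0\}$ and is preserved by every branch of Algorithm~\ref{alg:PAES25}, since whenever $s$ is replaced by $c$ the point $c$ is simultaneously inserted into the archive, whenever $c$ is discarded neither $s$ nor $A_t$ changes, and in the branch deleting points weakly dominated by $c$ the point $c$ itself is re-added.

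Next I would single out one ``good'' mutation. Fix an iteration $t$ at which $s_t$ is not Pareto-optimal and set $i:=\LO(s_t)<n$. Then bit $i+1$ of $s_t$ is a zero, and since $\LO(s_t)+\TZ(s_t)<n$ with bit $i+1$ a zero and bit $n-\TZ(s_t)$ a one being distinct positions, in fact $\LO(s_t)+\TZ(s_t)\le n-2$ and bit $i+1$ lies outside the trailing-zero block. If mutation flips exactly this bit, an event of probability $(1/n)(1-1/n)^{n-1}\ge 1/(en)$, the candidate $c$ satisfies $\LO(c)\ge i+1$ and $\TZ(c)=\TZ(s_t)$, so $c$ strictly dominates $s_t\in A_t$. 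Hence Algorithm~\ref{alg:PAES25} takes its first branch, sets $s_{t+1}=c$, and $\hv(A_{t+1})=\hv(A_t\cup\{c\})$, which is strictly larger than $\hv(A_t)$: if $y\in A_t$ weakly dominated $c$ then by transitivity $y$ would strictly dominate $s_t$, contradicting that $A_t$ is a set of mutually incomparable solutions containing $s_t$. As $\hv$ is integer-valued with respect to $h=(-1,-1)$, every such good step raises $\hv(A_t)$ by at least $1$.

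Finally I would assemble the pieces. By Lemma~\ref{lem:hypervolume-not-decrease} (whose hypothesis, archive size at least $|S|=n+1$, is in force here) the sequence $(\hv(A_t))_{t\ge0}$ is non-decreasing, and since every search point is weakly dominated by a Pareto-optimal one, $\hv(A_t)\le\hv(A_t\cup P)=\hv(P)=(n+1)(n+2)/2$ by Lemma~\ref{lem:hypervolume-basic-properties}. Letting $\tau$ be the first iteration at which the current solution is Pareto-optimal, it follows that at most $M:=(n+1)(n+2)/2=O(n^2)$ good steps occur before $\tau$, deterministically, while every iteration $t<\tau$ is a good step with conditional probability at least $p:=1/(en)$. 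Coupling with a negative-binomial waiting time — append i.i.d.\ $p$-coins after $\tau$ and note that the $(M{+}1)$-st success cannot fall before $\tau$, so $\tau$ is bounded by the time of that success — yields $\E(\tau)\le(M+1)/p=O(n^3)$, as claimed. (A full archive would consist of $n+1$ pairwise incomparable solutions and thus equal $P$, so the current solution would already be Pareto-optimal; hence the archiver plays no role before $\tau$.)
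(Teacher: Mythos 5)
Your proposal is correct and follows essentially the same route as the paper's proof: a single improving bit-flip (probability at least $1/(en)$) yields a candidate strictly dominating the current solution, which is therefore accepted, and the hypervolume of the archive serves as a non-decreasing, integer-valued potential bounded by $\hv(P)=O(n^2)$ via Lemmas~\ref{lem:hypervolume-not-decrease} and~\ref{lem:hypervolume-basic-properties}, giving $O(n^3)$ expected iterations. The only differences are presentational: you make explicit the invariant $s_t\in A_t$ and the waiting-time coupling, both of which the paper's shorter proof leaves implicit.
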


\begin{proof}
  Suppose that $c \notin P$. By flipping one specific bit to increase the $\LO$ or $\TZ$ value of $s$ while the other bits remain unchanged (happening with probability at least $1/(en)$) one can create a solution $c$ which (strictly) dominates $s$. Since $A_t$ contains no solution dominating $s$, and dominance is transitive, $c$ is not weakly dominated by any solution in $A_t$, and is therefore added to $A_t$ which becomes the current solution for the next iteration. By Lemma~\ref{lem:hypervolume-not-decrease} the hypervolume $\hv(A_t)$ increases. By Lemma~\ref{lem:hypervolume-basic-properties} we obtain (for $a=0$ and $b=n$)
  \begin{align*}
  \hv(P) &= (n+1)(n+1)-\frac{n(n+1)}{2} = \frac{(n+2)(n+1)}{2} = O(n^2)
  \end{align*}
  and we see that $\hv(A_t)$ cannot decrease by Lemma~\ref{lem:hypervolume-not-decrease}. Hence, the expected number of iterations until the current solution is Pareto-optimal is $O(n^3)$.
\end{proof}

We will also show that, in $O(n^3)$ expected time, a new Pareto-optimal search point is added to the archive, even when some Pareto-optimum already exists in $A_t$. However, establishing this result requires a different proof technique than used in Lemma~\ref{lem:Reach-Pareto-First-Time}, as the current solution may be also Pareto-optimal. This insight is key to deriving the overall runtime of \PAES on \LOTZ as shown in the following theorem.

\begin{theorem}
\label{thm:spreading-Pareto-front-standard}
Consider \PAES with standard bit mutation and archive size at least $n+1$ optimizing \mLOTZ for $m = 2$. Then the expected number of iterations until some Pareto-optimal search point $x$ with $x \notin A_t$ is added to $A_t$ is $O(n^3)$. This implies that the expected number of iterations until $A_t=P$ is at most $O(n^4)$.
\end{theorem}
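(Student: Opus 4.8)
The plan is to first prove the main assertion of the theorem in a form that holds from an \emph{arbitrary} state — namely, that whenever $A_t\neq P$ a Pareto-optimal point whose fitness value is not yet represented in $A_t$ is added within $O(n^3)$ expected iterations — and then to obtain the bound on the time to reach $A_t=P$ by iterating this $n+1$ times. For the reduction I would note that, since the archive size is at least $n+1=|S|$ (Lemma~\ref{lem:fitnessvectors-non-dom-LOTZ}), the archiver branch of Algorithm~\ref{alg:PAES25} is never executed: an archive of $n+1$ pairwise incomparable points together with an incomparable candidate would give $n+2$ incomparable search points, which is impossible. Moreover a Pareto-optimal candidate is incomparable to every Pareto-optimal point already in $A_t$ (they differ only in one coordinate), and a non-Pareto-optimal candidate with the same $\LO$-value as some $x\in A_t$ is dominated by the Pareto optimum $1^{\LO(x)}0^{n-\LO(x)}$ if the latter is present. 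Hence the set $R_t\subseteq\{0,\dots,n\}$ of $\LO$-values of Pareto-optimal points in $A_t$ is non-decreasing over time, and $R_t=\{0,\dots,n\}$ exactly when $A_t=P$ (because $|A_t|\le n+1=|P|$). Chaining $n+1$ applications of the main assertion, each valid from an arbitrary reachable state, via linearity of expectation then gives the $O(n^4)$ bound.

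For the main assertion I would argue in two stages. First, by Lemma~\ref{lem:Reach-Pareto-First-Time} the current solution $s$ becomes Pareto-optimal within $O(n^3)$ expected iterations (and if a new Pareto-optimal point appears in the archive before that, we are already done); write $s=1^\ell0^{n-\ell}$ and let $[\ell_-,\ell_+]\subseteq R_t$ be the maximal block of covered $\LO$-values containing $\ell$. Since $A_t\neq P$, at least one of $\ell_--1,\ell_++1$ is a legal, currently uncovered $\LO$-value. Second, as in the proof of Theorem~\ref{thm:spreading-Pareto-front-Local}, while $s$ stays Pareto-optimal the value $\LO(s)$ performs a lazy $\pm1$ random walk: flipping exactly the critical bit (probability at least $1/(en)$) changes $\LO(s)$ by $\pm1$, and when $\LO(s)$ sits at an end of its block such a step produces $1^{\ell'}0^{n-\ell'}$ with $\ell'=\ell\pm1$ an uncovered value; this point cannot be weakly dominated by anything in $A_t$ (no Pareto optimum with $\LO$-value $\ell'$ is present, and a non-Pareto-optimal point has $\LO+\TZ<n$ so cannot weakly dominate a Pareto optimum), hence it is accepted — a new Pareto-optimal point. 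Thus it suffices to bound the time for the walk to reach that absorbing boundary value, which by a gambler's-ruin estimate on a path of length at most $n$ takes $O(n^2)$ actual moves, i.e.\ $O(n^3)$ iterations since a move happens with probability $\Omega(1/n)$ per iteration.

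The main obstacle, absent in the one-bit case, is that $s$ need not stay on the Pareto front: a two-bit flip can turn $1^\ell0^{n-\ell}$ into a non-dominated non-Pareto-optimal point, moving the walk off the front. I would handle this by controlling the total off-front time and then combining the pieces. For the off-front time: whenever $s$ is non-Pareto-optimal we have $\LO(s)+\TZ(s)<n$, and flipping exactly the first $0$-bit of $s$ (probability at least $1/(en)$) yields a candidate strictly dominating $s\in A_t$, which is accepted and, by Lemma~\ref{lem:hypervolume-not-decrease}, strictly increases $\hv(A_t)$; since $\hv(A_t)$ is non-decreasing (Lemma~\ref{lem:hypervolume-not-decrease}), integer-valued and at most $\hv(P)=(n+1)(n+2)/2$ (Lemma~\ref{lem:hypervolume-basic-properties}), for each hypervolume value the number of non-Pareto-optimal iterations at that value is stochastically dominated by a geometric variable of mean $en$, so summing over the $O(n^2)$ possible values bounds the total number of non-Pareto-optimal iterations, starting from any time, by $O(n^3)$ in expectation. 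Finally, a Markov-inequality argument shows that within a window of $Cn^3$ iterations at least three quarters of the steps are on-front with constant probability, during which the (possibly interrupted) walk reaches an uncovered boundary value with constant probability; whenever this fails we are again in a state with $A_t\neq P$ and restart, incurring only a constant-factor overhead. This establishes the $O(n^3)$ bound, from which the $O(n^4)$ bound for $A_t=P$ follows as described above. I expect the delicate part to be making the interruption/restart argument rigorous — in particular arguing that off-front excursions, which can be as likely as $\Omega(1/n)$ per step near a block boundary, only reposition the walk locally (so that after each excursion we are back in an on-front state from which the same analysis applies) rather than derailing it entirely.
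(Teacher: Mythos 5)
Your overall skeleton --- reach the Pareto front via the hypervolume argument of Lemma~\ref{lem:Reach-Pareto-First-Time}, then let $\LO(s)$ perform a random walk across the block of covered $\LO$-values until it lands on a hole, then chain $n+1$ such phases --- is the same as the paper's, and your observations that the archiver branch is never invoked and that the set of covered Pareto-optimal fitness values is non-decreasing are correct. But there are two genuine gaps, and they sit exactly where the paper does its real work. First, you treat off-front excursions as something that can occur at any time and try to control their \emph{total duration} globally via hypervolume increments. The paper instead makes the structural observation that excursions essentially cannot start inside the covered block: as long as $b^{\text{left}}+1 \le \LO(s) \le b^{\text{right}}-1$, every non-Pareto-optimal candidate $c$ has a covered $\LO$-value and is therefore strictly dominated by the Pareto optimum $1^{\LO(c)}0^{n-\LO(c)} \in A_t$, hence rejected; so $s$ stays Pareto-optimal until the very step in which $\LO(s)$ exits the interval. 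Without this, your argument hinges on the claim that excursions ``only reposition the walk locally'' so that a Markov-inequality count of on-front steps suffices --- precisely the step you flag as delicate, and it is not justified: an off-front trajectory is governed by dominance against the whole archive and can return to the front at an essentially arbitrary $\LO$-value, so the interrupted process is no longer the walk whose cover/hitting time you analyzed. Bounding the total off-front time by $O(n^3)$ does not by itself yield a constant success probability per window.

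Second, the on-front walk is not a lazy $\pm 1$ walk: standard bit mutation changes $\LO(s)$ by $j$ while staying Pareto-optimal with probability $\Theta(n^{-j})$ for every $j$. This matters in two places. (i) Your gambler's-ruin estimate needs the jumps truncated or replaced by a variance argument; the paper declares changes of size $\ge 4$ a failure (negligible over $O(n^3)$ iterations by a union bound) and applies Theorem~\ref{thm:unbiased-random-walk} to the symmetric walk with steps in $\{-3,\dots,3\}$, handling the boundary effect separately by demanding three consecutive ``profitable'' fitness-changing steps once within distance three of the block end. (ii) The hole is not absorbing: from the end of the block the walk can jump \emph{over} the uncovered value $\ell_+ + 1$ onto an already covered one, or exit onto a non-Pareto-optimal point, and then no new Pareto-optimal fitness value is gained. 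The paper closes this by showing that, conditioned on $\LO(s)$ exiting $[b^{\text{left}}+1, b^{\text{right}}-1]$, the candidate equals the nearest missing Pareto optimum with probability $\Omega(1)$ (a specific $j$-bit flip of probability at least $1/(en^j)$ against an exit probability of at most $2/n^j$), and restarts the entire phase --- including re-approaching the front --- on the complementary event, at constant expected overhead. Your proposal needs both of these repairs before the $O(n^3)$ bound per phase, and hence the $O(n^4)$ total, is established.
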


\begin{proof}
    By Lemma~\ref{lem:Reach-Pareto-First-Time} the expected number of iterations until $s \in P$ is $O(n^3)$. Now suppose that $s \in P$ and not the complete set $P$ of Pareto-optimal individuals is contained in $A_t$. Consider $b^{\text{left}}:=b^{\text{left}}(s):=\sup\{\ell \in \{0, \ldots , n-1\} \mid \ell < \LO(s) \text{ and there is no }x \in A_t \cap P \text{ with } \LO(x)=\ell\}$ and $b^{\text{right}}(s):=b^{\text{right}}:=\inf\{\ell \in \{1, \ldots , n\} \mid \LO(s) < \ell \text{ and there is no }x \in A_t \cap P \text{ with } \LO(x)=\ell\}$. Then $b^{\text{left}}$ denotes the largest $\LO$ value among the individuals from $P \setminus A_t$ that have a smaller $\LO$ value than $s$. If all such individuals with a smaller $\LO$ value than $s$ are already present in $A_t$, we have $b^{\text{left}} = -\infty$.
    
\begin{figure}
    \centering
    \includegraphics[scale=0.6]{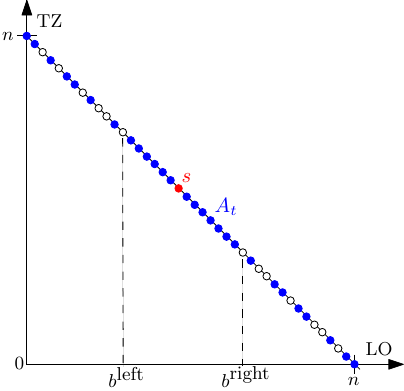}
    \caption{Illustration of $b^{\text{left}}$ and $b^{\text{right}}$ for the case $b^{\text{left}} \neq -\infty$ and $b^{\text{right}} \neq \infty$. The current solution $s$ quite in the middle is depicted in red and the other Pareto-optimal points from the archive $A_t$ are represented by the remaining filled points in blue. The white points with black outlines are the holes.}
    \label{fig:borders}
\end{figure}
    
    Similarly, $b^{\text{right}}$ denotes the smallest $\LO$ value among the individuals from $P \setminus A_t$ that have a larger $\LO$ value than $s$ (see also Figure~\ref{fig:borders}). If there is no such individual with a larger $\LO$ value than $s$, then $b^{\text{right}} = \infty$. As long as $b^{\text{left}}+1 \leq \LO(s) \leq b^{\text{right}}-1$, the current solution $s$ remains Pareto-optimal (since a solution $y$ with $b^{\text{left}}+1 \leq \LO(y) \leq b^{\text{right}}-1$ is dominated by the Pareto-optimal $z \in P_t$ with $\LO(z)=\LO(y)$) and the $\LO(s)$-value of the current solution $s$ performs a random walk on the discrete interval $[b^{\text{left}}+1,b^{\text{right}}-1] \cap \{0, \ldots , n\}$. 
    
    Since $s$ remains Pareto-optimal during this process, it is updated in each iteration when the candidate solution $c$ is also Pareto-optimal. Note also that $b^{\text{left}} \neq -\infty$ or $b^{\text{right}} \neq \infty$ since $A_t \neq P$.
    
    Denote an iteration as a \emph{failure} if one of the following two conditions is met. 
    \begin{itemize}
        \item[(1)]
        The value of $\LO(s)$ changes by at least four.
        \item[(2)]
        The value $\LO(s)$ exits $I:=[b^{\text{left}} + 1, b^{\text{right}} - 1]$, without resulting in a Pareto-optimal individual $x$ with $x \notin A_t$.
    \end{itemize}
    In the following we show that in $O(n^3)$ iterations in expectation a failure occurs or a Pareto-optimal search point $x$ with $x \notin A_t$ is added to $A_t$. To this end, it is sufficient to show that in expected $O(n^3)$ iterations the value $\LO(s)$ exits $I$ or a failure occurs. The aim is to apply Theorem~\ref{thm:unbiased-random-walk}. However, there may be a negative drift away from the borders of $I$ as the example $b^{\text{right}} = \infty$, $b^{\text{left}}=0$ and $s=10^{n-1}$ shows: Creating $c=0^n$ happens with the same probability as creating $c=1^20^{n-2}$ and we have a positive probability of creating $c=1^j 0^{n-j}$ for $j \geq 3$. Hence, this theorem is not straightforwardly applicable. However, this negative drift vanishes, if no failure occurs and $s \in [b^{\text{left}} +4,b^{\text{right}} - 4]$, allowing the theorem to apply for proving the following claim. 

    \begin{claim}
    \label{cl:exit-smaller-interval}
    After $O(n^3)$ iterations in expectation a failure occurs, or $\LO(s) \notin [b^{\text{left}} +4,b^{\text{right}} - 4]$ is satisfied for every possible $b^{\text{left}}<b^{\text{right}}$ with $b^{\text{left}} \neq -\infty$ or $b^{\text{right}} \neq \infty$, and for every possible Pareto-optimal starting point $s_0$ with $\LO(s_0) \in [b^{\text{left}} +4,b^{\text{right}} - 4]$.
    \end{claim}
    
    \begin{proofofclaim}
    We may assume that no failure occurs, as any failure would only accelerate the achievement of the goal outlined in this claim. Without loss of generality assume that $\LO(s) \in [b^{\text{left}} +4,b^{\text{right}} - 4]$ for all $0 \leq t \leq T$ where $T:=\inf\{t \geq 0 \mid \LO(s) \notin [b^{\text{left}} +4,b^{\text{right}} - 4]\}$. For $j \in \{-3, \ldots , 3\}$ denote by $p_j$ the probability that $\LO(s)$ changes by $j$ (which also means that $s$ stays Pareto-optimal). Then $p_{-j} = p_j$ and $p_j = \Theta(1/n^j)$ since it is required to flip $j$ specific bits. Let $X_t:=\LO(s)$ where $s$ denotes the current solution in iteration $t$. We consider three cases.

    \textbf{Case 1:} Suppose that $b^{\text{right}} \neq \infty$ and $b^{\text{left}} \neq -\infty$ and let $d:=(b^{\text{right}}+b^{\text{left}})/2$. Let $Y_t:=|X_t-d|$. Then we have
    \begin{align*}
    &\E(Y_{t+1}-Y_t \mid X_0, \ldots , X_t) = \E(|X_{t+1}-d|-|X_t-d| \mid X_0, \ldots , X_t).\\
    \intertext{If $X_t \geq d$ we obtain}
    &= \E(|X_{t+1}-d|-X_t+d \mid X_0, \ldots , X_t)\\
    &\geq \E(X_{t+1}-d-X_t+d \mid X_0, \ldots , X_t) \\
    &= \E(X_{t+1}-X_t \mid X_0, \ldots , X_t)  = \sum_{j=-3}^3 j \cdot p_j = 0.\\
    \intertext{On the other hand, if $X_t < d$,}
    & \E(|X_{t+1}-d|-|X_t-d| \mid X_0, \ldots , X_t) \\ 
    &\geq \E(|X_{t+1}-d| +X_t-d \mid X_0, \ldots , X_t)\\
    & \geq \E(-X_{t+1}+d +X_t-d \mid X_0, \ldots , X_t) \\
    &= \E(X_t-X_{t+1} \mid X_0, \ldots , X_t) = 0.
    \end{align*}
    Further, 
    \begin{align*}
    \text{Var}(Y_{t+1}-Y_t \mid Y_0, \ldots , Y_t) \geq \E((Y_{t+1}-Y_t)^2 \mid Y_0, \ldots , Y_t) \geq \delta
    \end{align*}
    for $\delta:= 1/(en)$ since flipping a specific bit (the first zero from the left if $X_t \geq d$ to increase $X_t$ and otherwise the first one from the right to decrease $X_t$) changes $Y_t$ by one. This happens with probability at least $(1-1/n)^{n-1} \cdot 1/n \geq 1/(en)$. So both conditions of Theorem~\ref{thm:unbiased-random-walk} are satisfied and we obtain $\E(T') \leq n^2/\delta = O(n^3)$ for 
    $$T':=\inf \left\{t \geq 0 \mid Y_t > \frac{b^{\text{right}}-b^{\text{left}}}{2} - 4 \right\}:$$
    We see that $b^{\text{left}}+4 \leq X_t \leq b^{\text{right}}-4$ iff $(b^{\text{left}}-b^{\text{right}})/2+4 \leq X_t - d \leq (b^{\text{right}}-b^{\text{left}})/2-4$ iff $Y_t \leq (b^{\text{right}}-b^{\text{left}})/2-4$. This also yields $T=T'$.  

    \textbf{Case 2:} Suppose that $b^{\text{right}} \neq \infty$ and $b^{\text{left}} = -\infty$. Then we have $T= \inf\{t \geq 0 \mid X_t > b^{\text{right}} - 4\}$. If $X_t \geq 3$ we obtain
    \begin{align*}
    \E(X_{t+1}-X_t \mid X_0, \ldots , X_t) = \sum_{j=-3}^3 j \cdot p_j = 0
    \end{align*}
    since $3 \leq X_t \leq b^{\text{right}}-4$. Further, if $X_t = j$ for $j \in \{0,1,2\}$, then $X_t$ cannot decrease by more than $j$ and hence, we obtain
    \begin{align*}
    \E(X_{t+1}-X_t \mid X_0, \ldots , X_t) = \sum_{i=-j}^j i \cdot p_i + \sum_{i=j+1}^3 i \cdot p_i =  \sum_{i=j+1}^3 i \cdot p_i > 0.
    \end{align*}
    As above, $\delta:=\text{Var}(X_{t+1}-X_t \mid X_0, \ldots , X_t) = \Omega(1/n)$. So with Theorem~\ref{thm:unbiased-random-walk} we obtain $\E(T) \leq n^2/\delta = O(n^3)$
    since $b^{\text{right}}-4 \in O(n)$.

    \textbf{Case 3:} Suppose that $b^{\text{right}} = \infty$ and $b^{\text{left}} \neq -\infty$. Let $Y_t:=n-X_t$. Then $T=\inf\{t \geq 0 \mid X_t < b^{\text{left}}-4\} = \inf\{t \geq 0 \mid Y_t > n-b^{\text{left}}+4\}$, and similar as in Case~2, $\E(Y_{t+1}-Y_t \mid Y_0, \ldots , Y_t) \geq 0$ and $\text{Var}(Y_{t+1}-Y_t \mid Y_0, \ldots , Y_t) = \Omega(1/n)$. Therefore, $\E(T) = O(n^3)$ with Theorem~\ref{thm:unbiased-random-walk}. All the above cases conclude the proof of Claim~\ref{cl:exit-smaller-interval}.
    \end{proofofclaim}

   We show in the next claim that $\LO(s)$ even exits $I:=[b^{\text{left}} +1,b^{\text{right}} - 1]$ in expected $O(n^3)$ iterations if no failure occurs. This is because, if $\LO(s) \notin [b^{\text{left}} +4,b^{\text{right}} - 4]$, then $\LO(s)$ only needs to be changed by three specific one-bit flips in the next three iterations when the fitness of $s$ changes. %which happens with probability $\Omega(1)$ in $O(n)$ iterations, as with probability $\Omega(1)$, a specific one-bit flip occurs whenever the fitness of $s_t \in [b^{\text{left}} +1,b^{\text{right}} - 1]$ changes. 
    
    \begin{claim}
    \label{cl:exit-larger-interval}
    After $O(n^3)$ iterations in expectation a failure occurs or $\LO(s) \notin [b^{\text{left}} +1,b^{\text{right}} - 1]$ is satisfied for every possible $b^{\text{left}}<b^{\text{right}}$ with $b^{\text{left}} \neq -\infty$ or $b^{\text{right}} \neq \infty$, and for every possible starting point $s_0$ with $\LO(s_0) \in [b^{\text{left}} +1,b^{\text{right}} - 1]$.
    \end{claim}

    \begin{proofofclaim}
    As in the proof of the claim before, we may assume that no failure occurs. Denote an iteration as \emph{profitable} if a candidate solution $c$ is created with $\LO(c) = \LO(s)+1$ if $\LO(s) > b^{\text{right}} - 4$ or with $\LO(c)=\LO(s)-1$ if $\LO(s) < b^{\text{left}} + 4$. Note that if $\LO(s) \notin [b^{\text{left}} +4,b^{\text{right}} - 4]$, then also $\LO(c) \notin [b^{\text{left}} +4,b^{\text{right}} - 4]$ if $t$ is profitable.
    With Claim~\ref{cl:exit-smaller-interval} we can assume that $\LO(s) \notin [b^{\text{left}} +4,b^{\text{right}} - 4]$. Suppose that $\LO(s) > b^{\text{right}} - 4$ (i.e. $b^{\text{right}} \neq \infty$). To exit the interval $[b^{\text{left}}+1,b^{\text{right}} - 1]$, it is sufficient that the next $j = b^{\text{right}} - \LO(s) \leq 3$ iterations where the fitness of $s$ changes (i.e. the candidate solution $c$ with $f(c) \neq f(s)$ becomes the current solution for the next iteration) are profitable. In the following we estimate the probability that an iteration where the fitness of $s$ changes is profitable. The probability that the fitness of $s$ changes is at most $2/n$  since it is required to either flip the first zero from the left (to increase $\LO(s)$ but decrease $\TZ(s)$) or the last one from the left (to decrease $\LO(s)$ but increase $\TZ(s)$). The probability to perform a specific one-bit flip is $1/n \cdot (1-1/n)^{n-1} \geq 1/(en)$. Hence, as long as $s \in [b^{\text{left}} +1,b^{\text{right}} - 1]$, an iteration where the fitness changes is profitable with probability at least $1/(en) \cdot 1/(2/n) = 1/(2e) = \Omega(1)$ and hence, the next $j \leq 3$ iterations where the fitness changes are profitable with probability $\Omega(1)$. If this happens, $s$ exits $I$. By symmetry, the same also holds if $\LO(s) < b^{\text{left}} + 4$.
    
    Denote the next $j$ iterations during which the fitness of $s$ changes, as a \emph{trial}, and call the trial \emph{successful} if the interval $[b^{\text{left}} +1,b^{\text{right}} - 1]$ is exited by $s$. Note that the expected number of iterations until $s$ exits $I$ is $O(n)$ if the first trial is successful since the expected waiting time for a fitness change of $s$ is $O(n)$. If the first trial is not successful, we repeat the arguments above (possibly those included in Claim~\ref{cl:exit-smaller-interval}). Since the probability that the first trial is successful is $\Omega(1)$, we see that the expected number of repetitions until a successful trial occurs is $O(1)$. This proves Claim~\ref{cl:exit-larger-interval}. 
    \end{proofofclaim}

    Claim~\ref{cl:exit-larger-interval} also shows that in expected $O(n^3)$ iterations a Pareto-optimal search point $x$ with $x \notin A_t$ is added to $A_t$ or a failure occurs. Now we estimate the probability that within $O(n^3)$ iterations a failure occurs. The probability that $\LO(s)$ changes by at least four if $s$ is Pareto-optimal is $O(1/n^4)$ since it is required to flip a specific block of four ones or four zeros. By a union bound, the probability is $o(1)$ that such a change occurs in $O(n^3)$ iterations. Further, we argue that a new Pareto-optimal solution is created with probability $\Omega(1)$ when $\LO(s)$ exits the interval $[b^{\text{left}}+1,b^{\text{right}}-1]$. Let $j$ be the smallest possible Hamming distance between $s$ and a Pareto-optimal solution $x$ not already in $A_t$. Then the probability that $c=x$ is at least $1/(en^j)$ while $s$ exits the interval with probability at most $2/n^j$ since it is required to either increase $\LO(s)$ by at least $j$ or decrease $\LO(s)$ by at least $j$. Otherwise, $s$ remains Pareto-optimal. Hence, with probability $\Omega(1)$ the candidate solution $c$ is a new Pareto-optimal solution if $s$ exits $I$. This yields that a failure does not occur with probability $\Omega(1)$ by a union bound on both possible failures within the $O(n^3)$ iterations from Claim~\ref{cl:exit-larger-interval} above. If a failure occurs within these $O(n^3)$ iterations we repeat the above arguments, including those that $s$ reaches the Pareto front and then exiting the interval $[b^{\text{left}}-1,b^{\text{right}}+1]$. The expected number of repetitions is $O(1)$ concluding the proof. 
\end{proof}

%The analysis of the influence of the archiver if the archive size $L$ is at most $n$ is more complicated as in the one-bit mutation case since there may be non-dominated solutions which are not Pareto-optimal and hence, Pareto-optimal solutions may be removed from $A_t$ even if non Pareto-optimal solutions are added to $A_t$. 

\section{\PAES Fails on \OMM and \COCZ}

In this section, we demonstrate that \PAES using either one-bit mutation or standard bit mutation performs poorly on other pseudo-Boolean benchmarks such as \OMM or \COCZ. This inefficiency arises from the update mechanism. The current solution is replaced by a candidate solution $s$ whenever $s$ is added to the archive $A_t$ and $A_t$ is not full. Our theoretical results and proofs naturally extend to the many-objective setting if the number of objectives is constant. Recall that \OMM~\cite{Giel2010} is defined as $\OMM(x)=(\sum_{i=1}^n x_i,\sum_{i=1}^n (1-x_i))$, where the first component counts the number of ones in $x$, and the second counts the number of zeros. The function \COCZ~\cite{Laumanns2004} is defined as $\COCZ(x)=(\sum_{i=1}^n x_i,\sum_{i=1}^{n/2} x_i + \sum_{i=n/2+1}^n (1-x_i))$. Here, the first component again counts the number of ones, while the second component counts the number of ones in the first half and the number of zeros in the second half of the bit string.

\begin{theorem}
\label{thm:spreading-Pareto-front-omm}
    Consider \PAES with standard bit mutation or one-bit mutation and archive size at least $n+1$ optimizing \OMM. Let $0 < \alpha \leq 1/2$ be a constant. Then there is a constant $\beta>0$ such that with probability $1-e^{-\Omega(n)}$ the value $\ones{s}$ where $s$ is the current solution does not exit the interval $[n/2-\alpha n,n/2+\alpha n]$ within $2^{\beta n}$ iterations. Hence, with probability $1-e^{-\Omega(n)}$ the fraction of the Pareto front contained in $A_t$ is $o(n)$ after $2^{\Omega(n)}$ iterations. 
\end{theorem}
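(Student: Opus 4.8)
The plan is to apply the negative drift theorem of~\cite{Lengler2020Drift} to the process $Y_t := \ones{s}$ where $s$ is the current solution, using the interval $[n/2-\alpha n, n/2+\alpha n]$ as the region where drift points back toward the center. The crucial structural observation is the same one exploited in Section~3: the current solution $s$ changes only when a candidate $c$ is \emph{accepted}, which on \OMM means either $c$ weakly dominates some archive point (impossible here once $\ones{s}\notin\{0,n\}$ is far from the ends, since all Pareto-optimal fitness vectors $(j,n-j)$ are mutually incomparable and any $c$ with $\ones{c}\neq 0,n$ is weakly dominated by the archive member of the same fitness whenever the archive is full of Pareto-optima, hence rejected), or $c$ is incomparable to every archive member and the archive is not yet full. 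Before the archive fills up, essentially every mutated solution is Pareto-optimal-incomparable and gets accepted; this is the dangerous regime, and it lasts only $O(\poly n)$ steps in expectation, far fewer than $2^{\beta n}$. Once the archive is full of Pareto-optima, the current solution is accepted only when $c$ has the \emph{same} $\ones{}$-value as some archive point, i.e.\ a fitness-neutral move — but since the archive already contains all of $P$ (size $n+1\le L$), no move changes $\ones{s}$ at all after that point. So the real work is bounding how far $\ones{s}$ can drift during the brief initial phase before $A_t=P$, together with a drift argument that shows this phase cannot push $\ones{s}$ out of the interval.

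First I would formalize the two phases. Phase~I runs until $A_t=P$; I would show $\E[\text{length of Phase I}] = O(n^2\log n)$ (standard: covering all $n+1$ fitness levels via incomparable acceptances, each new level reached in $O(n)$ expected steps by a single appropriate bit flip, coupon-collector over $n$ levels), and more importantly that with probability $1-e^{-\Omega(n)}$ the value $\ones{s}$ stays inside a \emph{shrunk} interval, say $[n/2-\alpha n/2, n/2+\alpha n/2]$, throughout Phase~I. For this I would analyze $\ones{s}$ during Phase~I directly: each accepted move changes $\ones{s}$ by a random amount that is $O(1)$ in expectation and has exponential tails (standard bit mutation: $\pm$Binomial-type fluctuations; one-bit mutation: $\pm 1$), and there is a genuine restoring drift because when $\ones{s}>n/2$ a uniformly random flipped bit is more likely to be a one (decreasing $\ones{s}$) than a zero. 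A negative-drift-with-exponential-tails argument (the Kötzing/Lengler version) over the $O(n^2\log n)$ steps of Phase~I gives that a displacement of $\Omega(n)$ has probability $e^{-\Omega(n)}$. Phase~II: once $A_t=P$, $\ones{s}$ is frozen, so it trivially never leaves the interval.

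The main obstacle — and the step I would spend the most care on — is Phase~I, because there the acceptance rule is permissive and the process is a biased random walk over polynomially many steps rather than a frozen process; one must check that the per-step restoring drift really is present for \emph{both} mutation operators and for \emph{every} reachable configuration of the archive (in particular, that acceptance never becomes systematically biased outward). The cleanest route is: condition on the event that $\ones{s}\in[n/4, 3n/4]$ throughout (which holds whp by the drift bound), and on that event note that for standard bit mutation the expected one-step change of $\ones{s}$, conditioned on the move being accepted, is at most $-c$ for a constant $c>0$ when $\ones{s}\ge n/2+\Omega(n)$, because the expected number of flipped ones exceeds the expected number of flipped zeros by $\Theta(\ones{s}/n - 1/2)\cdot\Theta(1)$ and acceptance (incomparability to the archive, archive not full) is essentially unconditional in this regime. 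One-bit mutation is the easy sub-case. Then I would invoke~\cite{Lengler2020Drift} with the shrunk interval to conclude that leaving $[n/2-\alpha n,n/2+\alpha n]$ within $2^{\beta n}$ steps — Phase~I contributions plus the (null) Phase~II contributions — has probability $e^{-\Omega(n)}$.

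Finally, for the last sentence of the theorem: since $\ones{s}$ stays in $[n/2-\alpha n, n/2+\alpha n]$, the current solution only ever visits Pareto-optimal fitness vectors $(j,n-j)$ with $|j-n/2|\le \alpha n$; if the archive size $L$ (although $\ge n+1$) were capped below $|P|=n+1$ one would stop here, but under the stated hypothesis $L\ge n+1$ the archive can in principle hold all of $P$. The correct reading — consistent with the theorem's phrasing ``the fraction of the Pareto front contained in $A_t$ is $o(n)$'' being meant in the bounded-archive spirit of the preceding lemmas — is that we additionally take $L = \Omega(n)$ but $L \le (1-\varepsilon)(n+1)$, or argue about the number of \emph{distinct} Pareto-optima the current solution can feed into the archive: each archive slot not reached by the random walk can only have been filled during Phase~I, and Phase~I inserts $O(\alpha n)$-worth of contiguous levels plus $O(\poly n)$ further insertions all within the confined range; hence at most $2\alpha n + o(n)$ distinct Pareto-optimal fitness values ever enter $A_t$, and in particular $A_t$ misses a constant fraction of $P$ forever, giving the claim with $2^{\Omega(n)}$ in place of the exact exponent.
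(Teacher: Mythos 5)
There is a genuine gap, and it stems from a misreading of the \PAES acceptance rule. You claim that once the archive is full of Pareto optima, a candidate $c$ with the same fitness as an archive member is ``weakly dominated by the archive member of the same fitness, hence rejected,'' and therefore that $\ones{s}$ freezes in your Phase~II. That is the behaviour of the \emph{original} PAES, not of \PAES: in Algorithm~\ref{alg:PAES25} the first branch tests whether $c$ \emph{weakly dominates} any element of $A_t$, and equal fitness counts as weak domination, so such a $c$ is always accepted, replaces the equal-fitness archive point, and becomes the new current solution (this is precisely the modification that the paper emphasizes as enabling progress on plateaus). Consequently, on \OMM every candidate is accepted in \emph{every} iteration forever --- either it is incomparable to all of $A_t$ (and $L\geq n+1$ guarantees room), or it shares a fitness value with some archive point and weakly dominates it --- so $\ones{s}$ never freezes and your Phase~II argument collapses. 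A second, related inconsistency: your Phase~I is defined to end when $A_t=P$ and you claim this happens in $O(n^2\log n)$ steps by coupon collecting, but the theorem you are proving asserts that $A_t$ never covers fitness values outside $[n/2-\alpha n, n/2+\alpha n]$ within $2^{\beta n}$ iterations, so Phase~I as you define it does not terminate with high probability; your final paragraph then tacitly contradicts your earlier coupon-collector claim.

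The fix is easy and in fact collapses your proof to the paper's one-phase argument: since every candidate is accepted in every iteration, $X_t:=\ones{s}$ is simply the number of ones under repeated (always-accepted) mutation, which satisfies $\E(X_t - X_{t+1}\mid X_t) = (2X_t-n)/n \leq -\alpha$ whenever $n/2-\alpha n < X_t < n/2-\alpha n/2$ (and symmetrically above the centre), for both mutation operators. After conditioning on the Chernoff event that the initial point lies in $[n/2-\alpha n/2,\, n/2+\alpha n/2]$, the negative drift theorem applied over all $2^{\beta n}$ iterations (not just a polynomial-length prefix) yields the claim directly; no phase decomposition and no statement about the archive's contents are needed for the first assertion. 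Your closing paragraph, which proposes to weaken the hypothesis to $L\leq(1-\varepsilon)(n+1)$, is not needed either: the second assertion follows (to the extent the paper itself establishes it) because the archive can only ever receive search points whose $\ones{}$-value equals that of some current solution, all of which lie in the confined interval.
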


\begin{proof}
    By a classical Chernoff bound, we have that $n/2-\alpha n/2 \leq s \leq n/2+\alpha n/2$ with probability $1-e^{-\Omega(n)}$ in the first iteration. Suppose that this happens. Note that a new created candidate solution is either non-dominated or has the same fitness value as a solution $x \in A_t$. In both cases it is added to $A_t$. Hence, the current solution is updated in each iteration no matter how many bits in $s$ are flipped. Denote by $X_t$ the number of ones in $s$ in iteration $t$. Then we obtain for $a:=n/2-\alpha n$ and $b:=n/2-\alpha n/2$
    $$\E(X_t-X_{t+1} \mid X_0, \ldots , X_t; a < X_t < b) \leq \frac{b}{n} - \frac{n-b}{n}   = -\frac{n-2b}{n} = -\alpha.$$  
    Hence by the negative drift theorem (see for example Theorem 2.4.9 in~\cite{Lengler2020}) there is a constant $\beta>0$ such that $X_t<a$ within the next $2^{\beta n}$ iterations with probability $e^{-\Omega(n)}$. By changing the roles of ones and zeros we also see that $X_t > n/2+\alpha n$ within $2^{\beta n}$ iterations with probability $e^{-\Omega(n)}$. By a union bound we obtain that $n/2 - \alpha/2 \leq X_t \leq n/2+\alpha n/2$ within the next $2^{\beta n}$ iterations with probability $1-e^{-\Omega(n)}$ which concludes the proof.
\end{proof}

This negative result directly transfers to \ojzjk \, for $2 \leq k \leq n/2-cn$ for a constant $0<c<1/2$ since it coincides with \OMM if $n/2-cn \leq \ones{s} \leq n/2+c n$ up to the additive vector $(k,k)$. We formulate a similar result for \COCZ. For a bit string $x \in \{0,1\}^n$ denote by $x^1$ its first half and by $x^2$ its second if $n$ is divisible by two. 

\begin{theorem}
\label{thm:spreading-Pareto-front-cocz}
    Suppose that $n$ is divisible by two and consider \PAES with standard bit mutation or one-bit mutation and archive size at least $n/2+1$ optimizing \COCZ. Let $0 < \alpha \leq 1/4$ be a constant. Then there is a constant $\beta>0$ such that with probability $1-e^{-\Omega(n)}$ the value $\ones{s^2}$ where $s$ denotes the current solution does not exit the interval $[n/4-\alpha n,n/4+\alpha n]$ within $2^{\beta n}$ iterations. Hence, with probability $1-e^{-\Omega(n)}$ the fraction of the Pareto front contained in $A_t$ is $o(n)$ after $2^{\Omega(n)}$ iterations. 
\end{theorem}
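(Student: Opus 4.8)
The plan is to follow the template of the proof of Theorem~\ref{thm:spreading-Pareto-front-omm}: track the potential $V_t := \ones{s_t^2}$, the number of ones in the second half of the current solution, and show it stays in $[n/4-\alpha n,\,n/4+\alpha n]$ for $2^{\beta n}$ iterations by applying the negative drift theorem (Theorem~2.4.9 in~\cite{Lengler2020}) once to each of the two barriers. The start is handled by a Chernoff bound exactly as there: $\ones{s_0^2}$ is a sum of $n/2$ independent $\mathrm{Bernoulli}(1/2)$ variables, so $V_0 \in [n/4-\alpha n/4,\,n/4+\alpha n/4]$ with probability $1-e^{-\Omega(n)}$. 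The step size $|V_{t+1}-V_t|$ is bounded by the number of second-half bits flipped by the mutation (and is at most $1$ for one-bit mutation), which has exponentially decaying tails, so the tail condition of the negative drift theorem is met; the argument is then uniform for both mutation operators.

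The core of the work is the drift. I would split an iteration according to whether the mutation touches the first half: write $K$ for the number of flipped first-half bits and $D:=\ones{c^2}-\ones{s^2}$ for the net second-half change; these are independent (disjoint blocks of the mutation), $\Pr(K=0)=\Theta(1)$, and $\E(D\mid K=0)$ is a positive multiple of $(n/4-V_t)/n$. On the event $K=0$ the candidate $c$ has $\ones{c^1}=\ones{s^1}$ and differs from $s$ only in the second half, so $c$ is incomparable to $s$ unless $\ones{c^2}=\ones{s^2}$ (which lands in the first branch). If such a $c$ is always accepted, the $K=0$ part contributes $\Pr(K=0)\cdot\Theta((n/4-V_t)/n)$ to $\E(V_{t+1}-V_t\mid\mathcal F_t)$, a positive constant of order $\alpha$ throughout the lower detection band $n/4-\alpha n\le V_t\le n/4-\alpha n/4$ (symmetrically negative above $n/4$) — which is precisely what the negative drift theorem needs. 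It then remains to bound the contribution of iterations with $K\ge 1$: when $s$ is Pareto-optimal one checks that $s\in A_t$ strictly dominates $c$ as soon as $|D|\le K$, so an \emph{accepted} $K\ge1$ iteration forces $|D|>K$; among these, jumps of size $\Omega(\log n)$ need $\Omega(\log n)$ specific bit flips, have probability $n^{-\Omega(\log n)}$ and are killed by a union bound over $2^{\beta n}$ steps, while the remaining bounded jumps are rejected once the archive already contains the Pareto points in a constant-width window around $n/4$ (such a point $x$ with $|\ones{x^2}-\ones{c^2}|\le K$ strictly dominates the non--Pareto-optimal $c$). Finally, since the current solution stays within $\alpha n$ of $n/4$, every Pareto candidate it produces, hence every Pareto point ever added, has second-half count within $\alpha n+O(1)$ of $n/4$; so the archive never reaches size $n/2+1$, is never full, the archiver is never invoked, and the $n/4$-window fills up during an initial phase of $\mathrm{poly}(n)$ length in which $V_t$ moves only $o(n)$.

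The main obstacle — and the real difference from the \OMM analysis, where every candidate is either non-dominated or fitness-equivalent to an archived point and is therefore always accepted — is exactly this control of the acceptance/rejection pattern: on \COCZ a candidate with fewer than $n/2$ first-half ones can be dominated and discarded, and a priori this can bias the second-half random walk (for instance the archive could block ``upward'' moves of $\ones{s^2}$ but not ``downward'' ones while $s$ is not yet Pareto-optimal). Handling it needs both that $s$ becomes and effectively stays Pareto-optimal quickly and the ``locally dense archive around $n/4$'' invariant above, and since that invariant itself has to be propagated together with the statement $V_t\in[n/4-\alpha n,\,n/4+\alpha n]$, the cleanest write-up formulates everything as a single stopping-time/invariant argument rather than as a sequence of independent phases. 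The $o(n)$-fraction conclusion then follows exactly as in Theorem~\ref{thm:spreading-Pareto-front-omm}: every archived point has second-half-ones count within $\alpha n+o(n)$ of $n/4$, so at most $O(\alpha n)$ of the $n/2+1$ Pareto-optimal fitness vectors of \COCZ are ever found, which is a vanishing fraction since $\alpha$ is an arbitrary constant.
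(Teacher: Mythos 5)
Your skeleton (Chernoff bound at initialization, splitting each iteration on whether the mutation touches the first half, negative drift theorem applied to $\ones{s^2}$ at both barriers, and the final counting of Pareto points ever archived) is the paper's, and your treatment of the $K=0$ case is essentially the paper's event $B$. The divergence is in the $K\geq 1$ iterations, and there your argument has a concrete fatal step: you propose to kill accepted jumps of size $\Omega(\log n)$ by a union bound over $2^{\beta n}$ iterations, but such a jump has probability $n^{-\Omega(\log n)}=e^{-\Omega(\log^2 n)}$, and $2^{\beta n}\cdot e^{-\Omega(\log^2 n)}$ diverges for every constant $\beta>0$. A union bound over exponentially many steps only absorbs events of probability $e^{-\Omega(n)}$, i.e.\ jumps involving $\Omega(n/\log n)$ flipped bits (the paper only union-bounds nothing at all here; it folds jumps of $\Omega(n)$ bits, of probability $n^{-\Omega(n)}$, directly into the conditional drift). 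This leaves all moderate jumps unhandled, and your plan for those rests on two invariants you do not establish: that $s$ is Pareto-optimal and remains in $A_t$, and that the archive contains every Pareto point in a window around $n/4$ so that bounded non-Pareto candidates are dominated and rejected. The supporting claim that the archive ``never reaches size $n/2+1$'' is also not justified: you count only Pareto-optimal members, but \COCZ has antichains of $n/2+1$ non-Pareto fitness vectors (fix the first-half ones-count $p<n/2$ and vary the second-half count), so an archive of capacity exactly $n/2+1$ can fill with such points, the archiver can be invoked, and the ``such a $c$ is always accepted'' premise of your $K=0$ drift bound is then in question as well.

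The paper's handling of $K\geq 1$ (its event $\bar B$) needs none of this machinery and in particular no Pareto-optimality of $s$ and no archive invariant. It conditions on the candidate being accepted and pairs the move ``flip $k$ ones and $\ell$ zeros in the second half'' with the reverse move ``flip $\ell$ ones and $k$ zeros'' for $k<\ell$; since below $n/4$ the second half contains more zeros than ones, the move that increases $\ones{s^2}$ is at least as likely, so the conditional drift on $\bar B$ is at most $0$ up to an $n^{-\Omega(n)}\cdot O(n)$ correction for jumps of $\Omega(n)$ bits. The negative drift of order $\alpha$ then comes entirely from the $K=0$ contribution weighted by $\Pr(B)\geq 1/e$. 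If you want to rescue your route you must replace the $\Omega(\log n)$-jump union bound by the step-size condition of the drift theorem and actually prove the two archive invariants (including an initial phase in which $s$ becomes Pareto-optimal while $\ones{s^2}$ moves only $o(n)$); the pairing argument is the far shorter path.
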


\begin{proof}
We can assume that $\alpha<1/4$ since $\ones{s^2}$ never exits the interval $[0,n/2]$. By a classical Chernoff bound, we have that $n/4-\alpha n/2 \leq \ones{s^2} \leq n/4+\alpha n/2$ with probability $1-e^{-\Omega(n)}$. Suppose that this happens. Let $B$ be the event that no bit is flipped in the first half of the bit string during mutation. If $B$ occurs, a new created candidate solution is either non-dominated or has the same fitness value as a solution $x \in A_t$ and we are in the same situation as in the proof of Theorem~\ref{thm:spreading-Pareto-front-omm}. Denote by $X_t$ the number of ones in the second half of the current solution in iteration $t$. Then we obtain for $a:=n/4-\alpha n$ and $b:=n/4-\alpha n/2$ also 
    $$\E(X_t-X_{t+1} \mid X_0, \ldots , X_t; a < X_t < b;B) \leq \frac{b}{n}-\frac{n/2-b}{n} = -\alpha.$$
    Further we obtain 
    $\E(X_t-X_{t+1} \mid X_0, \ldots , X_t; a < X_t < b;\bar{B}) \leq n^{-\Omega(n)}$: If the offspring was not accepted then the latter becomes zero. Suppose that $s$ is accepted by either flipping $k$ ones and $\ell$ zeros or flipping $\ell$ ones and $k$ zeros for $k,\ell \leq n/4-\alpha n$ with $k<\ell$ in the second half. The probability for the first event is larger than the second and hence, conditioned on the union of both events yields 
    $\E(X_t-X_{t+1} \mid X_0, \ldots , X_t; a < X_t < b;\bar{B}) \leq 0$.
    Hence, by the law of total probability, $\E(X_t-X_{t+1} \mid X_0, \ldots , X_t; a < X_t < b;\bar{B}) \leq 0$ if at most $n/4-\alpha n$ many ones and zeros are flipped. Note that the probability to flip more than $n/4-\alpha n$ ones or zeros is $n^{-\Omega(n)}$ (since $n/4-\alpha n = \Omega(n)$) which can increase $X_t$ by at most $n/2-a=n/4+\alpha n/2 = O(n)$. Hence, in total again by the law of total probability    
    $$\E(X_t-X_{t+1} \mid X_0, \ldots , X_t; a < X_t < b;\bar{B}) \leq n^{-\Omega(n)}.$$
    Again, applying the law of total probability yields due to $\Pr(B) = (1-1/n)^{n/2} \geq 1/e$
    \begin{align*}
    & \E(X_t-X_{t+1} \mid X_0, \ldots , X_t; a < X_t < b) \\
    &\leq \E(X_t-X_{t+1} \mid X_0, \ldots , X_t; a < X_t < b;B) \Pr(B) + n^{-\Omega(n)} \leq -\frac{\alpha}{2e}
    \end{align*}
    for $n$ sufficiently large. Hence, again by the negative drift theorem, there is a constant $\beta>0$ such that $X_t<a$ within the next $2^{\beta n}$ iterations with probability $e^{-\Omega(n)}$. By changing the roles of ones and zeros we also see that $X_t > n/4+\alpha n/2$ within $2^{\beta n}$ iterations with probability $e^{-\Omega(n)}$. By a union bound we obtain that $n/4 - \alpha n/2 \leq X_t \leq n/4+\alpha n/2$ within the next $2^{\beta n}$ iterations with probability $1-e^{-\Omega(n)}$ which concludes the proof.
\end{proof}

\section{Conclusions}
In this paper, we rigorously analyzed the performance of \PAES on the classical benchmark functions \mLOTZ, \COCZ, and \OMM. Notably, \PAES with one-bit mutation performs surprisingly well on \mLOTZ, and we were able to derive tight runtime bounds in the many-objective setting. To the best of our knowledge, these are the first tight bounds established for an MOEA on a classical benchmark problem. Further, our analysis reveals new insights into the behavior of local search-based algorithms on classical pseudo-Boolean benchmarks. In particular, we showed that the time of \PAES with one-bit mutation finding the Pareto front of \mLOTZ is, up to a factor of $n/m$, asymptotically the same as the cover time of a simple random walk on an $m/2$-dimensional grid graph with $(2n/m)^{m/2}$ nodes, where the latter is well known. Further, \PAES with one-bit mutation achieves a good distribution of the archive across the Pareto front when the Adaptive Grid Archiver (AGA) is used, even when the archive size is smaller than the size of the Pareto front. In the bi-objective setting, good distributions for small archive sizes are also achieved when \PAES uses the Hypervolume based Archiver (HVA) or the Multi-level Grid Archiver (MGA). A further contribution is that we derived upper runtime bounds for \PAES on \LOTZ for standard bit mutation. A major challenge in this analysis is that the current solution does not necessarily remain on the Pareto front. We also demonstrated that \PAES has still some limitations. It does not perform well on \COCZ and \OMM, since it covers only a sublinear fraction of the Pareto front in $e^{\Omega(n)}$ iterations with probaility $1-e^{\Omega(n)}$.

We hope that the techniques developed mark a significant step forward in understanding the behavior of \PAES, while also shedding light on its strengths and limitations. We are confident that our analysis carries over to other MOEAs with similar working principles as the PAES-25 like only mutating a current solution instead of choosing a parent uniformly at random. To deepen this understanding, future work could focus on more variants of \PAES which follow a $(\mu+\lambda)$ or $(\mu,\lambda)$ scheme to possibly overcome the difficulties with \OMM or \COCZ. Another research direction is to extend our analysis to more complex benchmark problems, such as FRITZ, LITZ or sLITZ proposed in~\cite{Knowles2025}, where \PAES has also been shown to perform empirically well. We also hope that the gained deeper theoretical insights into the dynamic of \PAES will provide valuable guidance for practitioners, helping to develop refined versions of the algorithm, especially for problems with non-flat fitness landscapes or even those featuring many local optima.

\balance
\bibliographystyle{ACM-Reference-Format}
\bibliography{references,alles_ea_master,ich_master,aaai25}

\end{document}